\def\vx{{\mathbf{x}}}
\def\vy{{\mathbf{y}}}
\def\vz{{\mathbf{z}}}
\def\vu{{\mathbf{u}}}
\def\vv{{\mathbf{v}}}
\def\vp{{\mathbf{p}}}
\def\vA{{\mathbf{A}}}
\def\vB{{\mathbf{B}}}
\def\vH{{\mathbf{H}}}
\def\vI{{\mathbf{I}}}
\def\vX{{\mathbf{X}}}
\def\vY{{\mathbf{Y}}}
\def\mA{{\mathbf{A}}}
\def\mU{{\mathbf{U}}}
\def\mG{{\mathbf{G}}}
\def\vmu{{\boldsymbol{\mu}}}
\def\vomega{{\mathbf{\omega}}}
\newcommand{\mD}{\mathbf{D}}
\def\mLambda{{\mathbf{\Lambda}}}
\newtheorem{innerthm}{Theorem}
\def\R{{\mathbb{R}}}
\def\vomega{{\boldsymbol{\omega}}}
\def\dvom{{\dot{\boldsymbol{\omega}}}}
\newtheorem{corollary}{Corollary}
\newtheorem{lemma}{Lemma}
\newtheorem{theorem}{Theorem}
\theoremstyle{definition}
\newtheorem{definition}{Definition}[section]
\newtheorem*{lemmarep}{Lemma}
\newtheorem*{theoremrep}{Theorem}
\definecolor{DarkGreen}{rgb}{0.1,0.5,0.1}
\definecolor{DarkRed}{rgb}{0.5,0.1,0.1}
\definecolor{DarkBlue}{rgb}{0.1,0.1,0.5}
\begin{document}
\newtheorem{proposition}{Proposition}
\twocolumn[

\aistatstitle{Frequency-Based Hyperparameter Selection in Games}

\aistatsauthor{ Aniket Sanyal \And Baraah A.M. Sidahmed\footnotemark[1] \And  Rebekka Burkholz\footnotemark[1] \And Tatjana Chavdarova }

\aistatsaddress{\hspace{-60pt}TU Munich\And  $^{1}$CISPA Helmholtz Center for Information Security \And \hspace{50pt}TU Wien } ]
\begin{abstract}

Learning in smooth games fundamentally differs from standard minimization due to rotational dynamics, which invalidate classical hyperparameter tuning strategies. Despite their practical importance, effective methods for tuning in games remain underexplored. A notable example is LookAhead (LA), which achieves strong empirical performance but introduces additional parameters that critically influence performance. We propose a principled approach to hyperparameter selection in games by leveraging frequency estimation of oscillatory dynamics. Specifically, we analyze oscillations both in continuous-time trajectories and through the spectrum of the discrete dynamics in the associated frequency-based space. Building on this analysis, we introduce \emph{Modal LookAhead (MoLA)}, an extension of LA that selects the hyperparameters adaptively to a given problem. We provide convergence guarantees and demonstrate in experiments that MoLA accelerates training in both purely rotational games and mixed regimes, all with minimal computational overhead.
\end{abstract}

\section{Introduction}\label{sec:intro}
Saddle-point optimization, and more generally \emph{Variational Inequalities (VIs)}~\citep{stampacchia1964formes,facchinei2003finite}, play a central role in modern machine learning. They underpin applications such as generative adversarial networks (GANs)~\citep{article}, adversarial training~\citep{goodfellow2015explainingharnessingadversarialexamples,madry2019deeplearningmodelsresistant}, and multi-agent reinforcement learning~\citep{shapley1953,littman1994,bertsekas2021rollout}, all of which can be viewed as multiplayer games seeking a Nash equilibrium.

However, classical optimizers like \emph{Gradient Descent (GD)} are ill-suited for games. Unlike minimization problems, where gradients point directly toward the solution, games exhibit \emph{rotational dynamics}: updates cycle around the equilibrium rather than converging to it. As a result, GD and similar methods diverge in these settings. While specialized algorithms~\citep[\textit{e.g.}][]{balduzzi2018mechanicsnplayerdifferentiablegames} can handle purely rotational games, they often perform poorly in settings that also contain a potential (minimization-like) component.
\emph{LookAhead (LA)}~\citep{zhang2019LookAheadoptimizerksteps} offers a simple and efficient alternative. Originally proposed as a wrapper for minimization, it was later adapted to games~\citep{chavdarova2021lamm}: by averaging the trajectory of a base optimizer over $k$ steps, LA can stabilize otherwise divergent dynamics such as GD, at essentially no extra cost. LA has achieved strong empirical results, including state-of-the-art performance in GAN training~\citep{chavdarova2021lamm} and recent use in competitive reinforcement learning~\citep{sidahmed2025}. Despite this practical success, LA still lacks convergence guarantees in fundamental settings such as monotone and Lipschitz VIs (see Section~\ref{sec:prelim}).

LA introduces two hyperparameters: the averaging frequency $k$ and the interpolation weight $\alpha\in(0,1)$. While many $(k,\alpha)$ choices outperform the base optimizer, finding the optimal combination remains difficult. Existing approaches typically tune one parameter at a time~\citep{ha2022LookAhead}, which only works in simple cases. Two central questions remain open:
\begin{center}
\emph{
(1) Does LookAhead converge for monotone and Lipschitz operators?  \\
(2) How should we select $(k,\alpha)$ to guarantee convergence and ideally speed it up?
}
\end{center}

Our work addresses these questions from a new angle. The key to our approach lies in that game dynamics are rotational because the Jacobian of the updates is generally non-symmetric. A simple example is the Bilinear Game $\min_x \max_y xy$, where the iterates of $(x,y)$ trace cycles around the equilibrium (the origin) instead of converging directly. In particular, such rotations 
\begin{center}
\vspace{-.8em}
    \emph{suggest a natural frequency-domain interpretation}. 
\end{center}
\vspace{-.8em}
Indeed, beyond studying iterates in time, one can analyze them in a \emph{dual space} where oscillations are represented by amplitude and phase. In the bilinear case, $x$ and $y$ each follow sinusoidal trajectories with a fixed phase shift. More generally, by applying the $z$-transform, the discrete-time dynamics can be decomposed into exponential modes. Each point in the $z$-plane corresponds to a possible exponential trajectory: the radius indicates growth or decay, and the angle corresponds to oscillation frequency. Poles in this space reveal the modes that the system supports.

This \emph{modal perspective} allows us to reason directly about the stability and convergence of LookAhead. We also show that it complements continuous-time, \emph{pole}-based analyses, which study stability by examining the location of poles (roots of the transfer function) in the frequency domain; see Section~\ref{sec:laplace-based_analysis}. Unlike our considered pole-based approach, modal analysis is exact at the algorithmic level: it avoids discretization errors and provides computable, finite-step admissibility regions for $(k,\alpha,\gamma)$, where $\gamma$ is the step size of the base optimizer in LA. In this work, we develop the modal framework, prove new convergence guarantees, and propose principled hyperparameter selection rules tailored to the rotational structure of a given game.

\paragraph{Contributions.}
\begin{enumerate}[leftmargin=*,itemsep=0em,topsep=0em]
    \item We introduce a novel framework for analyzing and selecting hyperparameters in game dynamics, motivated by the rotational (oscillatory) nature of such problems. This viewpoint is orthogonal to classical approaches developed for minimization.
    \item Building on this framework, we obtain two main algorithmic contributions:
    \begin{enumerate}
        \item We provide the first convergence guarantee for standard LookAhead (with fixed $k$ and $\alpha$) applied to the gap function of the average iterate, matching the known lower bound.
        \item We propose a hyperparameter selection principle based on maximizing system stability. This yields a new variant, \emph{Modal LookAhead} (MoLA), which selects $k$ and $\alpha$ once at initialization and keeps them fixed throughout.
    \end{enumerate}
    \item Through numerical experiments on bilinear and strongly convex--strongly concave games, we demonstrate that MoLA significantly outperforms other variational inequality methods, including LookAhead tuned over a large random hyperparameter search.
\end{enumerate}

\noindent\textbf{Additional insights.} 
Along the way, we also: 
\begin{itemize}[leftmargin=*,itemsep=0em,topsep=0em]
    \item clarify the role of hyperparameter selection in games via a modal analysis perspective, which connects naturally to classical operator-theoretic contractiveness. For certain classes, the modal view coincides with the spectral characterization of contractive operators (see Section~\ref{sec:mode}).
    \item relate modal stability analysis in the frequency domain (for tuning $k,\alpha$) to continuous-time dynamics using high-resolution differential equations (HRDEs). For Bilinear Games, this recovers exactly the conclusion of the discrete-time analysis.
    \item derive the $\mathcal{O}(\gamma)$-HRDE of LookAhead for general $(k,\alpha)$, extending prior work \citep{chavdarova2023hrdes} which considered only $k \in \{2,3\}$.
\end{itemize}

\paragraph{Related works.}
LookAhead (LA) was first introduced in minimization tasks, where it improved test accuracy~\citep{zhang2019LookAheadoptimizerksteps}. Later, it was extended to games, where its averaging mechanism was shown to effectively dampen rotational dynamics: in particular, it enables gradient descent, which would otherwise diverge, to converge in Bilinear Games~\citep{chavdarova2021lamm}.~\citet{chavdarova2021lamm} also showed how LA manipulates the spectral radius of the Jacobian of the base optimizer, and proved its convergence provided the latter converges.
Building on this, \citet{ha2022LookAhead} proved its local convergence in smooth games and established global convergence for Bilinear Games. \citet{chavdarova2023hrdes} observed that standard ordinary differential equation (ODE) derivations of VI methods collapse to the same form, and proposed refined continuous-time models of LA with $k=2,3$, termed \emph{high-resolution differential equations} (HRDEs). 
More recently, \citet{sidahmed2025} also used LA for multi-agent reinforcement learning. Conceptually, LA in games is also related to the classical \emph{Halpern Iteration}, or broadly \emph{anchoring} techniques~\citep{halpern1967,diakonikolas2020halpern,ryu2019ode}, which apply a similar averaging, but always anchor one point to the initial one.

Closest to our approach,
\citet{Anagnostides2021FrequencyDomainRO} introduced a frequency-domain framework to analyze the convergence of algorithms that incorporate historical information, such as past gradients. They employed a bilateral $Z$-transform to study the convergence of algorithms such as the \emph{Optimistic Gradient Descent}~\citep{popov1980}. It simplifies convergence analysis and establishes linear convergence under the co-coercive and monotone operators. 
Similar to their approach, which ensures convergence by restricting the poles to lie within the unit circle in the $Z$-plane, part of our work (Section~\ref{sec:laplace-based_analysis}) imposes conditions on the poles in the complex frequency domain instead.

\section{Preliminaries}\label{sec:prelim}

\noindent\textbf{Notation.}
Scalars are denoted by lowercase letters, vectors and matrices by boldface. 
The convolution operator is written as $*$ (see Appendix~\ref{app:background}). The complex numbers are written as $a + i\,b$, where $i  =\sqrt{-1}$.
We introduce only the essential definitions here; further details appear in Appendix~\ref{app:background}.

\begin{definition}[Variational Inequality (VI)]
Let $D \subseteq \mathbb{R}^d$ be nonempty, closed, and convex, and let $F\colon D \to \mathbb{R}^d$.
The \emph{variational inequality problem} $\mathrm{VI}(F,D)$ is to find $\vz^\star \in D$ such that
\vspace{-.5em}
\begin{equation}\label{eq:VI} \tag{VI}
  \langle F(\vz^\star),\, \vz - \vz^\star \rangle \;\ge\; 0 
  \qquad \text{for all } \vz \in D .
\end{equation}
\end{definition}

\begin{definition}[Monotone Operator]
Mapping $F\colon ( \mathbb{R}^d \supseteq D) \to \mathbb{R}^d$ 
is \emph{monotone} on $D$ if
\vspace{-.5em}
\begin{equation}\label{eq:monotone}\tag{Mo}
  \langle F(\vz) - F(\vz'),\, \vz - \vz' \rangle \;\ge\; 0 
  \qquad \text{for all } \vz, \vz' \in D \,.
\end{equation}
\end{definition}

\begin{definition}[Lipschitz Operator] Mapping $F\colon ( \mathbb{R}^d \supseteq D) \to \mathbb{R}^d$ is \emph{$L$-Lipschitz} on $D$, with $L \ge 0$ if
\vspace{-.5em}
\begin{equation}\label{eq:lipschitz}\tag{Lip}
  \|F(\vz) - F(\vz')\| \le L \,\|\vz - \vz'\| 
  \qquad \text{for all } \vz,\vz' {\in} D \,.
\end{equation}
\vspace{-.5em}
\end{definition}

\begin{definition}[Nonexpansive operator]
A mapping $F\colon ( \mathbb{R}^d \supseteq D) \to \mathbb{R}^d$ is \emph{nonexpansive} on $D$ if
\vspace{-.5em}
\begin{equation}\label{eq:nonexpansive} \tag{NExp}
    \|F(\vz)-F(\vz')\| \;\le\; \|\vz-\vz'\| \qquad \forall\,\vz,\vz' \in \mathbb{R}^d.
\end{equation}
\vspace{-.5em}
Equivalently, its Lipschitz constant satisfies $L\leq1$:
\vspace{-.3em}
\begin{equation}\label{eq:nonexpansive_lip}
    \sup_{\vz\neq \vz'}\frac{\|F(\vz)-F(\vz')\|}{\|\vz-\vz'\|} \;\le\; 1 \,.
\end{equation}
\vspace{-.5em}
\end{definition}

\begin{definition}[Fejér monotonicity~\citep{bauschke2011convex}]
Let $D \subseteq \mathbb{R}^d$ be nonempty. Sequence $(\vz_t)_{t\ge 0}\subset \mathbb{R}^d$ is \emph{Fejér monotone (w.r.t $D$)} if
\vspace{-.5em}
\begin{equation}\label{eq:fejer} \tag{FM}
    \|\vz_{t+1}-\vz'\| \;\le\; \|\vz_t - \vz'\| 
    \qquad \forall\, \vz' \in D,\ \forall\, t \ge 0 .
\end{equation}
\vspace{-.8em}
\end{definition}
\vspace{-.5em}
\begin{definition}[Saddle point]
A point $(\vx^\star, \vy^\star) \in D_x \times D_y$ is called a \emph{saddle point} of a convex--concave function $f \colon D_x \times D_y \to \mathbb{R}$ if 
\vspace{-.5em}
\begin{equation}
    f(\vx^\star, \vy) \;\le\; f(\vx^\star, \vy^\star) \;\le\; f(\vx, \vy^\star),
    \quad \forall \, \vx \in D_x, \; \vy \in D_y \,.
\vspace{-.5em}
\end{equation}
\end{definition}

\begin{definition}[Primal--dual gap of average iterate]
\label{def:pd-gap}
Let $(x^\star,y^\star)$ be a saddle point of $f\colon D_x \times D_y \to \mathbb{R}$. 
Given a sequence $\{(\vx^t,\vy^t)\}_{t=1}^k$, define the \emph{average iterate} as
\vspace{-.5em}
\begin{equation}
    \bar{\vx}^k = \tfrac{1}{k}\sum_{t=1}^k \vx^t \,, 
\qquad
\bar{\vy}^k = \tfrac{1}{k}\sum_{t=1}^k \vy^t \,.
\vspace{-.5em}
\end{equation}
The \emph{primal--dual gap} at $(\bar{\vx}^k,\bar{\vy}^k)$ is
\vspace{-.5em}
\begin{equation}\label{eq:pd_gap} \tag{$\mathcal{G}$}
    \mathcal{G}(\bar{\vx}^k,\bar{\vy}^k) 
    \;\equiv\; 
    \max_{\vy \in D_y} f(\bar{\vx}^k, \vy)
    \;-\;
    \min_{\vx \in D_x} f(\vx, \bar{\vy}^k) \,.
    \vspace{-.5em}
\end{equation}

\end{definition}
\begin{definition}[Real Schur decomposition]
\label{def:schur}
For any real square matrix $\mA \in \mathbb{R}^{n \times n}$, there exists an orthogonal matrix $\mathbf{Q} \in \mathbb{R}^{n \times n}$ such that
\[
    \mA \;=\; \mathbf{Q} \mathbf{T}  \mathbf{Q} ^\top,
\]
where $\mathbf{T}$ is a real block upper triangular matrix with $1\times 1$ blocks (real eigenvalues) and $2\times 2$ blocks (complex conjugate pairs). This factorization is called the \emph{real Schur decomposition}.
\end{definition}

We consider VIs defined by an operator $F \colon D \to \mathbb{R}^d$ that is monotone~\eqref{eq:monotone}, Lipschitz continuous~\eqref{eq:lipschitz}, and continuously differentiable ($C^1$), and show rates of \eqref{eq:pd_gap}.  
We refer to this class as \emph{$C^1$ monotone Lipschitz problems}.

We also consider the classical Bilinear Game (BG):
\vspace{-.5em}
\begin{equation}
    \tag{BG}
    \min_{\substack{ \vx \in D_x }} \quad \max_{\substack{\vy \in D_y}} \quad \vx^{\intercal} \mA \vy \,,
\label{eq:bilinear_game}
\end{equation}
\vspace{-.5em}
with $D_x\subseteq \R^{d_x}, D_y\subseteq \R^{d_y}$  $\mA\in \R^{d_x\times d_y}\,.$ 

For \ref{eq:bilinear_game}, we can compactly write the operator as $F(\vx,\vy) \equiv 
(\nabla_x (\vx^{\intercal} A \vy), \ \nabla_y (-\vx ^{\intercal} A\vy) )^\intercal = (A\vy, -A\vx)^\intercal \,.$

\paragraph{Optimization methods.}
\emph{Gradient descent} with step size $\gamma \in [0,1]$  for VIs is as follows:
\begin{equation} \tag{GD}\label{eq:gd}
    \vz_{t+1} = \vz_t - \gamma F(\vz_t) \,. 
\end{equation}
The \emph{LookAhead} algorithm~\citep{zhang2019LookAheadoptimizerksteps,chavdarova2021lamm} computes its update as a point on the line between the current iterate ($\vz_t$) and the iterate obtained after taking $k \geq 1$ steps with the base optimiser ($\tilde \vz_{t+k}$): 
\vspace{-.3em}
\begin{equation}
\tag{LA}
\vz_{t+1} \leftarrow \vz_t + \alpha (\tilde  \vz_{t+k} -  \vz_t), \quad \alpha \in [0,1]  
\,. \label{eq:LookAhead}
\vspace{-.3em}
\end{equation}
\noindent
In addition to the step size of the base optimizer (herein \ref{eq:gd}), LookAhead has two hyperparameters: 
\begin{itemize}[leftmargin=*,itemsep=0em,topsep=0em]
\item $k$---the number of steps of the base optimiser used to obtain the prediction $\smash{\tilde x_{t+k}}$; and
\item $\alpha$---averaging parameter that controls the step toward the predicted iterate  $\smash{\tilde x}$. The larger the closest, and $\alpha=1$~ \eqref{eq:LookAhead} is equivalent to running the base algorithm only (has no impact). 
\end{itemize}
Additional methods are deferred to Appendix~\ref{app:background}.

\paragraph{Laplace transform.}
The Laplace transform maps a time-domain function $\vx(t)$ to a complex frequency-domain function:  
\vspace{-.5em}
\begin{equation}
\label{eq:Laplace}\tag{LT}
\mathcal{L}\{\vx(t)\} = \mathbf{X}(s) = \int_{0}^{\infty} \vx(t) e^{-st} \, dt ,
\vspace{-.5em}
\end{equation}
where $s = \sigma + i\omega$ with $\sigma$ the exponential decay factor and $\omega$ the frequency.  
Unlike the Fourier transform, it accounts for growth/decay, making it suitable for analyzing unstable systems. It also converts differential equations into algebraic ones, which can be inverted via the \emph{inverse Laplace transform} (see App.~\ref{app:background}).

\paragraph{$Z$-transform.}
The $Z$-transform maps a discrete-time sequence $\vx[n]$ to a complex function:  
\vspace{-.5em}
\begin{equation}
\label{eq:Ztransform}\tag{ZT}
\mathcal{Z}\{\vx[n]\} = \mathbf{X}(\vz) = \sum_{n=0}^{\infty} \vx[n] \, \vz^{-n},
\vspace{-.5em}
\end{equation}
where $\vz \in \mathbb{C}$. The radius $|\vz|$ encodes growth or decay, and the angle $\arg(\vz)$ corresponds to oscillation frequency.  
As the discrete analogue of the Laplace transform, the $Z$-transform turns difference equations into algebraic ones, providing a natural tool for analyzing stability and dynamics of iterative algorithms. The sequence can be recovered via the \emph{inverse $Z$-transform}.

\paragraph{Modes.} 
\label{sec:mode}
The eigenvalues of the Jacobian of the discrete dynamics determine the rotational behavior in games.  
In the frequency domain, these eigenvalues are referred to as \emph{modes}. Since rotations are easier to analyze in this space, we use this terminology throughout.

Consider the linear iteration with update matrix $\mathbf{G} \in \mathbb{C}^{d \times d}$:
\vspace{-.5em}
\begin{equation}\label{eq:lti}
    \mathbf{x}_{t+1} = \mathbf{G}\,\mathbf{x}_t \,.
    \vspace{-.5em}
\end{equation}
A \emph{mode} is a complex number $\boldsymbol{\mu}$ with nonzero vector $\mathbf{v}$ such that
\vspace{-.5em}
\begin{equation}\label{eq:mode_def}
    \mathbf{G}\,\mathbf{v} = \boldsymbol{\mu}\,\mathbf{v}\,.
    \vspace{-.4em}
\end{equation}
Thus, the modes of \eqref{eq:lti} are the eigenvalues $\{\boldsymbol{\mu}_i\}$ of $\mathbf{G}$, with eigenvectors as mode shapes.  
Stability requires
\vspace{-.5em}
\begin{equation}\label{eq:stability}
    \rho(\mathbf{G}) \;\equiv\; \max_i |\boldsymbol{\mu}_i| \;<\; 1 \,.
    \vspace{-.5em}
\end{equation}

If the problem linearizes to operator $F$ with Jacobian $JF$, and LA (or extrapolation) induces
\vspace{-.5em}
\begin{equation}\label{eq:LookAhead_poly}
    \mathbf{G} = q(\mathbf{F}),
    \vspace{-.5em}
\end{equation}
for polynomial $q$, then the $z$-transform shows that poles (modes) are the eigenvalues of $G$.  
By spectral mapping, $\boldsymbol{\mu}_i = q(\boldsymbol{\lambda}_i)$, where $\{\boldsymbol{\lambda}_i\}$ are the eigenvalues of $JF$.  
In other words, LA maps each eigenvalue through $q$, effectively stretching/warping $\boldsymbol{\lambda}_i$.

\noindent\textbf{Modal vs. spectral.}  
When $\mathbf{G}$ (or $\mathbf{M}$) is diagonalizable over $\mathbb{C}$, modal and spectral spaces coincide.  
In particular, if $\mathbf{M}$ is normal ($\mathbf{M}\mathbf{M}^\ast=\mathbf{M}^\ast\mathbf{M}$), it admits an orthonormal eigendecomposition ($\mathbf{M}=\mathbf{U}\Lambda\mathbf{U}^\ast$).  
In such cases, each mode aligns with an eigenvector.  
Throughout, we assume $JF(\cdot)$ is full rank and normal, so modal and spectral decompositions coincide.

\section{Modal Stability Analysis: LA for Monotone Games}\label{sec:modal}

In this section, we show that variants of the \ref{eq:bilinear_game} contribute most towards the expansivity of the LookAhead (LA) operator $F^{\mathrm{LA}}$ for the class of $C^1$ $L$-smooth convex concave problems. 
Establishing nonexpansiveness (\ref{eq:nonexpansive}) of the LA operator ensures that its iterates are Fejér monotone (\ref{eq:fejer}) with respect to the solution set, 
which is the key ingredient in the convergence proof.

\begin{lemma}
\label{lem:bilinear-reduction}
Let $F\colon D\to\mathbb{R}^d$ be a $C^1$, monotone, and $L$-Lipschitz (\ref{eq:monotone}, \ref{eq:lipschitz}) operator. For a fixed $k\ge2$, $\alpha\in(0,1)$, $\gamma>0$, and any $\vu\neq \vv, \quad \vu, \vv \in \mathbb{R}^d $, we define the averaged Jacobian
$\vH(\vu,\vv)=\!\int_0^1 JF\!\big(\vv+\tau(\vu-\vv)\big)\,d\tau$, where $JF\colon D \to \mathbb{R}^{d \times d}$ is the Jacobian of the operator $F$.
Then, the supremum of $\big\|F^{\mathrm{LA}}_{k,\alpha}\big\|$ is attained at $2\times2$ skew blocks $H=\omega J$ with $J=\begin{psmallmatrix}0&1\\-1&0\end{psmallmatrix}$, where $\big\|F^{\mathrm{LA}}_{k,\alpha}\big\|$ is the worst-case Lipschitz constant of one LA step. Hence, nonexpansiveness for the $C^1$ monotone $L$-Lipschitz class is equivalent to checking the family $\{\omega \, JF:\ |\omega|\le L\}$.
\end{lemma}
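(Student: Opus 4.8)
The plan is to bound the worst-case Lipschitz constant of a single LookAhead step over the whole $C^1$ monotone $L$-Lipschitz class by reducing it, in three stages, to the two-by-two rotational family. \textbf{Stage 1 (linearize along the inner trajectory).} With $T=\vI-\gamma F$ the base gradient-descent map, one LA step is $\vz\mapsto(1-\alpha)\vz+\alpha\,T^{\,k}(\vz)$. For $\vu\neq\vv$, run the two inner trajectories $\vv_0=\vv$, $\vv_j=T(\vv_{j-1})$ and $\vu_0=\vu$, $\vu_j=T(\vu_{j-1})$; the integral mean value theorem on the $j$-th segment gives $\vu_j-\vv_j=(\vI-\gamma \vH_j)(\vu_{j-1}-\vv_{j-1})$ with $\vH_j=\int_0^1 JF\big(\vv_{j-1}+\tau(\vu_{j-1}-\vv_{j-1})\big)\,d\tau$, so telescoping yields $F^{\mathrm{LA}}_{k,\alpha}(\vu)-F^{\mathrm{LA}}_{k,\alpha}(\vv)=\big((1-\alpha)\vI+\alpha(\vI-\gamma \vH_k)\cdots(\vI-\gamma \vH_1)\big)(\vu-\vv)$. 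Because $JF+JF^\top\succeq0$ and $\opnorm{JF}\le L$ pointwise, each $\vH_j$ is monotone with $\opnorm{\vH_j}\le L$; this reduces the worst-case constant to the supremum of $\opnorm{(1-\alpha)\vI+\alpha(\vI-\gamma \vH_k)\cdots(\vI-\gamma \vH_1)}$ over all such tuples --- a finite-dimensional question in which the $2\times2$ skew case will turn out to be extremal.

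\textbf{Stage 2 (the potential part only damps).} Split $\vH_j=\vS_j+\vomega_j$ with $\vS_j=\tfrac12(\vH_j+\vH_j^\top)\succeq0$ and $\vomega_j$ skew, $\opnorm{\vomega_j}\le L$. I would show the supremum is not decreased by zeroing every $\vS_j$. The nucleus is the single-factor estimate $\opnorm{\vI-\gamma \vH}^2=\lambda_{\max}\!\big(\vI-2\gamma\vS+\gamma^2 \vH^\top \vH\big)\le\lambda_{\max}\!\big(\vI+\gamma^2 \vH^\top \vH\big)\le1+\gamma^2\opnorm{\vH}^2$, in which the monotone term $-2\gamma\vS\preceq0$ is the only obstruction to equality, so removing it and then enlarging the skew part to norm $L$ can only help. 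Lifting this comparison from a single factor to the ordered product $(\vI-\gamma \vH_k)\cdots(\vI-\gamma \vH_1)$, and hence to the interpolated matrix, is the step I expect to be the main obstacle, since expanding $\vH_j^\top \vH_j$ throws up indefinite cross-commutators $\vS_j\vomega_j-\vomega_j\vS_j$. My intended route is to put each factor into real Schur form --- an orthogonal change of basis preserving $\opnorm{\cdot}$ --- and propagate the estimate block-by-block up the triangle, using the standing assumption that $JF$ is normal so that the relevant spectrum lies in the quarter-disk $\{\operatorname{Re}\lambda\ge0,\ |\lambda|\le L\}$.

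\textbf{Stage 3 (reduce to $2\times2$ rotation blocks).} Each surviving $\vomega_j$ is skew, hence orthogonally conjugate to a direct sum of $2\times2$ rotation blocks $\omega J$ with $|\omega|\le L$ (plus zero blocks), and a short argument confines the extremal configuration to a single such invariant plane, leaving the $2\times2$ case. There the blocks commute, so under the identification $J\leftrightarrow i$ the product $(\vI_2-\gamma\omega_k J)\cdots(\vI_2-\gamma\omega_1 J)$ is multiplication by $\prod_j(1-i\gamma\omega_j)$, whence $\opnorm{(1-\alpha)\vI_2+\alpha(\vI_2-\gamma\omega_k J)\cdots(\vI_2-\gamma\omega_1 J)}=\big|(1-\alpha)+\alpha\prod_j(1-i\gamma\omega_j)\big|$; maximizing over $|\omega_j|\le L$ gives the worst-case constant and displays it as attained at skew blocks $\omega J$ (for the linear game $F(\vz)=\omega J\vz$ this value is $\big|(1-\alpha)+\alpha(1-i\gamma\omega)^k\big|$). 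Hence nonexpansiveness of $F^{\mathrm{LA}}_{k,\alpha}$ over the $C^1$ monotone $L$-Lipschitz class is equivalent to checking the family $\{\omega J:\ |\omega|\le L\}$, as asserted. (For $k$ even a shortcut bypasses Stages 2--3: the blunt bound $\opnorm{\,\cdot\,}\le(1-\alpha)+\alpha(1+\gamma^2L^2)^{k/2}$ --- triangle inequality plus the per-factor estimate --- is already attained by the skew tuple with $\omega_j$ alternating between $+L$ and $-L$.)
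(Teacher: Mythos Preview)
Your Stage~1 linearization is more careful than the paper's, but by passing to a product of $k$ independent averaged Jacobians you have over-relaxed the problem and made Stage~2 unprovable. The paper writes the LA difference as $\big[(1-\alpha)\vI+\alpha(\vI-\gamma\vH)^{k}\big]\Delta$ with the \emph{single} matrix $\vH=\vH(\vu,\vv)$ that the lemma itself introduces, and then maximizes $\opnorm{P(\vH)}$ over all admissible $\vH$ (meaning $\operatorname{sym}\vH\succeq0$, $\opnorm{\vH}\le L$). That single-$\vH$ formula is exact for linear $F$, and every admissible $\vH$ is the constant Jacobian of a linear map in the class, so the single-matrix supremum is the quantity the lemma is about. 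Your tuple supremum is strictly larger, and your own even-$k$ shortcut is the witness: alternating $\omega_j=(-1)^jL$ gives $\prod_j(1-i\gamma\omega_j)=(1+\gamma^2L^2)^{k/2}$, a positive real, so the interpolated norm equals $(1-\alpha)+\alpha(1+\gamma^2L^2)^{k/2}$, which \emph{strictly exceeds} the single-skew-block value $\bigl|(1-\alpha)+\alpha(1-i\gamma L)^k\bigr|$ whenever $(1-i\gamma L)^k$ is not a nonnegative real. Hence the obstacle you flagged in Stage~2 is not a technicality --- carried through in your framing it would contradict the lemma's conclusion. (The alternating tuple is also not realizable by any $C^1$ map in $\mathbb{R}^2$: writing $JF=\omega(\vz)J$ and equating mixed partials forces $\omega$ to be constant, so the averaged skew part cannot flip sign between consecutive GD segments.)

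The fix is the paper's one-matrix route: apply the real Schur decomposition to the single $\vH$ (orthogonal similarity preserves $\opnorm{\cdot}$ and commutes with the polynomial $P$), reducing to $1\times1$ real blocks $a\ge0$ and $2\times2$ blocks with eigenvalues $a\pm ib$, $a\ge0$, $a^2+b^2\le L^2$. On a $2\times2$ block the scalar multiplier is $m(a,b)=\bigl|(1-\alpha)+\alpha(1-\gamma(a+ib))^k\bigr|$, and the paper argues directly that increasing $a$ from $0$ moves $1-\gamma(a+ib)$ toward the origin, so via the triangle-inequality bound $m(a,b)\le(1-\alpha)+\alpha|1-\gamma(a+ib)|^k$ the worst block has $a=0$, i.e., is purely skew $\omega J$. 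No product of noncommuting factors ever appears.
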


\begin{proof}[Proof sketch]
$F(\vu)-F(\vv)=\vH(\vu,\vv)(\vu-\vv)$ with $\mathrm{sym}\,H\succeq0$, $\|\vH\|\le L$, so one LA step on $\Delta=\vu-\vv$ is $(1-\alpha)\vI+\alpha(\vI-\gamma \vH)^k$. Real-Schur decomposition (\ref{def:schur}) reduces to $1\times1$ real and $2\times2$ blocks $[\,\begin{smallmatrix}a&-b\\ b&a\end{smallmatrix}\,]$; the corresponding modal multiplier is $(1-\alpha)+\alpha(1-\gamma(a+ib))^k$. Increasing the symmetric part $a\ge0$ moves $1-\gamma(a+ib)$ toward the origin, decreasing the modulus; thus the worst case is $a=0$ with $|b|\le L$, i.e., $\vH=\omega \,JF$. Taking the supremum over $|\omega|\le L$ yields the stated characterization.
\end{proof}
For the remaining section, we consider that the modes of the Jacobian have no real part and hence contribute to the oscillations to the maximum extent. We now show that the hyperparameters of LA must be chosen very specifically to ensure that the overall operator is contractive. For this, we use stability analysis from classical dynamical systems theory: the \emph{dominant} mode (mode with the largest modulus) must lie within the unit ball (\ref{eq:stability}).

Let $c = \gamma \omega$, where $\omega \in (0, L$]. The LA operator transforms the purely imaginary mode $ic$ into 
\vspace{-.5em}
\begin{equation*}
\vmu_k(c;\alpha)\;:=\;(1-\alpha)+\alpha(1-ic)^{k} \,. 
\vspace{-.5em}
\end{equation*}
Next, we define the distance function
\vspace{-.5em}
\begin{equation*}
    g_k(c;\alpha)\;:=\;|\vmu_k(c;\alpha)|^2-1
    \vspace{-.5em}
\end{equation*}
and the computable threshold
\vspace{-.5em}
\begin{equation}
\label{eq:budget}
\Gamma_k^\star(\alpha)\;:=\;\sup\Bigl\{\Gamma\ge0:\ \sup_{c\in[0,\Gamma]} g_k(c;\alpha)\le 0\Bigr\} \,.
\vspace{-.5em}
\end{equation}

\begin{lemma}
    \label{lem:conv-cond}
    Let $F\colon D\to \mathbb{R}^d$ be $C^1$, monotone, and $L$-Lipschitz (\ref{eq:monotone}, \ref{eq:lipschitz}). Fix $k\ge2$, $\alpha\in(0,1)$, $\gamma>0$. 
Then the LA operator $F^{\mathrm{LA}}_{k,\alpha}(\vz)=(1-\alpha)\vz+\alpha(I-\gamma F)^k \vz$ satisfies
\vspace{-.5em}
\begin{equation*}
\|F^{\mathrm{LA}}_{k,\alpha}\| \;=\; \sup_{c\in[0,\gamma L]}|\vmu_k(c;\alpha)| \,,
    \vspace{-.5em}
\end{equation*}
where $\big\|F^{\mathrm{LA}}_{k,\alpha}\big\|$ is the worst-case Lipschitz constant of one LA step.
In particular,
\begin{enumerate}[itemsep=0em,topsep=0em] 
\item[(i)] If $\gamma L\le \Gamma_k^\star(\alpha)$, then $F^{\mathrm{LA}}_{k,\alpha}$ is nonexpansive.
\item[(ii)] If $\gamma L> \Gamma_k^\star(\alpha)$, there exists a monotone $L$-Lipschitz instance (a bilinear convex--concave game) for which $F^{\mathrm{LA}}_{k,\alpha}$ is expansive.
\item[(iii)] If $\alpha<1-\tfrac1k$, then $\Gamma_k^\star(\alpha)>0$.
\end{enumerate}
\end{lemma}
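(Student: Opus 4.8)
The plan is to establish the displayed identity first and then read off (i)--(iii) from it. The identity $\|F^{\mathrm{LA}}_{k,\alpha}\| = \sup_{c\in[0,\gamma L]}|\vmu_k(c;\alpha)|$ follows directly from Lemma~\ref{lem:bilinear-reduction}: the worst-case Lipschitz constant of one LA step equals the supremum, over admissible averaged Jacobians $\vH$, of the operator norm of $(1-\alpha)\vI + \alpha(\vI-\gamma \vH)^k$, and by Lemma~\ref{lem:bilinear-reduction} this supremum is attained on the skew family $\vH = \omega J$ with $|\omega|\le L$. A $2\times 2$ skew block $\omega J$ has eigenvalues $\pm i\omega$; applying the polynomial $q(\lambda) = (1-\alpha)+\alpha(1-\gamma\lambda)^k$ (spectral mapping, valid since the block is normal) gives modal multipliers $\vmu_k(\pm\gamma\omega;\alpha)$, and since $|\vmu_k(-c;\alpha)| = |\vmu_k(c;\alpha)|$ (the map $c\mapsto \vmu_k(c)$ respects complex conjugation in $c$), the operator norm of that block is $|\vmu_k(\gamma|\omega|;\alpha)|$. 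Taking the sup over $|\omega|\le L$, i.e.\ over $c=\gamma|\omega|\in[0,\gamma L]$, yields the claim.

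Part (i) is then immediate: if $\gamma L \le \Gamma_k^\star(\alpha)$, then by definition of $\Gamma_k^\star$ in~\eqref{eq:budget} we have $\sup_{c\in[0,\gamma L]} g_k(c;\alpha)\le 0$, i.e.\ $|\vmu_k(c;\alpha)|\le 1$ for all $c\in[0,\gamma L]$, hence $\|F^{\mathrm{LA}}_{k,\alpha}\|\le 1$, which is nonexpansiveness~\eqref{eq:nonexpansive}. For part (ii), suppose $\gamma L > \Gamma_k^\star(\alpha)$. By the definition of the supremum in~\eqref{eq:budget} there is some $c_0\in(\Gamma_k^\star(\alpha),\gamma L]$ with $g_k(c_0;\alpha)>0$ (using continuity of $g_k$ in $c$ to pass from ``$\sup > 0$ on $[0,c_0]$'' to a point where it is strictly positive; one should be slightly careful if the sup is attained only at an endpoint, but choosing $c_0$ strictly inside a neighborhood where $g_k>0$ handles this). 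Set $\omega_0 = c_0/\gamma \le L$ and take the concrete instance $\min_x\max_y \omega_0\, x y$ on $\mathbb{R}\times\mathbb{R}$: this is a monotone $L$-Lipschitz bilinear game whose operator has Jacobian $\omega_0 J$, so by the identity above $\|F^{\mathrm{LA}}_{k,\alpha}\| \ge |\vmu_k(c_0;\alpha)| > 1$, exhibiting expansivity. (If one wants a matching $d_x=d_y$ higher-dimensional example, take $\mA = \omega_0 \vI$.)

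For part (iii), the plan is a local (small-$c$) expansion of $g_k$ at $c=0$. Since $\vmu_k(0;\alpha) = 1$ and $\frac{d}{dc}(1-ic)^k\big|_{c=0} = -ik$, a Taylor expansion gives $\vmu_k(c;\alpha) = 1 - i\alpha k\, c - \tfrac{1}{2}\alpha k(k-1)c^2 + O(c^3)$, so $|\vmu_k(c;\alpha)|^2 = \big(1 - \tfrac12\alpha k(k-1)c^2\big)^2 + (\alpha k c)^2 + O(c^3) = 1 + \big(\alpha^2 k^2 - \alpha k(k-1)\big)c^2 + O(c^3)$. Thus $g_k(c;\alpha) = \alpha k\big(\alpha k - (k-1)\big)c^2 + O(c^3)$, and the leading coefficient is negative precisely when $\alpha k < k-1$, i.e.\ $\alpha < 1 - \tfrac1k$. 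Under this condition $g_k(c;\alpha) < 0$ for all sufficiently small $c>0$, hence $\Gamma_k^\star(\alpha) > 0$. I expect the main obstacle to be the bookkeeping in this expansion—specifically verifying that the $O(c^3)$ remainder is genuinely uniform so that strict negativity of the quadratic term forces $g_k<0$ on a full interval $(0,\delta]$, and being careful that the real-part correction at order $c^2$ (the term $-\tfrac12\alpha k(k-1)c^2$) does not accidentally cancel the imaginary-part contribution; a clean way to avoid sign-tracking errors is to compute $|\vmu_k(c;\alpha)|^2$ directly as $(1-\alpha)^2 + 2\alpha(1-\alpha)\mathrm{Re}\,(1-ic)^k + \alpha^2|1-ic|^{2k}$ and expand each piece, using $|1-ic|^2 = 1+c^2$ and $\mathrm{Re}(1-ic)^k = 1 - \binom{k}{2}c^2 + O(c^4)$.
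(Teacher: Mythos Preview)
Your proposal is correct and follows essentially the same route as the paper: reduce via Lemma~\ref{lem:bilinear-reduction} to purely rotational $2\times2$ blocks, read off (i)--(ii) directly from the definition of $\Gamma_k^\star(\alpha)$, and for (iii) Taylor-expand $g_k(c;\alpha)=|\vmu_k(c;\alpha)|^2-1$ at $c=0$ to find the leading coefficient $\alpha^2k^2-\alpha k(k-1)$, which is negative exactly when $\alpha<1-\tfrac1k$. The paper carries out (iii) with the explicit decomposition $|\vmu_k|^2=(1-\alpha)^2+2\alpha(1-\alpha)\,\mathrm{Re}(1-ic)^k+\alpha^2(1+c^2)^k$ that you mention at the end, and supplies explicit integral-remainder bounds for the binomial expansions rather than a bare $O(c^3)$, but the substance is identical.
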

\emph{Proof sketch.} On purely rotational modes, the LA ensures that the Lipschitz modulus is $\sup_{c\in[0,\gamma L]}|\vmu_k(c;\alpha)|$.  
(i) Nonexpansiveness is equivalent to $g_k(c;\alpha)=|\vmu_k(c;\alpha)|^2-1\le0$. 
(ii) If the inequality fails at some $c_0$, one can construct a bilinear instance with frequency $c_0/\gamma$, where $|\vmu_k(c_0;\alpha)|>1$. Hence, the operator is expansive.  
(iii) A Taylor expansion near $c=0$ shows $g_k(c;\alpha)\approx(\alpha^2k^2-\alpha k(k-1))c^2$, which is negative for small $c>0$ when $\alpha<1-\tfrac1k$, ensuring a positive stability window.  \qedhere

\noindent\emph{Interpretation.}  Intuitively, $\Gamma_k^\star(\alpha)$ identifies the largest admissible steps for which the \emph{dominant mode} (i.e., the most oscillatory direction) remains contractive. Beyond this boundary, where $\gamma L > \Gamma_k^\star(\alpha)$, one can construct a game whose dominant mode escapes the unit disk, certifying divergence of LA on that instance. Thus, $\Gamma_k^\star(\alpha)$ operationally defines the \emph{maximum convergent} step size, and scaling it by $L$ provides a concrete hyperparameter selection technique for LA in monotone convex--concave problems.

\subsection{LA convergence}
\label{theo:la-conv}
\begin{theorem}{Let $F\colon D \to \mathbb{R}^d$ be a $C^1$, monotone, $L$- Lipschitz operator (\ref{eq:monotone}, \ref{eq:lipschitz}) with a set of fixed points $Z^{\star} \neq \varnothing$. Let $k \geq 2$ be an integer, $\alpha \in (0, 1- \frac{1}{k})$ and $\gamma > 0$ such that $\gamma \, L \leq \Gamma_k^{\star} (\alpha)$. LA ($k, \alpha$) iterate $\vz_{t+1} = F^{\mathrm{LA}} \, \vz_t$ with Gradient Descent($\gamma$) optimizer converges to a $\vz^\star \in Z^\star$. Furthermore, the rate of convergence of the average iterate (i.e. the rate at which the primal-dual gap shrinks (\ref{def:pd-gap})) is O($\frac{1}{T}$).}
\end{theorem}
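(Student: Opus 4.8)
The plan is to prove the theorem in two stages: first, convergence of the LookAhead iterates to a fixed point, and second, the $\mathcal{O}(1/T)$ rate on the primal--dual gap of the averaged iterate.

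\paragraph{Stage 1: Convergence of the iterates.}
First I would invoke Lemma~\ref{lem:conv-cond}(i): since $\gamma L \le \Gamma_k^\star(\alpha)$ and $\alpha < 1 - \tfrac1k$ (so that $\Gamma_k^\star(\alpha) > 0$ by part (iii)), the operator $F^{\mathrm{LA}}_{k,\alpha}$ is nonexpansive on $D$. Next I would check that $\mathrm{Fix}(F^{\mathrm{LA}}_{k,\alpha}) = Z^\star$: if $F(\vz^\star)=0$ then $(I-\gamma F)^k \vz^\star = \vz^\star$ and hence $F^{\mathrm{LA}}\vz^\star = \vz^\star$; conversely, a fixed point of $F^{\mathrm{LA}}$ forces $(I-\gamma F)^k \vz = \vz$, and using that $I-\gamma F$ applied along the chord has the block structure from Lemma~\ref{lem:bilinear-reduction} with $\mathrm{sym}\,H \succeq 0$, the only way $(I-\gamma H)^k$ can fix a vector is $H\Delta = 0$, i.e. $F(\vz)=0$. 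From nonexpansiveness plus nonempty fixed-point set, the iterates $(\vz_t)$ are Fejér monotone with respect to $Z^\star$, hence bounded. To upgrade Fejér monotonicity to actual convergence I would show $\|\vz_{t+1}-\vz_t\|\to 0$ (a standard consequence: $\sum_t \|\vz_{t+1}-\vz_t\|^2 < \infty$ follows from a firm-nonexpansiveness / averaged-operator argument, since $F^{\mathrm{LA}} = (1-\alpha)I + \alpha (I-\gamma F)^k$ is an $\alpha$-averaged map whenever $(I-\gamma F)^k$ is nonexpansive), then extract a convergent subsequence $\vz_{t_j}\to \bar\vz$, argue $\bar\vz \in \mathrm{Fix}(F^{\mathrm{LA}}) = Z^\star$ by continuity and $\|\vz_{t+1}-\vz_t\|\to 0$, and finally use Fejér monotonicity with respect to the \emph{particular} limit point $\bar\vz$ to conclude the whole sequence converges to $\bar\vz$ (Opial-type argument).

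\paragraph{Stage 2: The $\mathcal{O}(1/T)$ gap rate.}
For the rate, the key observation is that one LA step with $k$ inner GD steps can be unrolled into $k$ genuine gradient steps of the base optimizer. I would therefore track the full sequence of inner iterates $\tilde\vz$ (of which the LA iterates are a subsequence up to the interpolation) and apply the standard averaging argument for monotone VIs: for GD-type updates on a monotone operator, $\langle F(\vz_t), \vz_t - \vz\rangle$ telescopes against $\|\vz_t - \vz\|^2 - \|\vz_{t+1}-\vz\|^2$ up to an error term controlled by $\gamma^2 L^2 \|\vz_{t+1}-\vz_t\|^2$, which is summable by Stage 1. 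Summing over $t=1,\dots,T$, dividing by $T$, using convexity--concavity of $f$ to pass from the average of $\langle F(\vz_t),\vz_t-\vz\rangle$ to $\mathcal{G}(\bar\vx^T,\bar\vy^T)$, and using boundedness of the iterates (from Fejér monotonicity) to bound $\|\vz_1 - \vz\|$ over the relevant compact set, yields $\mathcal{G}(\bar\vx^T,\bar\vy^T) = \mathcal{O}(1/T)$. I would note this matches the known lower bound for this class, as claimed in the contributions.

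\paragraph{Main obstacle.}
The hard part will be Stage 1's passage from Fejér monotonicity to genuine convergence, and in particular cleanly establishing that $F^{\mathrm{LA}}$ is \emph{averaged} (not merely nonexpansive) so that the summability $\sum_t \|\vz_{t+1}-\vz_t\|^2 < \infty$ holds; the subtlety is that $(I-\gamma F)^k$ being nonexpansive is exactly what Lemma~\ref{lem:conv-cond}(i) gives under $\gamma L \le \Gamma_k^\star(\alpha)$, and the averaging constant $\alpha$ must be genuinely in $(0,1)$, which it is. A secondary technical point is ensuring the gap-rate argument in Stage 2 correctly handles the interpolation weight $\alpha$ when relating the LA iterates to the unrolled inner GD trajectory — the cleanest route is to run the VI averaging analysis on the inner trajectory directly and then argue the outer averaged iterate inherits the same rate because it is itself a convex combination of inner iterates with weights bounded away from $0$.
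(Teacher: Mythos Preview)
Your Stage~1 has a real gap. You claim that ``$(I-\gamma F)^k$ being nonexpansive is exactly what Lemma~\ref{lem:conv-cond}(i) gives,'' but that lemma only asserts that the \emph{full} LA map $F^{\mathrm{LA}}_{k,\alpha}=(1-\alpha)I+\alpha(I-\gamma F)^k$ is nonexpansive; it says nothing about the inner map. In fact $(I-\gamma F)^k$ is expansive on this class: on a purely rotational mode $ic$ one has $|(1-ic)^k|=(1+c^2)^{k/2}>1$ for every $c>0$ --- this is precisely why GD diverges on bilinear games, and why the averaging with $(1-\alpha)I$ is needed at all. Consequently $F^{\mathrm{LA}}$ is \emph{not} an $\alpha$-averaged operator in the standard sense (that notion requires the inner map to be nonexpansive), so the Krasnoselskii--Mann route to $\sum_t\|\vz_{t+1}-\vz_t\|^2<\infty$ and asymptotic regularity does not go through as you wrote it. The paper sidesteps this entirely: from Lemmas~\ref{lem:bilinear-reduction}--\ref{lem:conv-cond} it extracts the one-step bound $\|F^{\mathrm{LA}}(\vu)-\vz^\star\|\le\rho\,\|\vu-\vz^\star\|$ with $\rho=\sup_{c\in[0,\gamma L]}|\vmu_k(c;\alpha)|$, observes $\rho<1$ when $\gamma L<\Gamma_k^\star(\alpha)$, and invokes Banach contraction for linear convergence directly --- no Fej\'er/Opial argument is used.

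For Stage~2 the paper does not rely on summability inherited from Stage~1. It writes exact identities for the $k$ inner GD steps, $\|\vz_t^{(j+1)}-\vp\|^2=\|\vz_t^{(j)}-\vp\|^2-2\gamma\langle F(\vz_t^{(j)}),\vz_t^{(j)}-\vp\rangle+\gamma^2\|F(\vz_t^{(j)})\|^2$, and for the averaging step, $\|\vz_{t+1}-\vp\|^2=(1-\alpha)\|\vz_t-\vp\|^2+\alpha\|\vz_t^{(k)}-\vp\|^2-\alpha(1-\alpha)\|\vz_t^{(k)}-\vz_t\|^2$, and combines them. The residual to be controlled is $\alpha\gamma^2\sum_{j}\|F(\vz_t^{(j)})\|^2-\alpha(1-\alpha)\|\vz_t^{(k)}-\vz_t\|^2$; since $\vz_t^{(k)}-\vz_t=-\gamma\sum_{j}F(\vz_t^{(j)})$, Cauchy--Schwarz together with the hypothesis $\alpha\le 1-\tfrac1k$ is used to argue this term has the right sign, after which one telescopes $\|\vz_t-\vp\|^2-\|\vz_{t+1}-\vp\|^2$ over $t$ and passes to the gap via convexity--concavity, yielding the constant $\|\vz_0-\vp\|^2/(2\alpha\gamma T)$. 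Note also that the averaged iterate in the statement is $\bar\vz_T=\tfrac1T\sum_{t<T}\vz_t$ over the \emph{outer} anchors, not over the unrolled inner chain, so your closing remark about inheriting the rate through a convex combination of inner iterates is not how the argument runs.
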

\noindent\textbf{Proof Sketch.}
Lemma~\ref{lem:bilinear-reduction} reduces the worst-case dynamics to rotational modes, and Lemma~\ref{lem:conv-cond} shows that the condition 
$\gamma L \le \Gamma_k^\star(\alpha)$ guarantees nonexpansiveness (\ref{eq:nonexpansive}) of $F^{\mathrm{LA}}_{k,\alpha}$. 
Nonexpansiveness implies that the distance $\|\vz_{t}-\vp\|$ to any fixed point $\vp\in Z^\star$ is nonincreasing, 
so the iterates are Fejér monotone (\ref{eq:fejer}) with respect to $Z^\star$. 
Moreover, the averaging structure of LA forces successive differences $\|\vz_{t+1}-\vz_t\|$ to vanish, 
ensuring asymptotic regularity. 
Together these properties yield convergence of $\{\vz_t\}$ to some $\vz^\star\in Z^\star$ in finite dimensions. We leave the full proof as well as the proof for the convergence rate to the Appendix \ref{app:full-convergence-proof} and \ref{app:conv-rate-proof}.\qed

\subsection{MoLA Pseudocode}\label{sec:psuedocode}
Motivated by the above insights, we introduce \emph{Modal Lookahead} (MoLA), which chooses hyperparameters to maximize stability, as detailed next.

\noindent\textbf{Selecting hyperparameters.}
Given operator $F$ and anchor $\mathbf{z}_0$, Alg.~\ref{alg:choose-dominant-clear} linearizes the update map, computes the Jacobian eigenvalues, and maps them to one-step multipliers capturing modal amplification/rotation. We identify the dominant mode (largest magnitude), propose horizons $k$ aligned with its rotation, and cap $\alpha$ so no mode grows under $k$ steps plus averaging. Among feasible pairs, $(k^\star,\alpha^\star)$ minimizes worst-case amplification; if none exist, we fall back to $k_{\min}$ with a conservative $\alpha$.

\noindent\textbf{Lookahead (LA) optimization.}
Starting from $\mathbf{z}_0$, we query \textsc{ChooseModalParams} for $(k^\star,\alpha^\star)$ (re-queried optionally per cycle, fixed in stationary games). The method then runs $k^\star$ base steps $\mathbf{z}_{t+1}=\mathbf{z}_t-\gamma F(\mathbf{z}_t)$, followed by LA averaging~\eqref{eq:LookAhead} with $\alpha^\star$, and updates the anchor. This suppresses rotational components while preserving progress along contractive directions. Pseudocode is given in Alg.~\ref{alg:mola-general}.

\noindent\textbf{Eigenvalues.}
Eigenvalues (Line~\ref{line:3}, Alg.~\ref{alg:mola-general}) are computed with standard routines, e.g. \texttt{numpy.linalg.eigvals} on the local Jacobian. Equivalent functions exist in SciPy and PyTorch.

\begin{algorithm}[t]
\DontPrintSemicolon
\SetKw{KwRet}{return}
\caption{MoLA Pseudocode. }
\label{alg:mola-general}
\KwIn{Operator $F$ , base optimizer step size $\gamma>0$, iterations $T$, $k_{\min},k_{\max},\alpha\text{-grid}$, initial point $\vz_0 \in \mathbb{R}^d$}
\KwOut{Final iterate $\vz_T$}
Initialize $\vz\leftarrow \vz_0$\;
$\texttt{anchor}\leftarrow \vz$, $\texttt{step\_in\_cycle}\leftarrow 0$, $\texttt{cycle}\leftarrow 0$\;
$\Lambda \leftarrow \texttt{EstimateJacobianEigs}(F, \vz)$ \tcp*{eigs of $\nabla F(\vz)$ or local linearization}\label{line:3}
$(k,\alpha) \leftarrow \texttt{ChooseModalParams}(\Lambda, k_{\min},k_{\max},\alpha\text{-grid},\gamma)$\;
\For{$t \leftarrow 1$ \KwTo $T$}{
  $g \leftarrow F(\vz)$\;
  \tcp{\color{gray}Base first-order step (e.g., GD)}
  $\vz \leftarrow \vz - \gamma\, g$\;
  $\texttt{step\_in\_cycle} \leftarrow \texttt{step\_in\_cycle} + 1$\;
  \If{$\texttt{step\_in\_cycle} = k$}{
     \tcp{\color{gray}LA averaging}
     $\vz \leftarrow (1-\alpha)\,\texttt{anchor} + \alpha\, \vz$\;
     $\texttt{anchor} \leftarrow 
     \vz$; $\texttt{step\_in\_cycle} \leftarrow 0$; 
  }
}
\KwRet $\vz$\;
\end{algorithm}

\begin{algorithm}[!t]
\DontPrintSemicolon
\caption{ChooseModalParams; Alg.~\ref{alg:mola-general} calls}
\label{alg:choose-dominant-clear}
\SetKwInOut{KwIn}{Input}\SetKwInOut{KwOut}{Output}
\KwIn{Eigenvalues $\Lambda$ of $\nabla F$ (or local linearization); scalars $k_{\min},k_{\max}$, GD step size $\gamma$; $\alpha\_\text{grid}\subset(0,1)$}
\KwOut{Chosen $(k^\star,\alpha^\star)$}
\BlankLine
$T_{\mathrm{all}} \leftarrow 1 - \gamma \cdot \Lambda$ \tcp*{\color{gray}Eigen-values after one base step of GD}
\BlankLine
$i_{\mathrm{dom}} \leftarrow \arg\max_{i} \big|T_{\mathrm{all},i}\big|$\;
$T_{\mathrm{dom}} \leftarrow \{\,T_{\mathrm{all},i_{\mathrm{dom}}}\,\}$ \tcp*{\color{gray}Dominant mode}
\BlankLine
$(k^\star,\alpha^\star,\rho^\star) \leftarrow (k_{\min},\ 0.5,\ +\infty)$ 
\BlankLine
\For{each $\alpha \in \alpha\_\text{grid}$}{
  $\mathcal{K} \leftarrow \texttt{KCandidatesForAlpha}(T_{\mathrm{dom}},\,\alpha,\,k_{\min},\,k_{\max})$\;
  \If{$\mathcal{K}=\varnothing$}{\textbf{continue}}
  \For{each $k \in \mathcal{K}$}{
    $\alpha_{\max} \leftarrow \texttt{AlphaCap}(T_{\mathrm{dom}},\,k)$\;
    \If{$\alpha \le \alpha_{\max}$}{
      \tcp{\color{gray}Worst-mode spectral radius under LA $(k,\alpha)$}
      $\rho \leftarrow \max\limits_{\tau \in T_{\mathrm{dom}}} \left| (1-\alpha) + \alpha\, \tau^{\,k} \right|$\;
      \If{$\rho < \rho^\star$}{
        $(k^\star,\alpha^\star,\rho^\star) \leftarrow (k,\alpha,\rho)$\;
      }
    }
  }
}
\BlankLine
\If{$\rho^\star = +\infty$}{
\tcp{\color{gray}Fallback (no feasible pair)}
  $k^\star \leftarrow k_{\min}$\;
  $\alpha^\star \leftarrow \min\!\big\{0.5,\ \texttt{AlphaCap}(T_{\mathrm{dom}},\,k_{\min})\big\}$\;
}
\KwRet $(k^\star,\alpha^\star)$\;
\BlankLine\BlankLine
\SetKwProg{Fn}{Function}{:}{end}
\Fn{\texttt{AlphaCap}$(T_{\mathrm{stab}},\,k)$}{
  \tcp{Return $\max\{\alpha\in(0,1):\ \max_{\tau\in T_{\mathrm{stab}}}|(1-\alpha)+\alpha\,\tau^k|\le 1\}$}
  \KwRet $\alpha_{\max}(k)$
}
\BlankLine
\Fn{\texttt{KCandidatesForAlpha}$(T_{\mathrm{dom}},\,\alpha,\,k_{\min},\,k_{\max})$}{
  \tcp{Return integer horizons $k\in[k_{\min},k_{\max}]$ that are admissible for the dominant multiplier in $T_{\mathrm{dom}}$ (by the LookAhead-cycle geometry/derivation).}
  \KwRet $\mathcal{K}$
}
\end{algorithm}

\begin{figure}
    \begin{subfigure}[t]{.49\linewidth}
    \includegraphics[width=\linewidth]{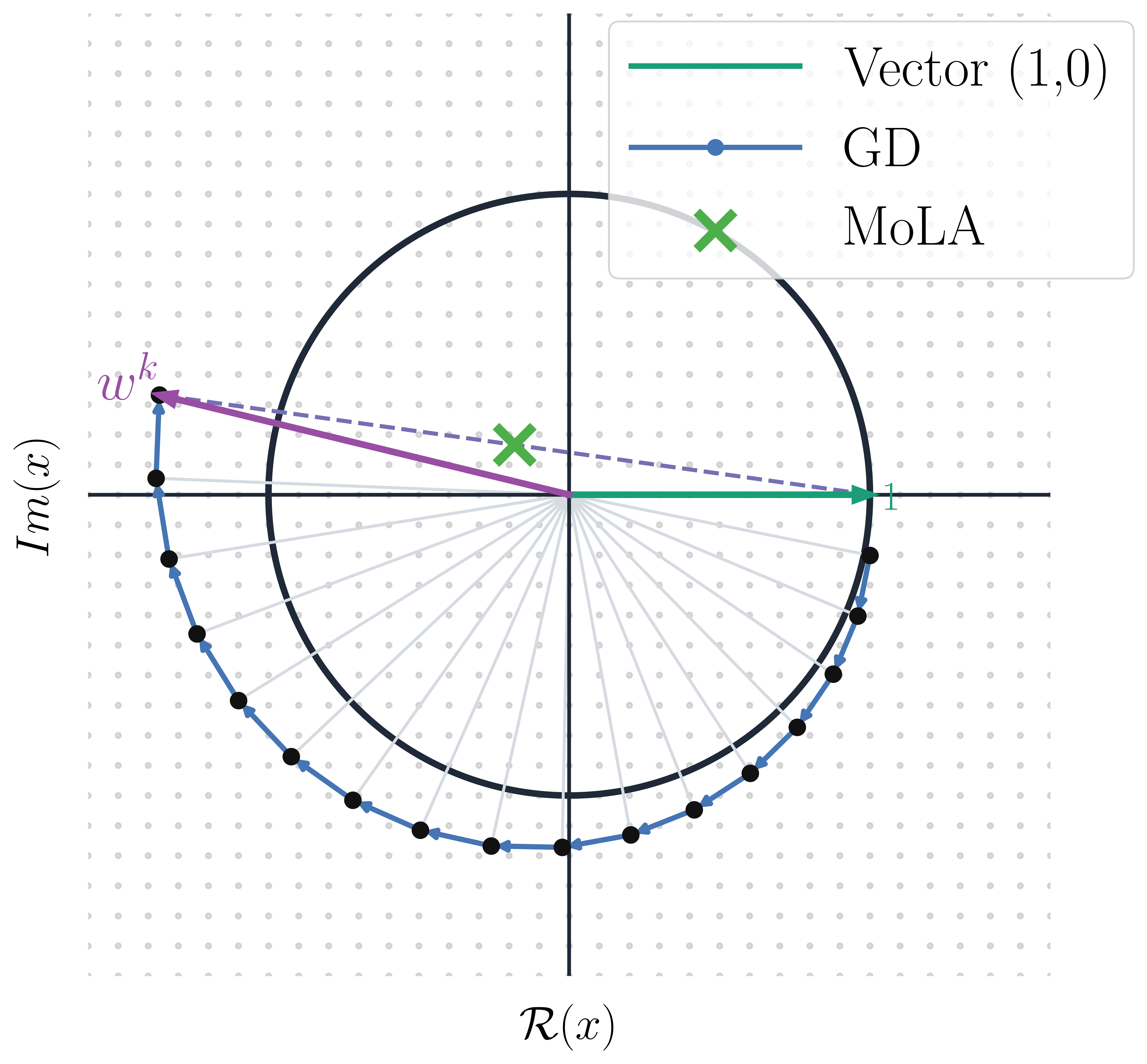}
    \vspace{-.7em}
    \caption{MoLA}
    \label{fig:mola_illustration1}
    \end{subfigure}
    \begin{subfigure}[t]{.49\linewidth}
    \includegraphics[width=\linewidth]{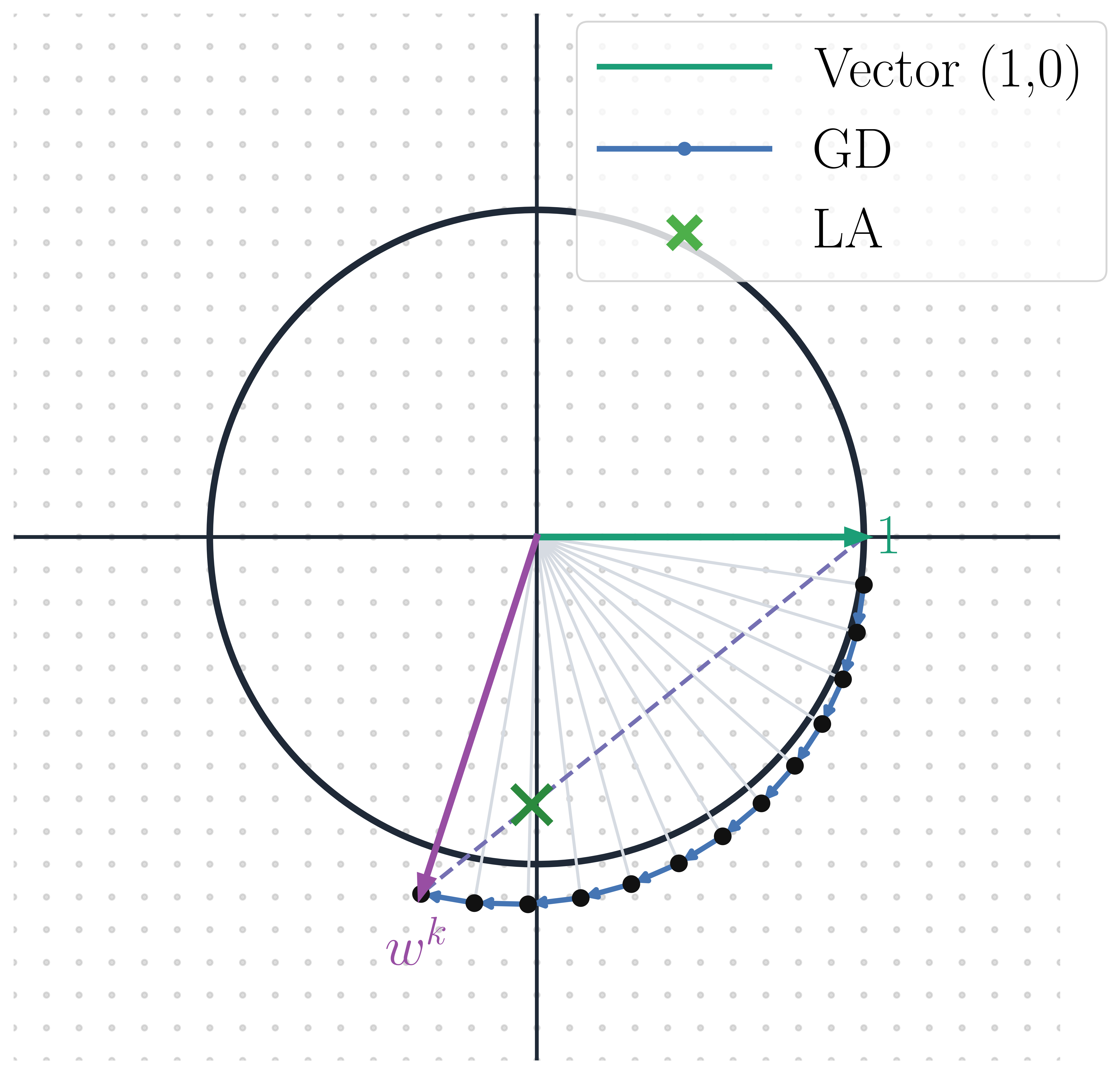}
    \vspace{-.7em}
    \caption{ LA}
    \label{fig:mola_illustration2}
    \end{subfigure}
    \vspace{-.7em}
    \caption{
    MoLA selects $(k,\alpha)$ so the average iterate is closer to the origin than for random choices. It sets $k$ so GD accumulates roughly a $\pi$ phase shift, then picks $\alpha$ to minimize the LA iterate’s distance to the origin.
    }
    \label{fig:mola_illustration}
    \vspace*{-0.5cm}
\end{figure}

\subsection{MoLA convergence}\label{sec:mola_theory}

\begin{theorem}
\label{theo:mola-conv}
    Let $F\colon D \to \mathbb{R}^d$ be a $C^1$, monotone, $L$- Lipschitz operator  with a set of fixed points $Z^{\star} \neq \varnothing$. Let $k \geq 2$ be an integer, $\alpha \in (0, 1- \frac{1}{k})$ and $\gamma > 0$ such that $\gamma \, L \leq \Gamma_k^{\star} (\alpha)$, where $\Gamma_k^{\star} (\alpha)$ is the maximum $\gamma L$ to ensure convergence as defined in \ref{eq:budget}.  Then MoLA converges to a fixed point $z^\star \in Z^\star$. The rate of convergence of the average iterate is $O(\frac{1}{T})$, improving on the baseline algorithm by a constant factor.
\end{theorem}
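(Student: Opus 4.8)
The plan is to treat MoLA as an instance of LookAhead run with a specific, self-selected pair $(k^\star,\alpha^\star)$ and to reduce everything to the LA convergence result of Section~\ref{theo:la-conv}. Concretely, there are three sub-tasks: (a) show that the pair returned by \textsc{ChooseModalParams} is \emph{admissible} in the sense required there, i.e. $\alpha^\star\in(0,1-\tfrac1{k^\star})$ and $\gamma L\le\Gamma^\star_{k^\star}(\alpha^\star)$; (b) invoke Theorem in Section~\ref{theo:la-conv} to obtain convergence of the iterates to a point of $Z^\star$ and the $O(1/T)$ rate on the primal--dual gap of the average iterate (Definition~\ref{def:pd-gap}); and (c) quantify the constant in that rate in terms of the worst-case one-step amplification $\rho:=\|F^{\mathrm{LA}}_{k,\alpha}\|$ and use the optimality of MoLA's selection ($\rho^\star$ minimal over feasible pairs) to conclude the constant-factor improvement.

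For (a): since the game is stationary and $JF$ is assumed full-rank and normal, the modal and spectral pictures coincide (Section~\ref{sec:mode}); by Lemma~\ref{lem:bilinear-reduction} the worst-case one-step Lipschitz constant of LA over the whole $C^1$ monotone $L$-Lipschitz class is governed by the purely rotational family $\{\omega J:\ |\omega|\le L\}$, and by Lemma~\ref{lem:conv-cond} it equals $\sup_{c\in[0,\gamma L]}|\vmu_{k}(c;\alpha)|$. Hence the inequality enforced inside \textsc{AlphaCap}/\textsc{KCandidatesForAlpha} — namely $\max_{\tau}|(1-\alpha)+\alpha\tau^{k}|\le 1$ for the dominant multiplier $\tau=1-\gamma\lambda_{\mathrm{dom}}$ — together with the fact that $g_k(\cdot;\alpha)\le 0$ on the whole interval $[0,\Gamma^\star_k(\alpha)]$ (the defining property of $\Gamma^\star_k$ in \eqref{eq:budget}) certifies $\gamma L\le\Gamma^\star_{k^\star}(\alpha^\star)$, provided the search grid contains at least one admissible pair. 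The hypotheses of the theorem (existence of an integer $k$ and $\alpha<1-\tfrac1k$ with $\gamma L\le\Gamma^\star_k(\alpha)$) guarantee non-emptiness, so the fallback branch is never reached; and Lemma~\ref{lem:conv-cond}(iii) then gives $\Gamma^\star_{k^\star}(\alpha^\star)>0$, in particular $\alpha^\star<1-\tfrac1{k^\star}$.

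For (b): once $(k^\star,\alpha^\star)$ is admissible, MoLA's updates are literally the LA$(k^\star,\alpha^\star)$ updates with GD base optimizer, so the LA convergence theorem applies verbatim: $F^{\mathrm{LA}}_{k^\star,\alpha^\star}$ is nonexpansive, the iterates are Fej\'er monotone with respect to $Z^\star$ and asymptotically regular, hence converge to some $\vz^\star\in Z^\star$, and $\mathcal{G}(\bar\vx^T,\bar\vy^T)=O(1/T)$. For (c): the bound behind that theorem has the form $\mathcal{G}(\bar\vx^T,\bar\vy^T)\le C(\rho)\,\mathrm{dist}(\vz_0,Z^\star)^2/(\gamma T)$ with $C$ nondecreasing in the worst-case amplification $\rho=\sup_{c\in[0,\gamma L]}|\vmu_k(c;\alpha)|$ (intuitively: smaller $\rho$ makes $\|\vz_{t+1}-\vz_t\|$ decay faster and tightens the telescoping of the gap terms). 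Since MoLA returns the feasible pair minimizing $\rho$, we obtain $C(\rho^\star)\le C(\rho)$ for every admissible $(k,\alpha)$, in particular for any default/tuned LA setting, which is exactly the claimed improvement by a constant factor, while the $1/T$ order (optimal, matching the lower bound) is unchanged.

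The main obstacle is step (a), and specifically reconciling the \emph{local}, eigenvalue-at-$\vz_0$ nature of MoLA's selection with the \emph{global} worst-case budget $\Gamma^\star_k(\alpha)$ that the LA convergence theorem needs: one must argue that because $F$ is globally $L$-Lipschitz the averaged Jacobian $\vH(\vu,\vv)$ everywhere satisfies $\|\vH\|\le L$ and $\mathrm{sym}\,\vH\succeq 0$, so the rotational frequency never exceeds $L$ and the check performed at $\vz_0$ (dominant mode, plus monotonicity of $g_k$ on $[0,\Gamma^\star_k]$) in fact controls all modes at all iterates. A secondary technical point is making precise the $\rho$-dependence of the constant in the $O(1/T)$ bound, so that ``better $\rho$'' rigorously implies ``better constant''; this requires tracking constants through the appendix proof of the LA convergence theorem rather than using it as a black box.
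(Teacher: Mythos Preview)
Your reduction in steps (a) and (b) to the LA convergence theorem is sound and matches how the paper proceeds. The genuine gap is in step (c), your mechanism for the constant-factor improvement. You posit that the $O(1/T)$ bound has the shape $\mathcal{G}\le C(\rho)\,\mathrm{dist}(\vz_0,Z^\star)^2/(\gamma T)$ with $C$ nondecreasing in the worst-case amplification $\rho=\sup_{c\in[0,\gamma L]}|\vmu_k(c;\alpha)|$, and then argue that MoLA wins because it minimizes $\rho$. But if you actually track constants through the appendix proof of the LA rate (the telescoping of \eqref{eq:per-outer-final}), the bound that emerges is
\[
\mathcal{G}(\bar\vx_T,\bar\vy_T)\ \le\ \frac{\|\vz_0-\vp\|^2}{2\,\alpha\,\gamma\,T},
\]
and there is \emph{no} $\rho$-dependence in the constant: nonexpansiveness (i.e.\ $\rho\le 1$) is used only as an on/off condition to make the remainder term in \eqref{eq:combo-sign} nonpositive, not as a quantitative factor. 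So minimizing $\rho$ among feasible pairs does not, by itself, tighten this bound at all. Your ``secondary technical point'' is in fact the crux, and it resolves the wrong way for your argument.

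The paper's route is different: it reads the constant as $\tfrac{L}{2\,\alpha\,\Gamma_k^\star(\alpha)}$ once one sets $\gamma=\Gamma_k^\star(\alpha)/L$ (the maximal admissible stepsize), and then takes MoLA's selection to be the \emph{maximizer of the product} $\alpha\,\Gamma_k^\star(\alpha)$ over admissible $(k,\alpha)$. The improvement over any fixed baseline $(k_0,\alpha_0)$ is then the inequality $\alpha^\star\Gamma_{k^\star}^\star(\alpha^\star)\ge \alpha_0\Gamma_{k_0}^\star(\alpha_0)$, which directly shrinks the constant. Note this also means the stepsize $\gamma$ is part of what MoLA optimizes, not a fixed input as you treat it; and the selection criterion that drives the theorem is ``maximize $\alpha\,\Gamma_k^\star(\alpha)$'', not ``minimize $\rho$''. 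To repair your proof, replace step (c) by computing the explicit constant $\tfrac{1}{2\alpha\gamma}$ and arguing the improvement through the $\alpha\,\Gamma_k^\star(\alpha)$ objective.
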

The detailed analysis is provided in Appendix \ref{sec:mola-rate}.

\emph{Interpretation.}
The rate constant in Theorem~\ref{theo:mola-conv} is $\displaystyle \frac{L}{2\,\alpha\,\Gamma_k^{\star}(\alpha)}$.
A \textbf{fixed} LA baseline that uses some $(k_0,\alpha_0)$ (and the maximal admissible $\gamma_0=\Gamma_{k_0}^\star(\alpha_0)/L$)
achieves the constant $\frac{L}{2\,\alpha_0\,\Gamma_{k_0}^\star(\alpha_0)}$. In contrast, \textbf{Modal LookAhead (MoLA)} \emph{chooses}
$(k,\alpha)$ to maximize the gain $\alpha\,\Gamma_k^\star(\alpha)$ and thus \emph{minimizes} the constant
$\frac{L}{2\,\alpha\,\Gamma_k^\star(\alpha)}$.
Therefore MoLA improves the $O(1/T)$ rate by the factor
\vspace{-.5em}
\[
\frac{\alpha^\star\,\Gamma_{k^\star}^\star(\alpha^\star)}{\alpha_0\,\Gamma_{k_0}^\star(\alpha_0)}\ \ge\ 1 \,,
\vspace{-.5em}
\]
with strict inequality unless the fixed baseline already selects a depth–wise maximizer of $\alpha\,\Gamma_k^\star(\alpha)$.
\paragraph{Mechanism Overview.}
Each complex mode can be written in terms of polar coordinates as $R \,e^{-i\theta}$, where $R$ represents the amplitude of the mode and $\theta$ the angle; refer to Figure~\ref{fig:mola_illustration}. GD rotates each mode and grows: after $k$ inner steps, the mode has
angle approximately $k\theta$ and amplitude approximately $R^{k}$ (using De Moivre's theorem).
The LA update then averages this rotating vector with the unit vector pointing
to $1$. Contraction is strongest when two conditions are nearly met:
\emph{(i)} the \textbf{phase} is in near opposition (angle difference close to $\pi$),
so the average cancels oscillation; and \emph{(ii)} the \textbf{amplitude} of the rotating vector
roughly matches the averaging weight, yielding a small average (target scale $\approx (1-\alpha)/\alpha$).

Because $k$ is an integer, the two targets cannot, in general, be satisfied exactly.
Our design rule therefore selects $k$ to be \emph{simultaneously close} to both:
align the phase near $\pi$ while keeping the amplitude near the target.
This balances the two effects, producing the smallest modal multiplier
and visibly damping the oscillations often observed in games. We provide detailed calculations in Appendix \ref{app:calc-mola}.

\section{Per-player Laplace Transform: Bilinear Game Case Study}\label{sec:laplace-based_analysis}

\citet{chavdarova2023hrdes} introduced the use of High Resolution Differential Equations (HRDEs) as a way to study games in continuous time. We extend the work on this direction, presenting an analysis of the continuous time dynamics in the frequency domain.

We first present the $\mathcal{O}(\gamma)$-HRDE of \ref{eq:LookAhead} with $(\alpha, k)$ kept as parameters.

We then propose the use of the Laplace transform to analyze convergence for the Bilinear Game (\ref{eq:bilinear_game}) in the frequency domain. Finally, we discuss the fundamental differences between the former modal approach and its overlapping conclusion.
The proofs of this section are deferred to Appendix~\ref{app:proofs_cont}.

\begin{theorem}    
\label{theo:3}
{General-$(\alpha,k)$  $\mathcal{O}(\gamma)$--HRDE of LA \& Convergence on \ref{eq:bilinear_game}.}

The $\mathcal{O}(\gamma)$-HRDE for~\eqref{eq:LookAhead} with \eqref{eq:gd} as a base optimizer is:
\begin{align}
\tag{LA-$\gamma$-HRDE}
\label{eq:la-hrde} 
\begin{split}
    & \ddot{\vz}(t) =  - \frac{2}{\gamma}\dot{\vz}(t) - \frac{2k\alpha}{\gamma} F(\vz(t)) \\ 
    & + k(k-1) \alpha \, JF(\vz(t)) \cdot F(\vz(t)), 
    \end{split}
\end{align}
where $\vz \in \mathbb{R}^{d}$ is the vector of all players, 
$k$ is the number of \eqref{eq:gd} steps  before the $\alpha$-averaging, $\gamma$ is the \eqref{eq:gd} step size, $F(\cdot)$ is the vector field, and $JF(\cdot)$ is the Jacobian of the vector field (i.e. $\nabla F$). 
\end{theorem}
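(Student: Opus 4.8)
\emph{Proof proposal.}
The plan is to derive \eqref{eq:la-hrde} by the standard high-resolution ODE recipe---expand the discrete recursion one order in $\gamma$ beyond the naive flow, posit a smooth interpolant, and match Taylor expansions---with the one twist that a ``single step'' of \eqref{eq:LookAhead} is itself $k$ base \eqref{eq:gd} steps followed by an $\alpha$-average, so the first job is to unroll the inner loop.

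First I would expand the inner \eqref{eq:gd} loop started at the anchor $\vz_t$. Writing $\tilde\vz_t=\vz_t$ and $\tilde\vz_{t+j+1}=\tilde\vz_{t+j}-\gamma F(\tilde\vz_{t+j})$, the leading-order trajectory is $\tilde\vz_{t+j}=\vz_t-j\gamma F(\vz_t)+\mathcal{O}(\gamma^2)$, so a first-order Taylor expansion of $F$ about $\vz_t$ gives $F(\tilde\vz_{t+j})=F(\vz_t)-j\gamma\,JF(\vz_t)F(\vz_t)+\mathcal{O}(\gamma^2)$. Summing $j=0,\dots,k-1$ and using $\sum_{j=0}^{k-1}j=\tfrac{k(k-1)}{2}$,
\[
\tilde\vz_{t+k}-\vz_t=-k\gamma\,F(\vz_t)+\tfrac{k(k-1)}{2}\,\gamma^2\,JF(\vz_t)F(\vz_t)+\mathcal{O}(\gamma^3),
\]
so by \eqref{eq:LookAhead}, $\vz_{t+1}-\vz_t=\alpha(\tilde\vz_{t+k}-\vz_t)=-\alpha k\gamma\,F(\vz_t)+\tfrac{\alpha k(k-1)}{2}\gamma^2\,JF(\vz_t)F(\vz_t)+\mathcal{O}(\gamma^3)$. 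I would then introduce the continuous embedding $\vz_t=\vz(\gamma t)$ (chosen so the leading term reproduces the naive flow $\dot\vz=-\alpha k F(\vz)$), Taylor-expand $\vz(\gamma t+\gamma)=\vz(\gamma t)+\gamma\dot\vz+\tfrac{\gamma^2}{2}\ddot\vz+\mathcal{O}(\gamma^3)$, equate with the increment, cancel one power of $\gamma$, and multiply by $2/\gamma$ to obtain $\ddot\vz+\tfrac2\gamma\dot\vz+\tfrac{2\alpha k}{\gamma}F(\vz)-\alpha k(k-1)JF(\vz)F(\vz)=\mathcal{O}(\gamma)$; discarding the $\mathcal{O}(\gamma)$ remainder is exactly \eqref{eq:la-hrde}. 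As sanity checks, for $k\in\{2,3\}$ this matches \citet{chavdarova2023hrdes}, and the overdamped balance $\tfrac2\gamma\dot\vz\approx-\tfrac{2\alpha k}{\gamma}F(\vz)$ recovers the naive ODE as $\gamma\to0$.

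For convergence on \ref{eq:bilinear_game} I would specialize $F(\vz)=\mathbf{M}\vz$ with the constant skew-symmetric Jacobian $\mathbf{M}=\begin{psmallmatrix}0&\mA\\-\mA^\intercal&0\end{psmallmatrix}$, so $JF(\vz)F(\vz)=\mathbf{M}^2\vz$ and \eqref{eq:la-hrde} becomes the linear system $\ddot\vz+\tfrac2\gamma\dot\vz+\big(\tfrac{2\alpha k}{\gamma}\mathbf{M}-\alpha k(k-1)\mathbf{M}^2\big)\vz=0$. Diagonalizing $\mathbf{M}$ (eigenvalues $\pm i\sigma_j$, with $\sigma_j$ the singular values of $\mA$ and $\max_j\sigma_j=L$) decouples it into scalar ODEs $\ddot u+\tfrac2\gamma\dot u+\big(\tfrac{2\alpha k\,i\sigma_j}{\gamma}+\alpha k(k-1)\sigma_j^2\big)u=0$, with characteristic roots $s=\tfrac1\gamma\big(-1\pm\sqrt{\,1-2\alpha k\,i\sigma_j\gamma-\alpha k(k-1)\sigma_j^2\gamma^2\,}\big)$. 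I would then certify $\mathrm{Re}(s)<0$ for every mode---either directly from this formula or via the Routh--Hurwitz test for complex-coefficient quadratics---under the same budget $\gamma L\le\Gamma_k^\star(\alpha)$ that governs the discrete modal analysis, giving exponential decay of the HRDE and recovering the discrete-time conclusion.

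Two steps need care. The first is making the embedding rigorous: the $\mathcal{O}(\gamma^3)$ remainders must be controlled uniformly (boundedness of $F$, $JF$, and one further derivative on the relevant invariant set), and ``$\mathcal{O}(\gamma)$-HRDE'' should be pinned to a quantitative $\mathcal{O}(\gamma^2)$ one-step consistency statement as in \citet{chavdarova2023hrdes}; this is routine bookkeeping but must be stated. The genuine obstacle is the \ref{eq:bilinear_game} stability computation: because each per-mode quadratic has \emph{complex} coefficients, $\mathrm{Re}(s)$ is governed by the real part of a complex square root, and showing that the resulting admissibility region coincides with $\Gamma_k^\star(\alpha)$ from the discrete analysis---exactly, not merely to leading order in $\gamma$---is the delicate part; if an exact match is not available, the right deliverable is a precise statement of how the continuous-time and discrete-time thresholds relate.
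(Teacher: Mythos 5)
Your derivation of \eqref{eq:la-hrde} is correct and is essentially the paper's own recipe: unroll the $k$ inner \eqref{eq:gd} steps to first order in $\gamma$, use $\sum_{j=0}^{k-1} j = \tfrac{k(k-1)}{2}$, treat one LA outer update as a single time step of size $\delta=\gamma$, Taylor-expand the interpolant, and keep the $\mathcal{O}(\gamma)$ terms. The only difference is presentational: the paper derives the cases $k=2,3,4$ explicitly and then states the general-$k$ pattern (Table~\ref{tab:la_gda_hrdes_summary}), whereas you carry out the general-$k$ expansion in one pass, which is if anything slightly more complete. On the convergence-on-\eqref{eq:bilinear_game} part your route genuinely differs from the paper's: the paper takes the Laplace transform of the player-wise HRDEs, eliminates one player to get the transfer function, and applies Routh--Hurwitz to the resulting characteristic equation (plus a separate time-domain argument via a generalized Hurwitz array), while you diagonalize the skew Jacobian and analyze the per-mode complex-coefficient quadratics directly through the explicit root formula; both are valid, and your per-mode computation (using $(\Re\sqrt{w})^2=\tfrac{|w|+\Re w}{2}$) in fact reduces to the clean inequality $\alpha k c^2\bigl(\alpha k-(k-1)\bigr)<1$ with $c=\gamma\sigma_j$. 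The one miscalibrated expectation is your guess that the continuous-time admissibility region should coincide with the discrete budget $\gamma L\le\Gamma_k^\star(\alpha)$: it does not, and the paper does not claim it does. The continuous analysis is scale-free and yields the $\gamma$-independent condition $\alpha<\tfrac{k-1}{k}$ \eqref{eq:cond_hamiltonian} (which your own root formula delivers, since for $\alpha<1-\tfrac1k$ the inequality above holds for every $c$), matching only the small-$c$/leading-order behavior of the discrete threshold; the finite-step region $\Gamma_k^\star(\alpha)$ is recovered exclusively by the discrete modal analysis. You flagged this as the delicate point, so the fix is simply to state the scale-free condition as the deliverable rather than an exact match with $\Gamma_k^\star(\alpha)$.
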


\noindent
Using the above HRDE model, we next move to the frequency domain via the Laplace transform. 

\paragraph{LA for \ref{eq:bilinear_game}: Solution Trajectory \& Convergence}
We move to \eqref{eq:la-hrde} frequency dual via Laplace transform \eqref{eq:Laplace}, for \eqref{eq:bilinear_game}, then derive the solution trajectory through its inverse Laplace transform.

\begin{theorem}[Solution trajectories]
\label{theo:2}
Consider the~\ref{eq:bilinear_game} (with matrix $\mA$). Let $U$ be the orthogonal matrix from eigen decomposition of $\mA$ i.e., $\mA = \mU \mLambda \mU^\intercal$, where $\mLambda = \mathrm{diag}(\lambda_i)$. The trajectory of the individual players $\vx(t)$ and $\vy(t)$ of the~\eqref{eq:la-hrde} continuous time dynamics with parameters $(k, \alpha)$ and Gradient Descent with step size $\gamma$ as the base optimizer, are as follows:
\begin{align}
    \vx(t) &= -\frac{2k\alpha}{\gamma} (\mG \ast \vy)(t) + \mU \mD_x(t) \mU^\intercal \label{eq:trajectory_1} \tag{$x$-Sol} \\
    \vy(t) &= \frac{2k\alpha}{\gamma} (\mG * \vx)(t) + \mU \mD_y(t) \mU^\intercal \label{eq:trajectory_2} \tag{$y$-Sol}
\end{align}
where for a player $\mathbf{q} \in \{\vx, \vy\}$, $\mD_q(t)$ is a diagonal matrix whose $i$-th diagonal element is:
\vspace{-.5em}
\begin{align*}
    \big(\mD_q(t)\big)_{ii} = e^{-\frac{t}{\gamma}} \bigg[ \cosh(\omega_i t) &\mathbf{q}_i(0) \\
    &\hspace{-2.5em} + \frac{\sinh(\omega_i t)}{\omega_i} \Big( \dot{\mathbf{q}}_i(0) + \frac{\mathbf{q}_i(0)}{\gamma} \Big) \bigg] \,.
    \vspace{-.5em}
\end{align*}
Here, $*$ is the convolution operator, and we define
\vspace{-.5em}
\begin{align*}
    \omega_i &= \sqrt{\frac{1}{\gamma^2} - \alpha k(k-1) \lambda_i}, \\
    \mG(t) &= \mU \, \mathrm{diag}\left( \frac{e^{-t/\gamma} \sinh(\omega_i t)}{\omega_i} \right) \mU^\intercal \,,
    \vspace{-.5em}
\end{align*}
where $(\vx(0), \vy(0))$ and $(\dot{\vx}(0), \dot{\vy}(0))$ are the initial positions and momenta, respectively.
\end{theorem}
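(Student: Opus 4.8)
The plan is to specialize \eqref{eq:la-hrde} to \ref{eq:bilinear_game}, diagonalize the resulting constant-coefficient linear ODE with the orthogonal factor $\mU$, push each scalar mode to the frequency domain via the Laplace transform \eqref{eq:Laplace}, solve the algebraic relations there, and invert term by term. Because $JF$ is constant for \ref{eq:bilinear_game}, the HRDE is genuinely linear with constant coefficients, so each step is a routine (if bookkeeping-heavy) computation once the right reductions are set up.

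First I would substitute $F(\vx,\vy)=(\mA\vy,-\mA\vx)^{\intercal}$ and $JF=\begin{pmatrix}0&\mA\\-\mA&0\end{pmatrix}$ into \eqref{eq:la-hrde}; the curvature correction collapses to $JF\cdot F(\vz)=-(\mA^{2}\vx,\ \mA^{2}\vy)^{\intercal}$, so the dynamics split into $\ddot{\vx}=-\tfrac{2}{\gamma}\dot{\vx}-\tfrac{2k\alpha}{\gamma}\mA\vy-k(k-1)\alpha\,\mA^{2}\vx$ and the sign-flipped companion for $\vy$. Writing $\mA=\mU\mLambda\mU^{\intercal}$ and changing to eigen-coordinates $\vu=\mU^{\intercal}\vx$, $\vv=\mU^{\intercal}\vy$ decouples the system into $d$ scalar pairs: for mode $i$, $\ddot u_i=-\tfrac{2}{\gamma}\dot u_i-\tfrac{2k\alpha\lambda_i}{\gamma}v_i-k(k-1)\alpha\lambda_i^{2}u_i$ together with the companion equation for $v_i$ (identical quadratic self-term, opposite-sign cross-coupling).

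Next I would Laplace-transform each pair, using $\mathcal{L}\{\ddot u\}=s^{2}U(s)-s\,u(0)-\dot u(0)$ and $\mathcal{L}\{\dot u\}=sU(s)-u(0)$. Each mode becomes a $2\times2$ linear system in $(U_i(s),V_i(s))$ with diagonal entry the quadratic $P_i(s)=s^{2}+\tfrac{2}{\gamma}s+k(k-1)\alpha\lambda_i^{2}$, off-diagonal entries $\pm\tfrac{2k\alpha\lambda_i}{\gamma}$, and forcing $s\,u_i(0)+\dot u_i(0)+\tfrac{2}{\gamma}u_i(0)$ (the last summand being produced by the damping term under the transform). Rather than eliminating one unknown, which would push the denominator to the quartic $P_i(s)^{2}+(2k\alpha\lambda_i/\gamma)^{2}$ and obscure the structure, I would keep the coupling explicit: $U_i(s)=-\tfrac{2k\alpha\lambda_i}{\gamma}\,V_i(s)/P_i(s)+\big(s\,u_i(0)+\dot u_i(0)+\tfrac{2}{\gamma}u_i(0)\big)/P_i(s)$, and symmetrically for $V_i$. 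Completing the square, $P_i(s)=(s+\tfrac1\gamma)^{2}-\omega_i^{2}$, which identifies $\omega_i^{2}$ with $1/\gamma^{2}$ minus the constant term of $P_i$ --- the $\omega_i$ of the statement. Inverting with $\mathcal{L}^{-1}\{1/P_i\}=e^{-t/\gamma}\sinh(\omega_i t)/\omega_i$ and $\mathcal{L}^{-1}\{(s+\tfrac1\gamma)/P_i\}=e^{-t/\gamma}\cosh(\omega_i t)$ turns the forcing into $e^{-t/\gamma}\big[\cosh(\omega_i t)\,u_i(0)+\tfrac{\sinh(\omega_i t)}{\omega_i}\big(\dot u_i(0)+u_i(0)/\gamma\big)\big]$, i.e.\ the diagonal entry of $\mD_x(t)$, while the product $V_i(s)/P_i(s)$ inverts to the convolution $(g_i\ast v_i)(t)$ with kernel $g_i(t)=e^{-t/\gamma}\sinh(\omega_i t)/\omega_i$. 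Collecting over $i$ and conjugating back by the constant orthogonal $\mU$ --- which commutes with convolution --- reassembles the stated decomposition $\vx(t)=-\tfrac{2k\alpha}{\gamma}(\mG\ast\vy)(t)+\mU\mD_x(t)\mU^{\intercal}$ with $\mG(t)=\mU\,\mathrm{diag}(g_i(t))\,\mU^{\intercal}$, and symmetrically for $\vy$; degenerate modes ($\omega_i=0$ or purely imaginary) are covered by the analytic continuations $\sinh(\omega t)/\omega\to t$ and $\sinh,\cosh\mapsto\sin,\cos$.

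The delicate point is not any individual calculation but the status of the output: since the cross-coupling is left implicit, the ``solution trajectory'' is really a coupled Volterra identity that $(\vx,\vy)$ satisfy, so I would need to confirm it is equivalent to \eqref{eq:la-hrde} with the prescribed initial data --- cleanly done by differentiating the integral identities twice, recovering the ODE, and invoking uniqueness for the linear system (Picard--Lindel\"of). The remaining work is pure but error-prone bookkeeping: tracking the $\tfrac{2}{\gamma}$, $k$, $k(k-1)$ and $\lambda_i$ factors through the transform so that the prefactor $\tfrac{2k\alpha}{\gamma}$, the frequency $\omega_i$, the $u_i(0)/\gamma$ correction, and the spectral weights carried by $\mG$ all emerge with the right constants, and keeping the eigen-coordinate initial data $\mU^{\intercal}\vx(0),\mU^{\intercal}\dot{\vx}(0)$ consistent with the matrix form $\mU\mD_x(t)\mU^{\intercal}$ of the statement.
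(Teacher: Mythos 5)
Your proposal is correct and follows essentially the same route as the paper's own proof: substitute the bilinear operator into the HRDE, diagonalize with $\mU$, take the per-mode Laplace transform, keep the cross-coupling implicit so each player's transform is (forcing $+$ other player)$/\big((s+\tfrac1\gamma)^2-\omega_i^2\big)$, and invert with the $e^{-t/\gamma}\cosh$, $e^{-t/\gamma}\sinh/\omega_i$ pair to get the convolution-plus-homogeneous form (the paper merely transforms before diagonalizing, which is immaterial). Note that your completed square gives $\omega_i^2=\tfrac1{\gamma^2}-\alpha k(k-1)\lambda_i^2$, which agrees with the paper's own derivation but differs from the $\lambda_i$ (unsquared) appearing in the theorem statement — an inconsistency internal to the paper, not a flaw in your argument.
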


\noindent\emph{Interpretation.} 
Exponential terms drive convergence by damping oscillations, while periodic terms encode the equilibrium cycling characteristic of minimax problems. The dynamics are coupled, with $\vx(t)$ and $\vy(t)$ depending on each other, and the oscillation frequency determined by $\omega_0$.

For \eqref{eq:bilinear_game}, we derive a decoupled trajectory in Appendix \ref{app:proofs_cont} where $ \vx(t)$ becomes independent of $\vy(t)$, reducing the problem to pure minimization over $\vx$. 

\paragraph{GD \& LA Convergence Analysis}
In line with existing results~\citep{chavdarova2023hrdes}, pole-based stability analysis through Laplace transform of \eqref{eq:gda-hrde} confirms that \eqref{eq:gda-hrde} diverges on \eqref{eq:bilinear_game} for all $\gamma > 0$. Applying \eqref{eq:Laplace} to \eqref{eq:la-hrde} yields separate player-wise equations, which can be combined to obtain closed-form solution trajectories (see Appendix~\ref{app:proofs_cont}).

For \eqref{eq:bilinear_game}, substituting $\mathbf{Y}(s)$ into $\mathbf{X}(s)$ cancels the dependence on $\vy(t)$, so $\vx(t)$ depends on the opponent’s initial state, not its trajectory. This decoupling extends to affine variational inequalities.

\noindent
Importantly, \eqref{eq:Laplace} provides convergence conditions without requiring the initial values or solution trajectory of the corresponding HRDEs.
Convergence is determined by analyzing the poles of $\mathbf{X}(s)$. If the divergent pole (largest real part) lies in the negative half-plane, the trajectory $\vx(t)$ converges to $0$ as $t \to \infty$. For \ref{eq:LookAhead}, convergence for \ref{eq:bilinear_game} requires: 
\vspace{-.5em}
\begin{equation}\label{eq:cond_hamiltonian}\tag{BG-Cond}
    \alpha < \frac{k - 1}{k} \,. 
    \vspace{-.5em}
\end{equation}

\paragraph{Discussion.}
Both the \emph{modal-} and \emph{pole-}based stability analyses build on the same observation: rotational game dynamics give rise to oscillatory iterates. The modal view targets stability by shaping the dominant joint modes of the coupled system, whereas the pole-based view addresses stability by analyzing each player’s oscillation through their Laplace-domain transfer functions. Notably, as the case study above illustrates, both approaches arrive at the same convergence condition.

The continuous–time pole-based approach abstracts away discretization error effects, but offers greater interpretability and simpler analysis.
In contrast, the discrete-time modal approach is exact at the algorithmic level and provides explicit finite-step admissibility regions for \((k,\alpha,\gamma)\).

\section{Experiments}\label{sec:experiments}

\let\thefootnote\relax\footnotetext{Code implementation: \url{https://anonymous.4open.science/r/ModalLookAhead-D14B/}.}
We evaluate on two game setups:
A $d$-dimensional variant of \eqref{eq:bilinear_game}, where $\vA\in\mathbb{R}^{d\times d}$ has entries 
$\vA_{ij}\sim \mathcal{N}(0,\beta^2/d)$, and a structured convex–concave Quadratic Game (SC-SC) with controllable rotation and conditioning,
\[
f(x,y)=\tfrac{1}{2}\,\vx^\top(\eta_x\, \vI\,)\vx \;-\; \tfrac{1}{2}\,\vy^\top(\eta_y \, \vI\,)\vy \;+\; \vx^\top \vA \vy,
\]
where $\eta_x,\eta_y>0$ set the strong convexity/concavity in $x$ and $y$, and the bilinear coupling $A$ controls rotational strength. We construct $A=U\,\mathrm{diag}(\sigma)\,V^\top$, with $U,V\in\mathbb{R}^{d\times d}$, and $\sigma \in\mathbb{R}$. This parameterization allows systematic sweeps of rotation (via $\sigma$) relative to the contraction (via $\eta_x,\eta_y$). 
We experiment with dimension $d=100$ and stepsize $\gamma=0.01$.

We compare \emph{MoLA} with standard methods: \emph{Gradient Descent (GD)}, \emph{Extragradient (EG)}, \emph{Optimistic Gradient Descent (OGD)}, and \emph{LookAhead (LA)} with randomly chosen $k$. Further experimental details are provided in Appendix~\ref{app:exp_setup}.  
Each algorithm is run for $T$ base iterations. After every update, we record both the Euclidean distance to equilibrium $\mathrm{dist}(x_t,y_t)=\|(x_t,y_t)-(0,0)\|_2
$

and the cumulative CPU time under a fixed budget.

\begin{figure}
    \centering
    \includegraphics[width=.95\linewidth]{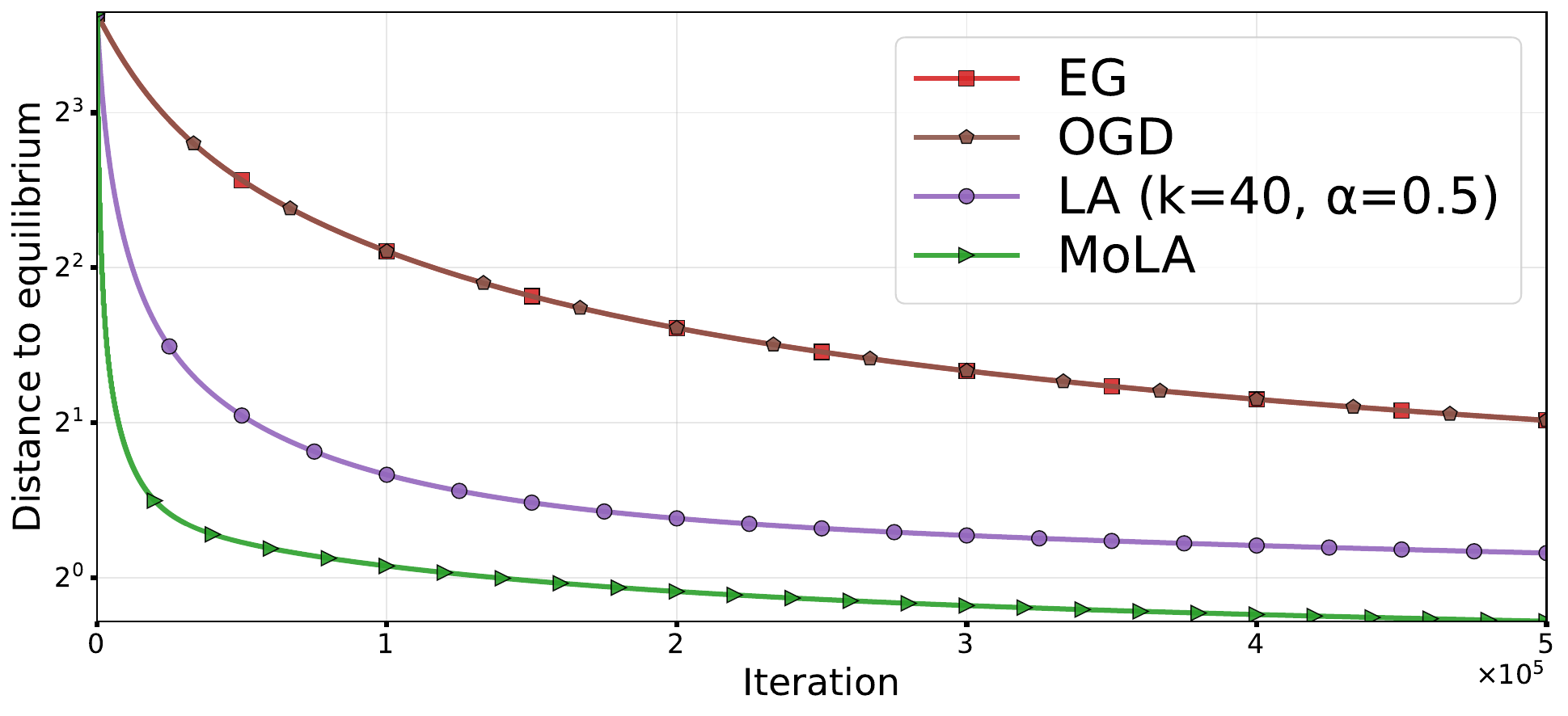}
    \vspace{-.7em}
    \caption{Convergence to equilibrium of \emph{EG, OGDA, LA, and MoLA} in BG with $d=100, \gamma=0.01$. GD is omitted since it diverges away.}
    \label{fig:dist_main}
    \vspace*{-0.2cm}
\end{figure}

\begin{figure}
    \centering
    \includegraphics[width=.95\linewidth]{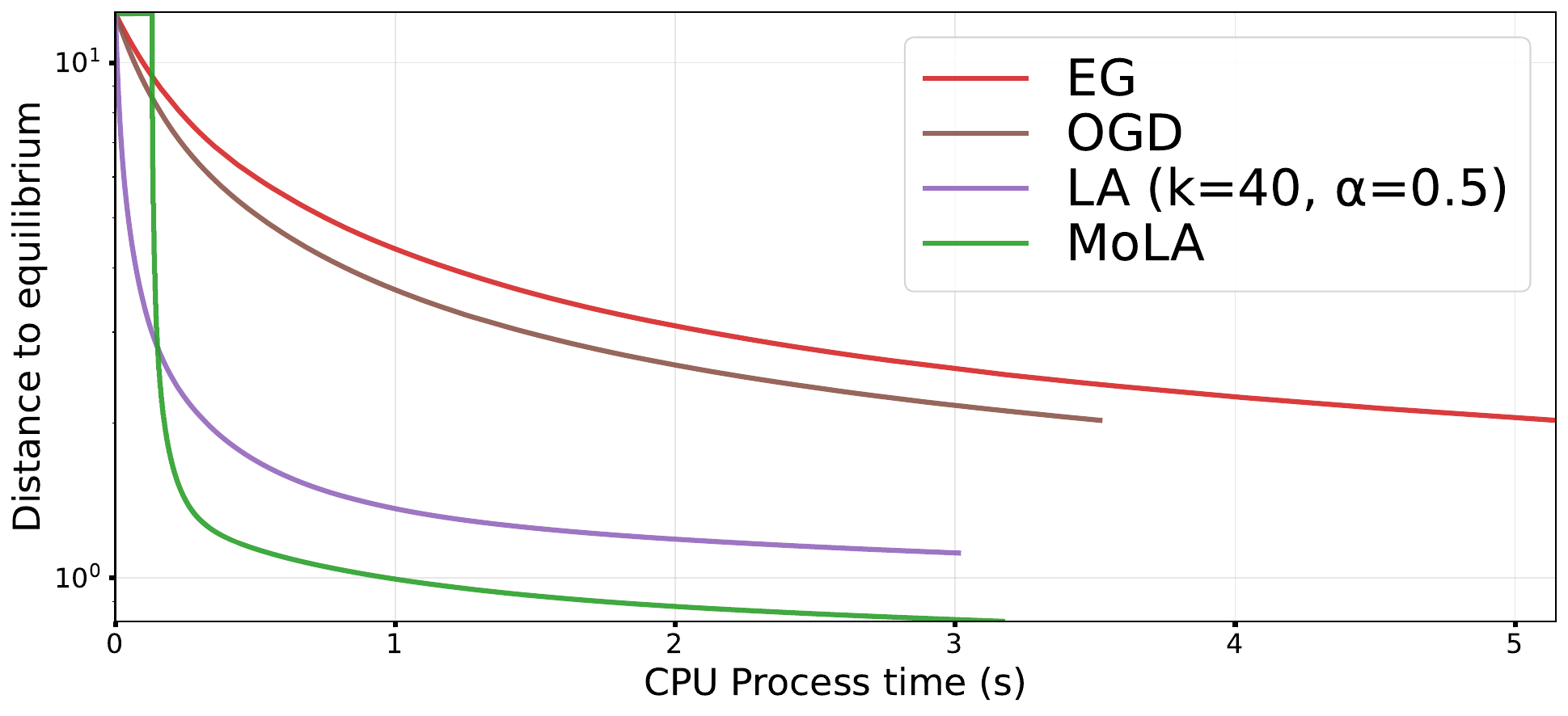}
    \vspace{-.7em}
    \caption{Convergence against CPU time of \emph{EG, OGDA, LA, and MoLA} in BG with $d=100, \gamma=0.01$.}
    \label{fig:conv_speed_main}
    \vspace*{-0.5cm}
\end{figure}

\paragraph{Results.}
Figures~\ref{fig:dist_main} and \ref{fig:conv_speed_main} summarize results for \eqref{eq:bilinear_game}.  
MoLA consistently stabilizes the rotational dynamics and achieves the fastest reduction in distance, particularly in the early stages. While LA improves over GD, its performance is sensitive to the choice of $(k,\alpha)$. In contrast, MoLA eliminates this sensitivity by selecting hyperparameters based on the problem’s eigenvalues. Overall, MoLA attains the smallest distance to equilibrium and the best time-to-accuracy trade-off, aligning with our theoretical insights.

Figure~\ref{fig:scsc_rot} reports results on an SC--SC game with $\eta_x=\eta_y=0.1$ and $\sigma\in[0.7,0.9]$, representing a strongly rotational regime. MoLA consistently attains the smallest distance to equilibrium and the fastest decrease per iteration, outperforming all baselines by a substantial margin while maintaining stable trajectories. For completeness, Figure~\ref{fig:scsc_pot} (Appendix) shows a balanced potential--rotational instance ($\eta_x=\eta_y=0.5$ and $\sigma = 0.5$), where MoLA again delivers the best performance, indicating robustness across regimes.

\begin{figure}
    \centering
    \includegraphics[width=.95\linewidth]{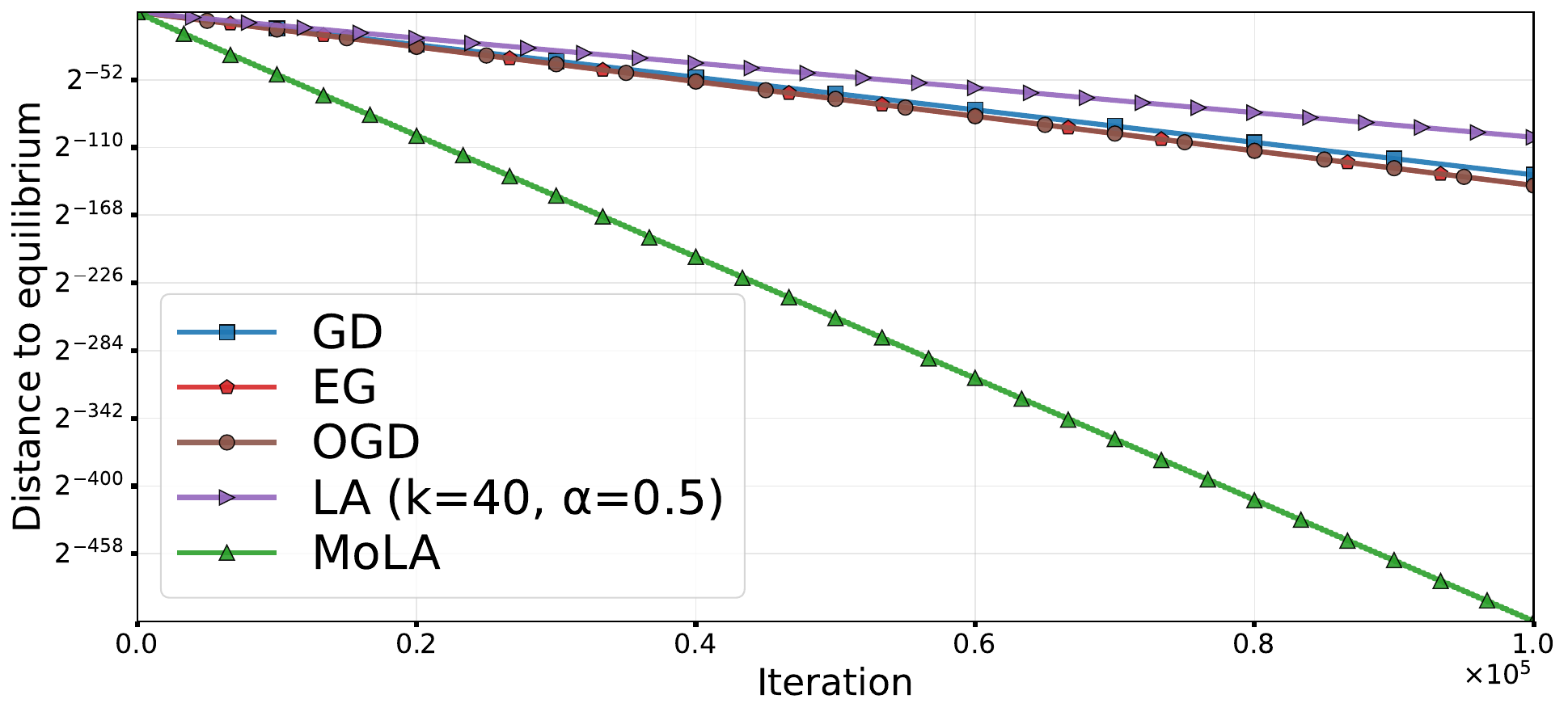}
    \vspace{-.7em}
    \caption{Distance to equilibrium vs. iterations for \emph{GD, EG, OGDA, LA, and MoLA} in a more rotational setting of SC-SC game with $d=100, \gamma=0.01$. The x-axis reports iteration count; the y-axis reports the Euclidean distance to the Nash equilibrium. 
}
    \label{fig:scsc_rot}
\end{figure}

\paragraph{Ablation over rotation.}
We examine how MoLA’s optimal hyperparameters vary with the rotation factor $\beta \in [0,1]$ in the Quadratic Game (QG):  
\vspace{-.5em}
\begin{equation}
    \tag{QG}
    \min_{\mathbf{x} \in \mathbb{R}^d}\max_{\mathbf{y} \in \mathbb{R}^d}\;
    (1-\beta)\,\mathbf{x}^\top \mathbf{x} + \beta\,\mathbf{x}^\top \mathbf{A}\mathbf{y} - (1-\beta)\,\mathbf{y}^\top \mathbf{y}.
\label{eq:guad_game}
\vspace{-.5em}
\end{equation}
For each $\beta$, we compute the Jacobian spectrum and apply \textsc{ChooseModalParams} to obtain $(k^\star,\alpha^\star)$.  
Figures~\ref{fig:k_values_rot} (\&  Appendix .~\ref{app:experiments}) show trends:  
when $\beta \approx 0$ (mostly potential), MoLA chooses a large horizon and $\alpha^\star \approx 1$, effectively disabling averaging.  As $\beta$ grows (more rotational), it selects smaller $k^\star$ (frequent anchoring) and $\alpha^\star < 1$, damping oscillations.

\begin{figure}
    \centering \includegraphics[width=0.65\linewidth]{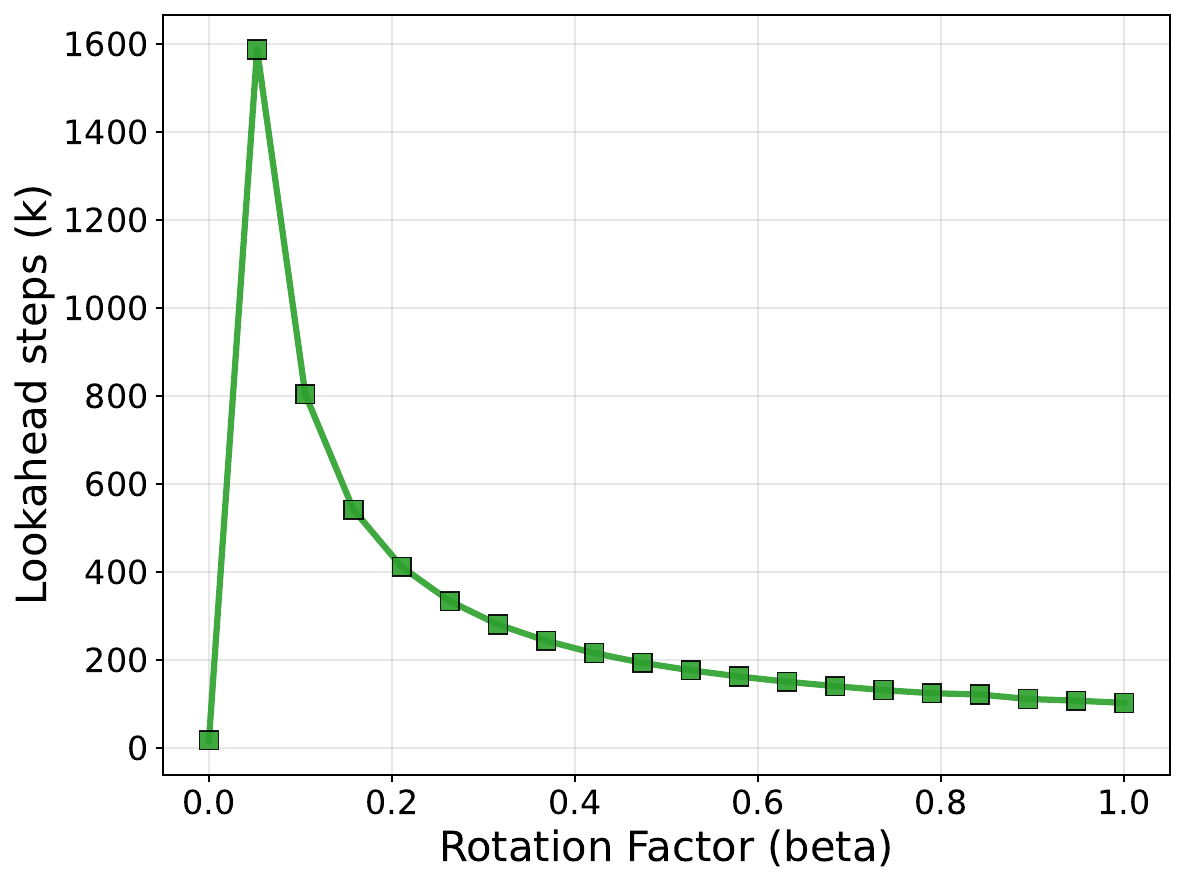}
    \vspace{-.7em}
    \caption{\textbf{Optimal LA horizon ($k$ vs. rotation factor $\beta$ in Quadratic Game.} Lower $\beta$ (more potential/convex–concave) favors larger horizons, while higher $\beta$ (more rotational;  $\beta \rightarrow 1$) 
    favors smaller $k$.}
    \label{fig:k_values_rot}
    \vspace*{-0.5cm}
\end{figure}

\section{Discussion}\label{sec:conclusion}
We studied Lookahead (LA) in game optimization through a modal perspective, motivated by the rotational nature of variational inequality dynamics. Our analysis provides convergence guarantees for standard LA and introduces \emph{MoLA}, a principled hyperparameter selection rule that maximizes stability via the dominant modes.

MoLA is motivated by the local modal picture: near equilibrium, game dynamics decompose into complex modes that evolve like rotating phasors, where each gradient step changes both phase and radius. After $k$ inner steps, LookAhead averages the rotated phasor with the anchor at $1$, and contraction is maximized when the accumulated rotation is close to $\pi$ so the average cancels oscillations. Because $k$ is integer-valued, one cannot in general simultaneously achieve ideal phase cancellation and amplitude matching, so MoLA adaptively selects $(k,\alpha)$ to best damp the dominant mode while keeping all relevant modes within the contraction region. This phase-alignment viewpoint explains the empirical stabilization: MoLA chooses the most permissive stable parameters that maximize oscillation attenuation (cf.\ Figures~\ref{fig:mola_illustration} and~\ref{fig:traj_illustration}; Appendix~\ref{app:calc-mola}).

Per iteration, MoLA retains the one-gradient-per-step structure of LookAhead: a cycle of length $k$ uses $k$ gradient evaluations, unlike extra-gradient methods which require two evaluations per update. Its additional overhead is a lightweight spectral estimate of the dominant Jacobian mode (e.g., a few power iterations), performed only once per LookAhead cycle and easily amortized by updating intermittently. In practice, this yields a favorable trade-off: the eigen-estimation cost of the Jacobian is typically outweighed by the ability to select the most permissive stable hyperparameters, reducing the total number of gradient evaluations needed to reach a target accuracy in strongly rotational regimes.

Experiments show that MoLA consistently stabilizes oscillatory dynamics and achieves faster convergence than classical baselines.
More broadly, our work underscores the value of frequency-domain viewpoints for games. Modal analysis gives an exact, discrete-time characterization that complements continuous-time pole-based approaches and offers practical guidance for hyperparameter tuning.

Future work includes developing efficient spectral approximations to scale MoLA, periodically re-estimating $(k,\alpha)$ in large-scale settings, and testing MoLA in applications such as GANs and multi-agent reinforcement learning.
In summary, we establish a modal framework for analyzing and tuning LA in games, bridging theory with practice and enabling scalable, adaptive extensions.
\FloatBarrier

\section*{Acknowledgements}
TC was partly supported by the Vienna Science and Technology Fund (WWTF) [Grant ID: 10.47379/VRG25019] and the FAIR (Future Artificial Intelligence Research) project, funded by the NextGenerationEU program within the PNRR-PE-AI scheme (M4C2, Investment 1.3, Line on Artificial Intelligence).
RB gratefully acknowledges the Gauss Centre for Supercomputing e.V. (www.gauss-centre.eu) for funding this project by providing computing time on the GCS Supercomputer JUWELS~\citep{JUWELS} at Jülich Supercomputing Centre (JSC). 
RB is also grateful for funding from the European Research Council (ERC) under the Horizon Europe Framework Programme (HORIZON) for proposal number 101116395 SPARSE-ML.

\bibliographystyle{plainnat}  
\bibliography{refs}

@article{JUWELS,
author = {{J\"{u}lich Supercomputing Centre}},
title = {{JUWELS Cluster and Booster: Exascale Pathfinder with Modular Supercomputing Architecture at Juelich Supercomputing Centre}},
journal = {Journal of large-scale research facilities},
number = {A138},
volume = {7},
doi = {10.17815/jlsrf-7-183},
url = {http://dx.doi.org/10.17815/jlsrf-7-183},
year = {2021}
}

@article{popov1980,
  title={A modification of the Arrow–Hurwicz method for search of saddle points},
  author={Popov, Leonid Denisovich},
  journal={Mathematical Notes of the Academy of Sciences of the USSR, 28(5):845–848},
  year={1980}
}

@inproceedings{littman1994,
author = {Littman, Michael L.},
title = {Markov games as a framework for multi-agent reinforcement learning},
year = {1994},
isbn = {1558603352},
publisher = {Morgan Kaufmann Publishers Inc.},
booktitle = {ICML},
pages = {157–163},
numpages = {7},
series = {ICML'94}
}

@article{shapley1953,
author = {Lloyd S. Shapley },
title = {Stochastic Games*},
journal = {Proceedings of the National Academy of Sciences},
volume = {39},
number = {10},
pages = {1095-1100},
year = {1953},
doi = {10.1073/pnas.39.10.1095},
}

@book{bertsekas2021rollout,
  title={Rollout, Policy Iteration, and Distributed Reinforcement Learning},
  author={Bertsekas, D.},
  isbn={9781886529076},
  series={Athena scientific optimization and computation series},
  year={2021},
  publisher={Athena Scientific}
}

@article{korpelevich1976extragradient,
  title={The extragradient method for finding saddle points and other problems},
  author={Korpelevich, Galina M},
  journal={Matecon},
  volume={12},
  pages={747--756},
  year={1976}
}

@inproceedings{chavdarova2021lamm,
title={{Taming GANs with Lookahead-Minmax}},
author={Tatjana Chavdarova and Matteo Pagliardini and Sebastian U Stich and Fran{\c{c}}ois Fleuret and Martin Jaggi},
booktitle={ICLR},
year={2021}
}

@article{article,
author = {Goodfellow, Ian and Pouget-Abadie, Jean and Mirza, Mehdi and Xu, Bing and Warde-Farley, David and Ozair, Sherjil and Courville, Aaron and Bengio, Y.},
year = {2014},

title = {Generative Adversarial Networks},
journal = {Advances in Neural Information Processing Systems},

}

@inproceedings{   goodfellow2015explainingharnessingadversarialexamples,
      title={Explaining and Harnessing Adversarial Examples}, 
      author={Ian J. Goodfellow and Jonathon Shlens and Christian Szegedy},
      year={2015},
    booktitle ={ICLR}
      
}

@inproceedings{madry2019deeplearningmodelsresistant,
      title={Towards Deep Learning Models Resistant to Adversarial Attacks}, 
      author={Aleksander Madry and Aleksandar Makelov and Ludwig Schmidt and Dimitris Tsipras and Adrian Vladu},
      booktitle={ICLR},
      year={2018},
}

@article{stampacchia1964formes,
author = {Stampacchia, Guido},
title = {Formes Bilineaires Coercitives Sur Les Ensembles Convexes},
journal = {Acad\'emie des Sciences de Paris},
volume = {258},
pages = {4413--4416},
year = {1964}
}

@book{facchinei2003finite,
  title={Finite-dimensional Variational Inequalities and Complementarity Problems},
  author={Facchinei, Francisco and Pang, Jong-Shi},
  year={2003},
  publisher={Springer}
}

@inproceedings{chavdarova2023hrdes,
 title   = {Last-Iterate Convergence of Saddle Point Optimizers via High-Resolution Differential Equations},
 author  = {Tatjana Chavdarova and Michael I. Jordan and Manolis Zampetakis},
 booktitle = {Minimax Theory and its Applications},
 year    = {2023},
}

@article{zhang2019lookaheadoptimizerksteps,
      title={Lookahead Optimizer: k steps forward, 1 step back}, 
      author={Michael R. Zhang and James Lucas and Geoffrey Hinton and Jimmy Ba},
      year={2019},
      journal={arXiv:1907.08610},
      archivePrefix={arXiv},
      url={}, 
}

@article{sidahmed2025,
      title={Addressing Rotational Learning Dynamics in Multi-Agent Reinforcement Learning}, 
      author={Baraah A. M. Sidahmed and Tatjana Chavdarova},
      year={2025},
      journal={arXiv:2410.07976},
}

@article{ryu2019ode,
  journal = {arXiv:1905.10899},  
  author = {Ryu, Ernest K. and Yuan, Kun and Yin, Wotao},
  title = {ODE Analysis of Stochastic Gradient Methods with Optimism and Anchoring for Minimax Problems},
  year = {2019},
}

@InProceedings{diakonikolas2020halpern,
title = {Halpern Iteration for Near-Optimal and Parameter-Free Monotone Inclusion and Strong Solutions to Variational Inequalities}, 
booktitle = {COLT}, 
year = {2020},
volume = {125}, 
author = {Diakonikolas, Jelena}, 
}

@article{halpern1967,
author = {Benjamin Halpern},
title = {{Fixed points of nonexpanding maps}},
volume = {73},
journal = {Bulletin of the American Mathematical Society},
publisher = {American Mathematical Society},
pages = {957 -- 961},
year = {1967},
}

@article{balduzzi2018mechanicsnplayerdifferentiablegames,
      title={The Mechanics of n-Player Differentiable Games}, 
      author={David Balduzzi and Sebastien Racaniere and James Martens and Jakob Foerster and Karl Tuyls and Thore Graepel},
      year={2018},
      journal ={ICML}
}

@inproceedings{Anagnostides2021FrequencyDomainRO,
  title={Frequency-Domain Representation of First-Order Methods: A Simple and Robust Framework of Analysis},
  author={Ioannis Anagnostides and Ioannis Panageas},
  booktitle={SIAM Symposium on Simplicity in Algorithms},
  year={2021},
}

@InProceedings{ha2022lookahead,
  title = 	 { On Convergence of Lookahead in Smooth Games },
  author =       {Ha, Junsoo and Kim, Gunhee},
  booktitle = 	 {AISTATS},
  pages = 	 {4659--4684},
  year = 	 {2022},
  volume = 	 {151},
  series = 	 {Proceedings of Machine Learning Research},
  month = 	 {28--30 Mar},
  publisher =    {PMLR},
}

@book{bauschke2011convex,
  title={Convex Analysis and Monotone Operator Theory in Hilbert Spaces},
  author={Bauschke, Heinz H. and Combettes, Patrick L.},
  year={2011},
isbn = {1441994661},
publisher = {Springer Publishing Company, Incorporated},
}

%%%%%%%%%%%%%%%%%%%%%%%%%%%%%%%%%%%%%%%%%%%%%%%%%%%%%%%%%%%%
\clearpage
\section*{Checklist}

The checklist follows the references. For each question, choose your answer from the three possible options: Yes, No, Not Applicable.  You are encouraged to include a justification to your answer, either by referencing the appropriate section of your paper or providing a brief inline description (1-2 sentences). 
Please do not modify the questions.  Note that the Checklist section does not count towards the page limit. Not including the checklist in the first submission won't result in desk rejection, although in such case we will ask you to upload it during the author response period and include it in camera ready (if accepted).

\begin{enumerate}

  \item For all models and algorithms presented, check if you include:
  \begin{enumerate}
    \item A clear description of the mathematical setting, assumptions, algorithm, and/or model. [Yes. A clear description of the preliminaries has been provided in Section \ref{sec:prelim}.]
    \item An analysis of the properties and complexity (time, space, sample size) of any algorithm. [Yes]
    \item (Optional) Anonymized source code, with specification of all dependencies, including external libraries. [Yes]
  \end{enumerate}

  \item For any theoretical claim, check if you include:
  \begin{enumerate}
    \item Statements of the full set of assumptions of all theoretical results. [Yes]
    \item Complete proofs of all theoretical results. [Yes]
    \item Clear explanations of any assumptions. [Yes]     
  \end{enumerate}

  \item For all figures and tables that present empirical results, check if you include:
  \begin{enumerate}
    \item The code, data, and instructions needed to reproduce the main experimental results (either in the supplemental material or as a URL). [Yes]
    \item All the training details (e.g., data splits, hyperparameters, how they were chosen). [Yes]
    \item A clear definition of the specific measure or statistics and error bars (e.g., with respect to the random seed after running experiments multiple times). [Yes]
    \item A description of the computing infrastructure used. (e.g., type of GPUs, internal cluster, or cloud provider). [Not Applicable]
  \end{enumerate}

  \item If you are using existing assets (e.g., code, data, models) or curating/releasing new assets, check if you include:
  \begin{enumerate}
    \item Citations of the creator If your work uses existing assets. [Yes]
    \item The license information of the assets, if applicable. [Not Applicable]
    \item New assets either in the supplemental material or as a URL, if applicable. [Not Applicable]
    \item Information about consent from data providers/curators. [Not Applicable]
    \item Discussion of sensible content if applicable, e.g., personally identifiable information or offensive content. [Not Applicable]
  \end{enumerate}

  \item If you used crowdsourcing or conducted research with human subjects, check if you include:
  \begin{enumerate}
    \item The full text of instructions given to participants and screenshots. [Not Applicable]
    \item Descriptions of potential participant risks, with links to Institutional Review Board (IRB) approvals if applicable. [Not Applicable]
    \item The estimated hourly wage paid to participants and the total amount spent on participant compensation. [Not Applicable]
  \end{enumerate}

\end{enumerate}

\clearpage
\appendix
\thispagestyle{empty}

\onecolumn
% \aistatstitle{Instructions for Paper Submissions to AISTATS 2026: \\
% Supplementary Materials}
\aistatstitle{Supplementary Material: \\ Frequency-Based Hyperparameter Selection in Games}

\section{Extended Preliminaries}\label{app:background}

In this section, we present an extended background on transform- and operator-based tools, such as the convolution, Laplace, and 
$Z$-transforms—that underpin our spectral and modal analysis of game dynamics. We also provide an extended overview of some of the gradient methods that perform well for VIs, and discuss the measure used to quantify the convergence rate.

\subsection{Foundations of Transforms}

This section formally introduces the convolution operator and reviews key properties of the Laplace transform—along with its inverse—that underpin the results in Section~\ref{sec:laplace-based_analysis} and Theorem~\ref{theo:2}. We also include an extended discussion of the $Z$-transform, which serves as the basis for our discrete-time analysis and the proposed MoLA method (Section~\ref{sec:modal}).

\paragraph{Convolution operator.}
The convolution operator plays a fundamental role in control theory, as it characterizes systems whose outputs at a given time depend on the accumulated influence of past inputs.
Let \( f(t) \) and \( g(t) \) be functions defined for \( t \geq 0 \), \textit{i.e.}, \( f(t)\colon [0, \infty) \to \mathbb{R} \) and \( g(t)\colon [0, \infty) \to \mathbb{R} \). Their convolution is defined as:
\begin{equation}
\label{convolution} \tag{CONV}
    (f * g)(t) = \int_{0}^{t} f(\tau) g(t - \tau) \, d\tau, \quad t \geq 0 \,.
\end{equation}

\paragraph{Laplace transform.}
For a function $\vx(t)$, the Laplace transform is defined as 
\begin{equation}
\tag{LT}
\mathcal{L}\{\vx(t)\} = \mathbf{X}(s) = \int_{0}^{\infty} \vx(t) e^{-st} \, dt
\end{equation}  
where $t$ denotes the time and $s \in \mathbb{C}$ is the complex frequency variable.

The inverse Laplace transform recovers the original time-domain signal from its frequency-domain representation. It is given by the Bromwich integral (also known as the inverse Laplace integral):
\begin{equation}
\label{inverse-laplace}\tag{iLT}
    \mathcal{L}^{-1}\{\mathbf{X}(s)\} = \vx(t) = \frac{1}{2\pi i} \int_{c-i\infty}^{c+i\infty} \mathbf{X}(s) e^{st} \, ds \,,
\end{equation}
where the contour of integration is taken along a vertical line in the complex $s$-plane in the region of convergence (ROC) of $\mathbf{X}(s)$. In other words, the path of integration runs \emph{parallel to the imaginary axis}, ensuring that the integral converges and uniquely defines the inverse transform.

A key property of the Laplace transform is that it converts convolution in the time domain into multiplication in the frequency domain.
If \( \mathcal{L}\{f(t)\} = F(s) \) and \( \mathcal{L}\{g(t)\} = G(s) \), then:
\[
\mathcal{L}\{(f * g)(t)\} = F(s) G(s) \,.
\]

Thus, the convolution of two signals in time corresponds to the product of their transforms in the frequency domain. Conversely, the inverse Laplace transform of a product yields a convolution:
\[
\mathcal{L}^{-1}\{F(s) G(s)\} = (f * g)(t) = \int_{0}^{t} f(\tau) g(t - \tau) \, d\tau  \,.
\]

Hence, the product of two functions in the frequency domain corresponds to the convolution of their inverse Laplace transforms in the time domain.
This duality underpins many analytical tools in systems and control theory, simplifying the study of linear time-invariant (LTI) dynamics.

\paragraph{$Z$-transform.}
For a discrete-time signal \(x[k]\) defined for \(k\ge 0\), the (one-sided) $Z$-transform is defined as
\begin{equation}
\label{z-def}\tag{Z}
\mathcal{Z}\{x[k]\} = X(z) = \sum_{k=0}^{\infty} x[k]\, z^{-k} \,,
\end{equation}
where \(z \in \mathbb{C}\). The series converges on a ROC, an annulus \(\{z: r_1<|z|<r_2\}\) determined by the growth or the decay of \(x[k]\).

The inverse $Z$-transform (for any \(r\) in the ROC) is given by the contour integral
\begin{equation}
\label{z-inv}\tag{iZ}
\mathcal{Z}^{-1}\{X(z)\} = x[k] = \frac{1}{2\pi i}\oint_{|z|=r} X(z)\, z^{k-1}\,dz \,.
\end{equation}

For the unilateral discrete convolution
\[
(x * y)[k] \;=\; \sum_{n=0}^{k} x[n]\,y[k-n], \qquad k\ge 0,
\]
we have
\begin{equation}
\label{z-conv}\tag{Z-Conv}
\mathcal{Z}\{(x*y)[k]\} = X(z)\,Y(z) \,,
\end{equation}
with ROC containing at least the intersection of the individual ROCs (with standard caveats near poles).

Evaluating $X(z)$ on the unit circle gives the discrete-time Fourier transform (DTFT):
\begin{equation}
\label{dtft}\tag{DTFT}
X(e^{i\omega}) = \sum_{k=0}^\infty x[k]\,e^{-i\omega k} = X(z)\big|_{z=e^{i\omega}} \,,
\end{equation}
whenever the unit circle lies in the ROC. 
For a causal LTI system with transfer function \(H(z)=\frac{B(z)}{A(z)}\), \emph{bounded input bounded output (BIBO)} stability holds if and only if all poles lie strictly inside the unit circle \(|z|<1\).

\paragraph{Relation between Laplace and $Z$-domain stability.}
With sampling period \(\Delta>0\) and \(z = e^{s\Delta}\),
\begin{equation}
\label{bilinear-map}\tag{S2Z}
\text{continuous-time } s\text{-plane} \;\longleftrightarrow\; \text{discrete-time } z\text{-plane},
\end{equation}
so left-half-plane stability (\(\Re(s)<0\)) maps inside the unit disk (\(|z|<1\)). This correspondence underlies the discretization and pole–zero mapping techniques commonly used in control and optimization.

\paragraph{Dominant mode.}
We define the \emph{dominant mode}, which is the eigenvalue with the largest modulus:
\begin{equation}\label{eq:dom_mode}\tag{$\lambda_{ \mathrm{dom}}$ }
    \lambda_{\mathrm{dom}} \colon= \arg\max_{\lambda_i \in \sigma(J)} |\lambda_i| \,,
\end{equation}
This eigenvalue represents the strongest oscillatory component of the system, with its magnitude determining the maximal radius of rotation in the complex plane.

\subsection{VI Background}

This section provides additional background on variational inequality (VI) methods and convergence metrics used throughout our analysis. We begin by revisiting two foundational operator-based algorithms---Extragradient (EG) and Optimistic Gradient Descent (OGD)---which extend gradient descent by incorporating predictive or optimistic updates to handle non-monotone or saddle-point game structures. We then introduce the \emph{primal-dual gap} and its \emph{restricted variant}, which serve as standard measures of suboptimality in convex–concave optimization problems.

\subsubsection{Additional VI Methods}\label{app:vi_mds}

\paragraph{Extragradient (EG).} The Extragradient method~\citep{korpelevich1976extragradient} augments gradient descent with a one-step extrapolated update. Specifically, it first computes an intermediate prediction (extrapolated iterate) via Eq.~\eqref{eq:gd}:
$\vz_{t+\frac{1}{2}} \!=\! 
 \vz_t - \eta F(\vz_t) \,, 
$
and then uses the operator evaluated at this predicted point to perform the actual update:
\begin{equation} \label{eq:extragradient} \tag{EG}
\vz_{t+1}=\vz_t-\gamma\,F(\vz_{t+\frac{1}{2}})    \,.
\end{equation}
This two-step structure enables EG to achieve convergence even in settings where standard gradient descent diverges, such as in bilinear games~\citep{korpelevich1976extragradient}.

\paragraph{Optimistic Gradient Descent (OGD).}
The Optimistic Gradient Descent method~\citep{popov1980} leverages \emph{optimism} by extrapolating the next gradient from the two most recent evaluations. Its update rule is:
\begin{equation} \label{eq:ogda} \tag{OGD}
    \vz_{t+1} = 
    \vz_{t} - 2\gamma F(\vz_t) + \gamma F(\vz_{t-1}) 
    \,.
\end{equation}
This formulation can be interpreted as incorporating a correction term that anticipates the operator’s future behavior, often leading to improved stability and faster convergence in game optimization.

\subsection{Convergence Measure}\label{app:gap_function}

In convex–concave optimization problems, the suboptimality of a candidate solution $(\vx, \vy)$ is commonly measured using the \emph{primal–dual gap}, defined as 
\[
\mathcal{G}(\vx, \vy) := \max_{\vy' \in \mathcal{Y}} \ell(\vx, \vy') - \min_{\vx' \in \mathcal{X}} \ell(\vx', \vy) \,.
\]
By definition, \(  \mathcal{G}(\vx, \vy) \geq 0 \) for all \(  \vx, \vy \in \mathcal{X} \times \mathcal{Y} \), and equality holds if and only if $(\vx, \vy)$ is a saddle point of \( \ell \).

However, when the feasible sets  $\mathcal{X}$ and $\mathcal{Y}$ are unbounded, the primal-dual gap becomes infinite except at saddle points, limiting its practical use. To address this issue, a \emph{restricted primal-dual gap} is often employed. Given bounded subsets $\mathcal{D}_x \subset \mathcal{X}$ and $\mathcal{D}_y \subset \mathcal{Y}$, it is defiend as
\[
\mathcal{G}_{\mathcal{D}_x \times \mathcal{D}_y}(\vx, \vy) := \max_{\vy' \in \mathcal{D}_y} \ell(\vx, \vy') - \min_{\vx' \in \mathcal{D}_x} \ell(\vx', \vy) \,.
\]

The restricted version satisfies two key properties: 
\begin{enumerate}[label=(\roman*),font=\itshape]
    \item If $(\vx^*, \vy^*) \in \mathcal{D}_x \times \mathcal{D}_y$, then $\mathcal{G}_{\mathcal{D}_x \times \mathcal{D}_y}(\vx, \vy) \ge 0$ for any $(\vx, \vy) \in \mathcal{X} \times \mathcal{Y}$;  
    \item If $(\vx, \vy)$ is in the interior of $\mathcal{D}_x \times \mathcal{D}_y$, then $\mathcal{G}_{\mathcal{D}_x \times \mathcal{D}_y}(\vx, \vy) = 0$ if and only if $(\vx, \vy)$ is a saddle point. 
\end{enumerate}
By selecting $\mathcal{D}_x \times \mathcal{D}_y$ sufficiently large to contain this saddle point $(\vx^\star, \vy^\star)$, the restricted primal–dual gap provides a well-defined and interpretable measure of convergence in unbounded domains.

\clearpage

\section{Missing Proofs of Theorems in Discrete Time}\label{app:proofs_disc}

This section presents the detailed proofs and derivations of Lemmas~\ref{lem:bilinear-reduction} and \ref{lem:conv-cond} and Theorems~\ref {theo:la-conv} and \ref{theo:mola-conv} from Section~\S\ref{sec:modal}. We begin by restating the setup and notation used in the main text, followed by the formal statements and complete proofs of the results.

\paragraph{Setup.}
 Let $D \subset \mathbb{R}^d, D_x \
\subset \mathbb{R}^d, D_y \subset \mathbb{R}^d$ be bounded sets. The players are vectors $\vx \in D_x \subseteq \mathbb{R}^{d_x}, \, \vy\in D_y \subseteq \mathbb{R}^{d_y}$. Without loss of generality, let $d_x + d_y = d$ and hence $D_x \,\times \, D_y \subseteq D$. A mapping $F\colon D\to\mathbb{R}^d$ is \emph{monotone} if $\langle F(\vu)-F(\vv),\,\vu-\vv\rangle\ge 0$ for all $\vu,\vv$, and \emph{$L$-Lipschitz} if $\|F(\vu)-F(\vv)\|\le L\|\vu-\vv\|$ for all $\vu,\vv$ (from definitions \ref{eq:monotone} and \ref{eq:lipschitz}). We fix $\gamma>0$ (GD stepsize), a positive integer $k\ge 2$ (GD steps), and an \emph{averaging} parameter $\alpha\in(0,1]$. We define
\[
F^{\mathrm{GD}}:=\vI-\gamma F,\qquad
F^{\mathrm{LA}}_{k,\alpha}:=(1-\alpha)
\vI+\alpha\big(F^{\mathrm{GD}}\big)^{k}.
\]
We assume the solution set $Z^\star:=\{\vz\in D:\ F(\vz)=0\}$ is nonempty. Note that $\vz^\star\in Z^\star$ iff $F^{\mathrm{GD}}(\vz^\star)=\vz^\star$ iff $F^{\mathrm{LA}}_{k,\alpha}(\vz^\star)=\vz^\star$. 

\textbf{Notation.} Throughout, the norm symbol $\|\cdot\|$ is used to represent the Euclidean ($L_2$) norm when applied to vectors, and the corresponding induced spectral norm $\|\cdot\|_{\mathrm{op}}$ when applied to matrices.
For complex numbers $w \in \mathbb{C}$, the real part is denoted by $\Re (w)$ and the imaginary part by $\Im (w)$.

\subsection{Proof of Lemma 1}
\begin{lemmarep}[Restatement of Lemma ~\ref{lem:bilinear-reduction}]
 Let $F\colon D\to\mathbb{R}^d$ be a $C^1$, monotone, and $L$-Lipschitz (\ref{eq:monotone}, \ref{eq:lipschitz}) operator. For a fixed $k\ge2$, $\alpha\in(0,1)$, $\gamma>0$, and any $\vu\neq \vv, \quad \vu \in D_x, \vv \in D_y$, we define the averaged Jacobian
$\vH(\vu,\vv)=\!\int_0^1 JF\!\big(\vv+\tau(\vu-\vv)\big)\,d\tau$, where $JF\colon D \to \mathbb{R}^{d \times d}$ is the Jacobian of the operator $F$.
Then, the supremum of $\big\|F^{\mathrm{LA}}_{k,\alpha}\big\|$ is attained at $2\times2$ skew blocks $H=\omega J$ with $J=\begin{psmallmatrix}0&1\\-1&0\end{psmallmatrix}$. Hence, nonexpansiveness for the $C^1$ monotone $L$-Lipschitz class is equivalent to checking the family $\{\omega \, JF:\ |\omega|\le L\}$.
\end{lemmarep}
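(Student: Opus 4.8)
The plan is to reduce the worst-case Lipschitz analysis of the LookAhead operator to a one-parameter family of skew-symmetric $2\times 2$ blocks, by exploiting (a) the integral mean-value representation of $F(\vu)-F(\vv)$ through the averaged Jacobian $\vH$, (b) the structural properties that monotonicity and Lipschitzness impose on $\vH$, and (c) the real Schur decomposition to block-diagonalize the relevant matrix.

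First I would write, for any $\vu\neq\vv$ with $\Delta:=\vu-\vv$, the exact identity $F(\vu)-F(\vv)=\vH(\vu,\vv)\,\Delta$, where $\vH=\int_0^1 JF(\vv+\tau\Delta)\,d\tau$. Monotonicity of $F$ gives $\langle JF(\vz)\vw,\vw\rangle\ge 0$ for all $\vz,\vw$, hence the symmetric part $\mathrm{sym}\,\vH=\tfrac12(\vH+\vH^\top)\succeq 0$; $L$-Lipschitzness gives $\|\vH\|_{\mathrm{op}}\le L$. One LA step applied to the displacement $\Delta$ is the linear map $M:=(1-\alpha)\vI+\alpha(\vI-\gamma\vH)^{k}$, and $\|F^{\mathrm{LA}}_{k,\alpha}\|$ is the supremum of $\|M\|_{\mathrm{op}}$ over all admissible $\vH$ (i.e.\ all $\vH$ that arise as such averaged Jacobians — it suffices to range over all real matrices with $\mathrm{sym}\,\vH\succeq 0$ and $\|\vH\|_{\mathrm{op}}\le L$, since constant Jacobians are realizable by affine $F$). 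Next I would apply the real Schur decomposition (Def.~\ref{def:schur}): $\vH=\mathbf{Q}\mathbf{T}\mathbf{Q}^\top$ with $\mathbf{T}$ block upper triangular. Because $M=\mathbf{Q}\,q(\mathbf{T})\,\mathbf{Q}^\top$ for the polynomial $q(w)=(1-\alpha)+\alpha(1-\gamma w)^k$, and the operator norm is orthogonally invariant, it suffices to bound $\|q(\mathbf{T})\|_{\mathrm{op}}$. The key reduction is that the worst case is attained on a single $2\times 2$ (or $1\times 1$) diagonal block: an upper-triangular $q(\mathbf{T})$ has operator norm controlled by (and, in the worst configuration, equal to) that of its diagonal blocks, so the supremum over the class reduces to the supremum over $2\times 2$ blocks of the form $\bigl[\begin{smallmatrix}a&-b\\ b&a\end{smallmatrix}\bigr]$ with $a\ge 0$ (from $\mathrm{sym}\,\vH\succeq 0$) and $a^2+b^2\le L^2$ (from $\|\vH\|_{\mathrm{op}}\le L$), whose image under $q$ has modal multiplier $(1-\alpha)+\alpha(1-\gamma(a+ib))^k$.

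Then I would carry out the scalar optimization over $(a,b)$. Fix $|b|$ and view $\phi(a):=\bigl|(1-\alpha)+\alpha(1-\gamma(a+ib))^k\bigr|$ as a function of $a\ge 0$. Increasing $a$ moves the complex number $1-\gamma(a+ib)$ directly toward the origin (its modulus is $\sqrt{(1-\gamma a)^2+\gamma^2 b^2}$, decreasing in $a$ on the relevant range), which shrinks $|1-\gamma(a+ib)|^k$, and one checks that this shrinks $\phi(a)$; hence the supremum over the admissible block is at $a=0$, i.e.\ $\vH$ restricted to this block is $\omega J$ with $\omega=b$, $|\omega|\le L$. Passing back through $\mathbf{Q}$ and taking the supremum over $|\omega|\le L$ yields exactly the claimed characterization: nonexpansiveness on the $C^1$ monotone $L$-Lipschitz class is equivalent to nonexpansiveness on the family $\{\,\omega J F:\ |\omega|\le L\,\}$ of purely rotational blocks.

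The main obstacle is the reduction-to-a-single-block step: for a general real matrix $\vH$ with only $\mathrm{sym}\,\vH\succeq 0$ and $\|\vH\|_{\mathrm{op}}\le L$, the Schur form $\mathbf{T}$ is upper triangular but not block-diagonal, so $q(\mathbf{T})$ has off-diagonal coupling and $\|q(\mathbf{T})\|_{\mathrm{op}}$ is not literally the max of the diagonal-block norms. I would handle this either by a perturbation/density argument (block-diagonal $\vH$ is achievable and saturates the bound in the limit, so the supremum is unchanged), or by a direct argument that the off-diagonal Schur entries only increase the norm in a way already dominated by taking the worst diagonal block together with the supremum over $|\omega|\le L$ — carefully tracking that the constraints $\mathrm{sym}\,\vH\succeq 0$ and $\|\vH\|_{\mathrm{op}}\le L$ are preserved (or can be matched) by the extremal block-diagonal configuration. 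A secondary subtlety is verifying the monotonicity claim $\partial_a\phi(a)\le 0$ for all integer $k\ge 2$ on the admissible region; this is an elementary but slightly delicate calculus estimate on $|1-\gamma(a+ib)|$ and the behavior of $w\mapsto w^k$ and $w\mapsto (1-\alpha)+\alpha w$, which I would state as a short lemma rather than grind through inline.
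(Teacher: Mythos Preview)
Your proposal follows essentially the same route as the paper: integral mean-value representation, structural constraints $\mathrm{sym}\,\vH\succeq 0$ and $\|\vH\|\le L$, real Schur reduction to $2\times 2$ blocks $\bigl[\begin{smallmatrix}a&-b\\ b&a\end{smallmatrix}\bigr]$, and then maximizing the modal multiplier $|(1-\alpha)+\alpha(1-\gamma(a+ib))^k|$ by sending $a\to 0$.

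Two remarks worth making. First, the block-triangular obstacle you flag is real and the paper simply asserts the equality $\|P(\mathbf{Q}^\top\vH\mathbf{Q})\|=\max\{\|P(\text{diagonal blocks})\|\}$ without justification; your density/achievability argument (block-diagonal $\vH$ are realized by affine $F$, so they are in the class and saturate the supremum) is the cleaner way to close this, and is more honest than the paper's presentation. Second, for the monotonicity-in-$a$ step the paper uses the triangle inequality $|(1-\alpha)+\alpha z^k|\le (1-\alpha)+\alpha|z|^k$ and notes the right-hand side decreases in $a$; strictly speaking this only bounds $m(a,b)$ above by a quantity decreasing in $a$, which does not by itself show $m(a,b)\le m(0,b)$. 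Your plan to prove $\partial_a\phi\le 0$ directly (or to argue via the supremum over the admissible $(a,b)$ region) is the right level of care, though the computation is indeed a bit delicate for general $k$.
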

\begin{proof}
Fix $\vu,\vv\in D$ and set $\Delta:=\vu-\vv$. By the fundamental theorem of calculus along the segment $[\vv,\vu]$,
\[
F(\vu)-F(\vv)\;=\;\Big(\int_0^1 JF\,\!\big(\vv+\tau(\vu-\vv)\big)\,d\tau\Big)(\vu-\vv)\;=\;\vH(\vu,\vv)\,\Delta \,.
\]
Since $F$ is monotone and $L$-Lipschitz, we have $\operatorname{sym}JF(\vz)\succeq 0$, where $\operatorname{sym} \vA := \tfrac{1}{2}(\vA+ \vA^\top)$ denotes the symmetric part of the matrix $\vA$, and $\|JF(\vz)\|\le L$ for all $\vz$. It is straightforward to see that:  $\operatorname{sym}\vH(\vu,\vv)\succeq 0$ and $\|\vH(\vu,\vv)\|\le L$. Hence differences of one LA step satisfy
\[
F^{\text{LA}}_{k,\alpha}(\vu)-F^{\text{LA}}_{k,\alpha}(\vv)
\;=\;\Big[(1-\alpha)\vI+\alpha\,(\vI-\gamma \vH(\vu,\vv))^k\Big]\Delta
\]
so the local Lipschitz modulus at $(\vu,\vv)$ is $\|P(\vH(\vu,\vv))\|$ for the matrix polynomial $P(\vX):=(1-\alpha)\vI+\alpha\,(\vI-\gamma \vX)^k$. We refer to $P(\cdot)$ as the \emph{modal contraction factor}.

Let $\vH:=\vH(\vu,\vv)$ be fixed with $\operatorname{sym}\vH\succeq 0$ and $\|\vH\|\le L$. By the real Schur decomposition (\textbf{Def.} \ref{def:schur}) there exists an orthogonal $\mathbf{Q}$ such that $\mathbf{Q}^\top \vH \mathbf{Q}$ is real block upper-triangular with diagonal blocks either $1\times 1$ scalars $a\in\mathbb{R}$ or $2\times 2$ blocks $\vB(a,b):=\begin{psmallmatrix}a&-b\\ b&a\end{psmallmatrix}$; moreover $\operatorname{sym}\vH\succeq 0$ forces $a\ge 0$. Orthogonal similarity preserves operator norm and polynomials respect similarity, so
\[
\|P(\vH)\|=\|P(\mathbf{Q}^\top \vH \mathbf{Q})\|=\max\{\;\|P(a)\|,\ \|P(\vB(a,b))\|\ \text{over all blocks}\;\}.
\]
On a $1\times 1$ block we obtain the scalar modal contraction factor
\[
m(a)\;=\;\big|(1-\alpha)+\alpha(1-\gamma a)^k\big|\,.
\]
On a $2\times 2$ block with eigenvalues $a\pm ib$ the modal multiplier equals
\[
m(a,b)\;=\;\big|(1-\alpha)+\alpha(1-\gamma(a+ib))^k\big|\,.
\]
For fixed $b$, increasing $a\ge 0$ moves $1-\gamma(a+ib)$ horizontally toward the origin, strictly decreasing its modulus
$\big|\,1-\gamma(a+ib)\,\big|$. By the triangle inequality,
\[
m(a,b)\;=\;\big|(1-\alpha)+\alpha z^k\big|\ \le\ (1-\alpha)+\alpha|z|^k,\qquad z:=1-\gamma(a+ib),
\]
and the right-hand side is strictly decreasing in $a$; hence the maximum over admissible $a\ge 0$ is attained at $a=0$. Therefore, the worst-case $2\times 2$ blocks have the purely skew form
\[
\vB(0,b)=b\begin{bmatrix}0&-1\\[2pt]1&0\end{bmatrix}=\omega JF\,,\qquad \omega=b,\ \ |\omega|\le \|\vB(0,b)\| \le \|\vH\|\le L\,.
\]
For $1\times 1$ blocks, the worst case is also attained at $a=0$, where $m(0)=|(1-\alpha)+\alpha|=1$, so such blocks cannot exceed the contribution of the rotational modes. Taking the supremum over all admissible $\vH$ thus reduces to the family $\{\,\omega JF:\ |\omega|\le L\,\}$, and we obtain
\[
{\|F^{\text{LA}}_{k,\alpha}\|}_L\;=\;\sup_{|\omega|\le L}\ \Big|(1-\alpha)+\alpha(1-i\gamma\omega)^k\Big|\,.
\]
In particular, $F^{\text{LA}}_{k,\alpha}$ is nonexpansive on the full $C^1$ monotone $L$-Lipschitz class if and only if
$\big|(1-\alpha)+\alpha(1-i\gamma\omega)^k\big|\le 1$ for every $|\omega|\le L$, i.e., if and only if the inequality holds on the reduced $2\times 2$ skew family. This proves the claim.
\end{proof}

\subsection{Proof of Lemma 2}
\label{app:proof-lemma-2}
\begin{lemmarep}[Restatement of Lemma ~\ref{lem:conv-cond}]
    Let $F\colon D \to \mathbb{R}^d$ be $C^1$, monotone, and $L$-Lipschitz (\ref{eq:monotone}, \ref{eq:lipschitz}). Fix $k\ge2$, $\alpha\in(0,1)$, $\gamma>0$. 
Then the LA operator $F^{\mathrm{LA}}_{k,\alpha}(\vz)=(1-\alpha)\vz+\alpha(I-\gamma F)^k \vz$ satisfies
\begin{equation*}
\|F^{\mathrm{LA}}_{k,\alpha}\| \;=\; \sup_{c\in[0,\gamma L]}|\vmu_k(c;\alpha)| \,.
    \vspace{-.5em}
\end{equation*}
In particular,
\begin{enumerate}[itemsep=0em,topsep=0em]
\item[(i)] If $\gamma L\le \Gamma_k^\star(\alpha)$, then $F^{\mathrm{LA}}_{k,\alpha}$ is nonexpansive.
\item[(ii)] If $\gamma L> \Gamma_k^\star(\alpha)$, there exists a monotone $L$-Lipschitz instance (a bilinear convex--concave game) for which $F^{\mathrm{LA}}_{k,\alpha}$ is expansive.
\item[(iii)] If $\alpha<1-\tfrac1k$, then $\Gamma_k^\star(\alpha)>0$.
\end{enumerate}
\end{lemmarep}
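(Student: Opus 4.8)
The plan is to obtain the stated operator-norm formula directly from Lemma~\ref{lem:bilinear-reduction}, and then read off (i)--(iii) from elementary properties of $g_k(\cdot;\alpha)$ together with a single explicit construction. For the norm identity: Lemma~\ref{lem:bilinear-reduction} already shows that the worst-case one-step Lipschitz constant over the $C^1$ monotone $L$-Lipschitz class equals $\sup_{|\omega|\le L} |(1-\alpha)+\alpha(1-i\gamma\omega)^k|$. Substituting $c=\gamma\omega$ turns this into $\sup_{|c|\le\gamma L}|\vmu_k(c;\alpha)|$, and since $\overline{\vmu_k(c;\alpha)}=\vmu_k(-c;\alpha)$ for real $c$, the modulus $|\vmu_k(c;\alpha)|$ is even in $c$, so the supremum over $|c|\le\gamma L$ coincides with the supremum over $c\in[0,\gamma L]$. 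This is exactly $\|F^{\mathrm{LA}}_{k,\alpha}\|=\sup_{c\in[0,\gamma L]}|\vmu_k(c;\alpha)|$.

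\textbf{Parts (i) and (ii): the role of $\Gamma_k^\star(\alpha)$.} Both reduce to comparing $\gamma L$ with the threshold. First I would record that $g_k(\cdot;\alpha)$ is a real (and even) polynomial, hence continuous, so $\{c\ge 0:\ g_k(c;\alpha)\le 0\}$ is closed; combined with the downward-closed structure of the feasible set in the definition~\eqref{eq:budget} of $\Gamma_k^\star(\alpha)$, this yields $\sup_{c\in[0,\Gamma_k^\star(\alpha)]}g_k(c;\alpha)\le 0$. \emph{(i)} If $\gamma L\le\Gamma_k^\star(\alpha)$, then $g_k\le 0$ on $[0,\gamma L]$, i.e. $|\vmu_k(c;\alpha)|\le 1$ there, and the norm identity gives $\|F^{\mathrm{LA}}_{k,\alpha}\|\le 1$, i.e. nonexpansiveness~(\ref{eq:nonexpansive}). \emph{(ii)} If $\gamma L>\Gamma_k^\star(\alpha)$, then $\gamma L$ lies strictly above the supremum defining $\Gamma_k^\star(\alpha)$, so $\sup_{c\in[0,\gamma L]}g_k(c;\alpha)>0$; since $g_k(0;\alpha)=0$, there is some $c_0\in(0,\gamma L]$ with $g_k(c_0;\alpha)>0$. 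I would then instantiate the scalar bilinear game $\min_x\max_y\,\omega_0\,xy$ with $\omega_0:=c_0/\gamma\le L$, whose operator is the linear map $F=\omega_0 J$ (equivalently $F(x,y)=(\omega_0 y,-\omega_0 x)$), where $J$ is the $2\times 2$ generator with $J^\top=-J$ and $J^2=-\vI$. This $F$ is monotone (its Jacobian $\omega_0 J$ is skew-symmetric) and $\omega_0$-Lipschitz, hence $L$-Lipschitz. On this instance $F^{\mathrm{LA}}_{k,\alpha}=(1-\alpha)\vI+\alpha(\vI-\gamma\omega_0 J)^k$ is a polynomial in $J$, hence of the form $a\vI+bJ$ with $a,b\in\R$; since $(a\vI+bJ)^\top(a\vI+bJ)=(a^2+b^2)\vI$, its operator norm equals its spectral radius $|\vmu_k(c_0;\alpha)|$, which exceeds $1$ because $g_k(c_0;\alpha)>0$. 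Thus $F^{\mathrm{LA}}_{k,\alpha}$ is expansive on this monotone $L$-Lipschitz instance.

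\textbf{Part (iii): a positive window when $\alpha<1-1/k$.} Here I would Taylor-expand $g_k(c;\alpha)$ about $c=0$. Writing $p(c):=\vmu_k(c;\alpha)$ so that $g_k(c;\alpha)=p(c)\,p(-c)-1$ for real $c$, a short computation gives $g_k(0;\alpha)=0$, $g_k'(0;\alpha)=0$, and the coefficient of $c^2$ equal to $p''(0)-p'(0)^2=-\alpha k(k-1)+\alpha^2k^2=\alpha k\,(\alpha k-(k-1))$. When $\alpha<1-\tfrac1k$ this coefficient is strictly negative, so $g_k(c;\alpha)<0$ for all sufficiently small $c>0$; together with $g_k(0;\alpha)=0$ there is a $\delta>0$ with $g_k\le 0$ on $[0,\delta]$, whence $\Gamma_k^\star(\alpha)\ge\delta>0$.

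\textbf{Expected main obstacle.} The norm identity is a routine substitution into Lemma~\ref{lem:bilinear-reduction}, and (iii) is a routine Taylor expansion. The step requiring the most care is the sharpness direction (ii): one must verify that the constructed scalar bilinear game genuinely lies in the $C^1$ monotone $L$-Lipschitz class (in particular $\omega_0\le L$, which is precisely why we need $c_0\le\gamma L$), and then compute the operator norm of the resulting LookAhead matrix \emph{exactly} via the normality of matrices of the form $a\vI+bJ$ --- a triangle-inequality bound does not suffice here, since we need strict expansion ($>1$), not merely an upper estimate. A minor additional technical point is the boundary case $\gamma L=\Gamma_k^\star(\alpha)$ in (i), which is handled by the closedness of the sublevel set of $g_k$ rather than by the definition of $\Gamma_k^\star(\alpha)$ alone.
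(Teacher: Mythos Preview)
Your proposal is correct and follows essentially the same route as the paper: invoke Lemma~\ref{lem:bilinear-reduction} for the norm identity, read off (i) and (ii) from the definition of $\Gamma_k^\star(\alpha)$ with an explicit $2\times2$ bilinear witness for sharpness, and use a Taylor expansion of $g_k$ at $c=0$ for (iii). The minor differences are matters of presentation rather than strategy: the paper expands $|\vmu_k|^2-1$ via separate binomial series for $(1+c^2)^k$ and $\Re(1-ic)^k$ and records explicit higher-order remainder bounds (not actually needed for the lemma), whereas your direct computation $g_k(c)=p(c)p(-c)-1$ with $p''(0)-p'(0)^2=\alpha k(\alpha k-(k-1))$ is cleaner; and you are more explicit than the paper about why the witness matrix $a\vI+bJ$ has operator norm equal to its spectral radius (normality), and about the closedness argument at the boundary $\gamma L=\Gamma_k^\star(\alpha)$.
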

\begin{proof}
By Lemma~1 and the real Schur decomposition, any real matrix $\vH$ with $\operatorname{sym}\vH\succeq 0$ and $\|\vH\|\le L$ admits an orthogonal change of basis that decomposes $\mathbb{R}^d$ into invariant subspaces of two kinds: (i) $1$D blocks corresponding to nonnegative real eigenvalues (pure dilations), and (ii) $2$D blocks on which $\vH$ is orthogonally similar to $\vB(a,b)=\bigl[\begin{smallmatrix}a&-b\\ b&a\end{smallmatrix}\bigr]$, i.e., a dilation–rotation with spectrum $a\pm ib$. Monotonicity enforces $a\ge 0$, and the worst case for the LookAhead polynomial occurs at $a=0$, whence each $2$D block is orthogonally similar to $\omega\,JF$ with $JF=\bigl[\begin{smallmatrix}0&-1\\ 1&0\end{smallmatrix}\bigr]$ and $\omega\in[0,L]$. Lemma~1 further reduces the Lipschitz modulus of one LA step to the supremum over these blocks of the operator norm of
\[
\mathbf{P}(\vH)\;=\;(1-\alpha)\,\vI+\alpha\,(\vI-\gamma\vH)^k.
\]
The $1$D blocks are nonexpansive and cannot dominate, while on a rotational $2$D block we may set $c:=\gamma\omega\in[0,\gamma L]$ and write the complex multiplier as
\[
\vmu_k(c;\alpha)\;:=\;(1-\alpha)+\alpha(1-ic)^k,
\qquad
\big\|F^{\mathrm{LA}}_{k,\alpha}\big\|\;=\;\sup_{c\in[0,\gamma L]} \big|\vmu_k(c;\alpha)\big|.
\]
To analyze $\big|\vmu_k(c;\alpha)\big|$, we write the complex numbers in their polar form :  $1-ic=re^{-i\theta}$ with $r=\sqrt{1+c^2}$ and $\theta=\arctan(c)\in[0,\tfrac{\pi}{2}]$, so
\[
(1-ic)^k=r^k e^{-ik\theta},
\quad
\big|\vmu_k(c;\alpha)\big|^2
=(1-\alpha)^2+\alpha^2 r^{2k}+2\alpha(1-\alpha)r^k\cos(k\theta).
\]
Equivalently, using the binomial series (about $c=0$) for the even and real parts,
\[
(1+c^2)^k
=1+k c^2+\tfrac{k(k-1)}{2}c^4+R_1(c),
\qquad
\Re(1-ic)^k
=1-\tfrac{k(k-1)}{2}c^2+\tfrac{k(k-1)(k-2)(k-3)}{24}c^4+R_2(c),
\]
we obtain
\[
\big|\vmu_k(c;\alpha)\big|^2-1
=\underbrace{\big(\alpha^2 k-\alpha(1-\alpha)k(k-1)\big)}_{=:A_2(\alpha,k)}c^2
+\underbrace{\Big(\alpha^2\tfrac{k(k-1)}{2}+\alpha(1-\alpha)\tfrac{k(k-1)(k-2)(k-3)}{12}\Big)}_{=:A_4(\alpha,k)}c^4
+\alpha^2 R_1(c)+2\alpha(1-\alpha)R_2(c).
\]
Standard integral-remainder bounds for binomial series yield
\[
|R_1(c)|\le \tfrac{k(k-1)(k-2)}{6}\,c^6(1+c^2)^{k-3},
\qquad
|R_2(c)|\le \tfrac{k(k-1)(k-2)}{6}\,c^6(1+c^2)^{k-3},
\]
and therefore the conservative estimate
\[
\big|\vmu_k(c;\alpha)\big|^2-1
\;\le\; A_2(\alpha,k)c^2+A_4(\alpha,k)c^4+A_6(\alpha,k)c^6(1+c^2)^{k-3},
\qquad
A_6(\alpha,k):=\Big(2\alpha-\alpha^2\Big)\frac{k(k-1)(k-2)}{6}.
\]
Define $g_k(c;\alpha):=\big|\vmu_k(c;\alpha)\big|^2-1$ and the budget
\[
\Gamma_k^\star(\alpha):=\sup\Big\{\Gamma\ge 0:\ \sup_{c\in[0,\Gamma]} g_k(c;\alpha)\le 0\Big\}.
\]
First, if $\gamma L\le \Gamma_k^\star(\alpha)$, then $\sup_{c\in[0,\gamma L]}|\vmu_k(c;\alpha)|\le 1$ and hence $\|F^{\mathrm{LA}}_{k,\alpha}\|\le 1$ (nonexpansiveness). Second, if $\gamma L>\Gamma_k^\star(\alpha)$, there exists $c_0\in(0,\gamma L]$ with $|\vmu_k(c_0;\alpha)|>1$; choosing $\omega_0=c_0/\gamma$ and a rotational block $\vH=\omega_0JF$ certifies an expansive instance. Third, near $c=0$ the sign is controlled by the quadratic coefficient $A_2(\alpha,k)$:
\begin{equation}
\label{eq:app-conv-cond}
A_2(\alpha,k)\le 0
\quad\Longleftrightarrow\quad
\alpha k\le k-1
\quad\Longleftrightarrow\quad
\alpha\le 1-\tfrac{1}{k}\,.    
\end{equation}

Thus, if $\alpha<1-\tfrac{1}{k}$, then $g_k(c;\alpha)\le 0$ for all $c$ in some interval $[0,\varepsilon]$, and by continuity $\Gamma_k^\star(\alpha)\ge \varepsilon>0$. Consequently,
\[
\big\|F^{\mathrm{LA}}_{k,\alpha}\big\|
=\sup_{c\in[0,\gamma L]} \big|\vmu_k(c;\alpha)\big|,
\quad
\|F^{\mathrm{LA}}_{k,\alpha}\|\le 1\ \text{whenever}\ \gamma L\le \Gamma_k^\star(\alpha),
\quad
\text{and}\ \Gamma_k^\star(\alpha)>0\ \text{for}\ \alpha<1-\tfrac{1}{k},
\]
which establishes the claims of Lemma~2.
\end{proof}

\subsection{Convergence of Fixed-$k$ LookAhead for $C^1$ \emph{Monotone Lipschitz} Games}

\label{app:full-convergence-proof}
\begin{theoremrep}[Restatement of Theorem \ref{theo:la-conv}]
    Let $F\colon D \to \mathbb{R}^d$ be a $C^1$, monotone, $L$- Lipschitz operator (\ref{eq:monotone}, \ref{eq:lipschitz}) with a set of fixed points $Z^{\star} \neq \varnothing$. Let $k \geq 2$ be an integer, $\alpha \in (0, 1- \frac{1}{k})$ and $\gamma > 0$ such that $\gamma \, L \leq \Gamma_k^{\star} (\alpha)$. LA ($k, \alpha$) iterate $\vz_{t+1} = F^{\mathrm{LA}} \, \vz_t$ with Gradient Descent($\gamma$) optimizer converges to a $\vz^\star \in Z^\star$. Furthermore, the rate of convergence of the average iterate (i.e. the rate at which the primal-dual gap shrinks (\ref{def:pd-gap})) is O($\frac{1}{T}$).
\end{theoremrep}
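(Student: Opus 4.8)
The plan is to prove the two assertions separately: first the qualitative convergence $\vz_t\to\vz^\star\in Z^\star$, then the $\mathcal O(1/T)$ ergodic rate on the primal--dual gap, reducing both to the modal picture of Lemmas~\ref{lem:bilinear-reduction}--\ref{lem:conv-cond}. The starting point is nonexpansiveness: since $\gamma L\le\Gamma_k^\star(\alpha)$, Lemma~\ref{lem:conv-cond}(i) makes $F^{\mathrm{LA}}_{k,\alpha}$ nonexpansive (\ref{eq:nonexpansive}), and because $\mathrm{Fix}(F^{\mathrm{LA}}_{k,\alpha})=Z^\star\neq\varnothing$, the orbit is Fej\'er monotone \eqref{eq:fejer} with respect to $Z^\star$, so $\|\vz_t-\vp\|$ converges for every $\vp\in Z^\star$ and $(\vz_t)$ is bounded. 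Nonexpansiveness alone is \emph{not} enough for the Picard iteration to converge, because $(\vI-\gamma F)^k$ is expansive (the bilinear operator is monotone and Lipschitz but not cocoercive), so $F^{\mathrm{LA}}_{k,\alpha}=(1-\alpha)\vI+\alpha(\vI-\gamma F)^k$ is a relaxation of an \emph{expansive} map and does not fit the Krasnoselskii--Mann template directly.

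The crux is to upgrade nonexpansiveness to the statement that $F^{\mathrm{LA}}_{k,\alpha}$ is $\beta$-\emph{averaged}, i.e.\ $F^{\mathrm{LA}}_{k,\alpha}=(1-\beta)\vI+\beta R$ with $R:=\tfrac1\beta F^{\mathrm{LA}}_{k,\alpha}-\tfrac{1-\beta}{\beta}\vI$ nonexpansive, for some $\beta\in(0,1)$. Running the real-Schur reduction of Lemma~\ref{lem:bilinear-reduction} on $R$ (it goes through once $\beta\ge\alpha$), $R$ is nonexpansive iff the modal image $\{\vmu_k(c;\alpha):c\in[0,\gamma L]\}$ lies inside the closed disk $\overline D(1-\beta,\beta)$, which is internally tangent to the unit circle at $1$. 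The Taylor expansion at $c=0$ from the proof of Lemma~\ref{lem:conv-cond} gives $\Re\!\big(\vmu_k(c;\alpha)-1\big)=-\alpha\tfrac{k(k-1)}{2}c^2+O(c^3)$ and $|\vmu_k(c;\alpha)-1|^2=\alpha^2k^2c^2+O(c^3)$, so containment near $1$ forces $\beta\ge\tfrac{\alpha k}{k-1}$ --- and the hypothesis $\alpha<1-\tfrac1k$ is exactly what makes $\tfrac{\alpha k}{k-1}<1$, so one may pick $\beta\in(\tfrac{\alpha k}{k-1},1)$ close enough to $1$ that the disk also contains the remainder of the compact curve, which lies strictly inside the unit disk for $c>0$. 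With $F^{\mathrm{LA}}_{k,\alpha}$ now $\beta$-averaged and $Z^\star\neq\varnothing$, convergence is standard (averaged-operator / Krasnoselskii--Mann theory in $\mathbb{R}^d$): the averaged inequality $\|\vz_{t+1}-\vp\|^2\le\|\vz_t-\vp\|^2-\tfrac{1-\beta}{\beta}\|\vz_{t+1}-\vz_t\|^2$ telescopes to $\sum_t\|\vz_{t+1}-\vz_t\|^2<\infty$, hence asymptotic regularity $\|\vz_{t+1}-\vz_t\|\to0$; in finite dimensions, continuity of $\vI-F^{\mathrm{LA}}_{k,\alpha}$ then puts every cluster point of $(\vz_t)$ in $Z^\star$, and Fej\'er monotonicity with respect to such a cluster point upgrades subsequential to full convergence $\vz_t\to\vz^\star\in Z^\star$.

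For the $\mathcal O(1/T)$ gap rate I would expand each outer cycle into its Gradient-Descent substeps $\mathbf{w}_0=\vz_t$, $\mathbf{w}_{j+1}=\mathbf{w}_j-\gamma F(\mathbf{w}_j)$ for $j=0,\dots,k-1$, $\vz_{t+1}=(1-\alpha)\vz_t+\alpha\mathbf{w}_k$, apply the elementary identity $2\gamma\langle F(\mathbf{w}_j),\mathbf{w}_j-\vz\rangle=\|\mathbf{w}_j-\vz\|^2-\|\mathbf{w}_{j+1}-\vz\|^2+\gamma^2\|F(\mathbf{w}_j)\|^2$, and sum over all $Tk$ inner iterates. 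The distance terms telescope up to a correction at each $\alpha$-averaging that is absorbed by convexity of $\|\cdot\|^2$ and boundedness of the orbit, and the residual $\gamma^2\sum\|F(\mathbf{w}_j)\|^2$ is $\mathcal O(1)$ rather than $\mathcal O(Tk)$ because $\sum_t\|\vz_{t+1}-\vz_t\|^2<\infty$ and, on a fixed-length cycle over the bounded orbit, each $\|F(\mathbf{w}_j)\|$ is controlled by $\|\vz_{t+1}-\vz_t\|$ up to constants depending only on $\gamma,k,L$ --- via $\mathbf{w}_k-\vz_t=-k\gamma F(\vz_t)+\mathcal O(\gamma^2)$, the $\mathcal O(\gamma^2)$ piece being precisely the $JF\!\cdot\!F$ correction of \eqref{eq:la-hrde}. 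Convexity--concavity of $f$ and Jensen's inequality over the average of the inner iterates then convert the accumulated inequality into a gap bound $\mathcal G(\bar{\vz}^T)=\mathcal O\!\big(L/(\alpha\,\Gamma_k^\star(\alpha)\,T)\big)$ (cf.\ the restricted-gap estimate of Appendix~\ref{app:gap_function} and the rate constant $\tfrac{L}{2\alpha\Gamma_k^\star(\alpha)}$ recorded after Theorem~\ref{theo:mola-conv}), which matches the known $\Omega(1/T)$ lower bound for $C^1$ monotone $L$-Lipschitz VIs.

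The main obstacle is the passage from nonexpansiveness to the averaged property, together with the residual-error control in the rate step. Because $(\vI-\gamma F)^k$ is expansive, the nonexpansiveness of $F^{\mathrm{LA}}_{k,\alpha}$ is a genuine \emph{cancellation} phenomenon, so one must exploit that the modal curve $\{\vmu_k(c;\alpha)\}$ hugs the unit circle \emph{tangentially} at $1$ --- the precise geometric content of $\alpha<1-\tfrac1k$ --- not merely that it lies inside it; some care is also needed near the boundary $\gamma L=\Gamma_k^\star(\alpha)$, where the dominant mode may land on the unit circle (so the ``$\vz_t\to\vz^\star$'' conclusion is cleanest when $\gamma L<\Gamma_k^\star(\alpha)$, or after checking that $g_k(\cdot;\alpha)$ vanishes on $[0,\Gamma_k^\star(\alpha)]$ only at its endpoints). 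In the rate step, the subtlety is that the inner GD iterates are \emph{individually} unstable, so naive bookkeeping yields only an $\mathcal O(1)$ gap; obtaining $\mathcal O(1/T)$ hinges on re-expressing the quadratic GD error through the summable outer increments using that second-order HRDE-type correction term.
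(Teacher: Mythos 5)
For the convergence half you take a genuinely different route from the paper. The paper's own proof (Appendix~\ref{app:full-convergence-proof}) bounds one step by the worst-case modal multiplier and runs a Banach-type contraction argument with $\rho:=\sup_{c\in[0,\gamma L]}|\vmu_k(c;\alpha)|$ asserted to be $<1$; you instead stop at nonexpansiveness (Lemma~\ref{lem:conv-cond}(i)) and upgrade it to $\beta$-averagedness by showing the modal curve $\{\vmu_k(c;\alpha)\}$ sits in the disk of radius $\beta$ internally tangent to the unit circle at $1$, then conclude by Krasnoselskii--Mann/Fej\'er arguments. This is closer to the main-text sketch than to the paper's appendix, and your key computations are right: tangency forces $\beta\ge \tfrac{\alpha k}{k-1}$, which is $<1$ exactly because $\alpha<1-\tfrac1k$, and your parenthetical ``$\beta\ge\alpha$'' is the correct reason the Schur reduction transfers, since $R=\tfrac1\beta F^{\mathrm{LA}}_{k,\alpha}-\tfrac{1-\beta}{\beta}\vI=F^{\mathrm{LA}}_{k,\alpha/\beta}$ is itself an LA operator to which Lemma~\ref{lem:bilinear-reduction} applies. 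The caveats you flag are genuine (at $\gamma L=\Gamma_k^\star(\alpha)$ the curve can touch the unit circle at some $c>0$, so averagedness and convergence need $g_k<0$ on $(0,\gamma L]$ or strict inequality $\gamma L<\Gamma_k^\star(\alpha)$), and you use $\mathrm{Fix}(F^{\mathrm{LA}}_{k,\alpha})=Z^\star$ without proof, though the paper asserts the same identification in its setup. Modulo these shared boundary issues, this half is a valid alternative, and arguably more careful about why nonexpansiveness alone does not suffice.

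The rate half, however, has a genuine gap. You start from the same per-substep identity as the paper, but then abandon the exact averaging identity and try to control the accumulated residual $\gamma^2\sum_{t,j}\|F(\mathbf{w}_j)\|^2$ via $\sum_t\|\vz_{t+1}-\vz_t\|^2<\infty$ together with the claim that each $\|F(\mathbf{w}_j)\|$ is bounded by $\|\vz_{t+1}-\vz_t\|$ up to constants depending only on $\gamma,k,L$. That claim fails: $\vz_{t+1}-\vz_t=-\alpha\gamma\sum_{j}F(\mathbf{w}_j)$, and on a monotone $2\times2$ block whose GD multiplier $z=1-\gamma\lambda$ is close to a nontrivial $k$-th root of unity the cycle sum $\sum_j F(\mathbf{w}_j)\propto\frac{1-z^k}{1-z}$ can be arbitrarily small (even exactly zero, e.g.\ $k=4$, $\gamma\lambda=1-i$, which is admissible under $\gamma L\le\Gamma_k^\star(\alpha)$ for small $\alpha$) while the individual gradients are not; so no uniform constant exists, the residual need not be $\mathcal O(1)$, and the first-order expansion $\mathbf{w}_k-\vz_t=-k\gamma F(\vz_t)+\mathcal O(\gamma^2)$ cannot rescue it. The paper's proof (Appendix~\ref{app:conv-rate-proof}) is structured differently precisely here: it keeps the exact identity \eqref{eq:outer-exact}, combines the $\gamma^2$ terms with the $-\alpha(1-\alpha)\|\vz_t^{(k)}-\vz_t\|^2$ term \emph{within each cycle} (\eqref{eq:combo}--\eqref{eq:per-outer-final}) under $\alpha\le1-\tfrac1k$, telescopes, and obtains the constant $\|\vz_0-\vp\|^2/(2\alpha\gamma T)$ without ever needing boundedness of the orbit or summability of increments. (The same resonant blocks show that any absorption of $\sum_j\|F(\vz_t^{(j)})\|^2$ against $\|\sum_j F(\vz_t^{(j)})\|^2$ is delicate, so this step deserves care in either formulation.) To complete your argument you would need to reproduce a per-cycle absorption of this type rather than route the residual through the outer increments.
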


We present the proof of convergence in this section, and the proof for the convergence rate in the next section.

\begin{proof}
From Lemma~1, for any $\vu\in D$ and any zero $\vz^\star$ of $F$,
\[
\big\|F^{\mathrm{LA}}_{k,\alpha}(\vu)-\vz^\star\big\|
=\big\|P \, \!\big(\vH(\vu,\vz^\star)\big)\,(\vu-\vz^\star)\big\|
\le \big\|P\,\!\big(\vH(\vu,\vz^\star)\big)\big\|\,\|\vu-\vz^\star\|.
\]
Lemma~2 gives the uniform bound
\[
\big\|P(\vH)\big\|\;\le\;\rho
\quad\text{for all }\ \vH\ \text{with}\ \operatorname{sym}\vH\succeq 0,\ \|\vH\|\le L,
\qquad
\rho:=\sup_{c\in[0,\gamma L]}|\vmu_k(c;\alpha)|.
\]
Therefore, for the iterates $\vz_{t+1}=F^{\mathrm{LA}}_{k,\alpha}(\vz_t)$ we have the one-step inequality
\begin{equation}\label{eq:LA-contraction}
\|\vz_{t+1}-\vz^\star\|\ \le\ \rho\,\|\vz_t-\vz^\star\| \quad\text{for all } t\ge 0.
\end{equation}
If $\gamma L<\Gamma_k^\star(\alpha)$, then $\rho<1$ by Lemma~2. Iterating \eqref{eq:LA-contraction} yields the global linear rate
\[
\|\vz_t-\vz^\star\|\ \le\ \rho^{\,t}\,\|\vz_0-\vz^\star\|\ \xrightarrow[t\to\infty]{}\ 0.
\]
Moreover, uniqueness of the limit follows from the contraction: if $\vz_1,\vz_2$ are both fixed points of $F^{\mathrm{LA}}_{k,\alpha}$, then
\[
\|\vz_1-\vz_2\|=\big\|F^{\mathrm{LA}}_{k,\alpha}(\vz_1)-F^{\mathrm{LA}}_{k,\alpha}(\vz_2)\big\|
\le \rho\,\|\vz_1-\vz_2\|
\]
forces $\vz_1=\vz_2$ since $\rho<1$. Finally, any fixed point of $F^{\mathrm{LA}}_{k,\alpha}$ is a zero of $F$ (and conversely): if $F(\vz^\star)=\mathbf{0}$ then $(\vI-\gamma F)^k\vz^\star=\vz^\star$ so $F^{\mathrm{LA}}_{k,\alpha}(\vz^\star)=\vz^\star$; if $F^{\mathrm{LA}}_{k,\alpha}(\vz^\star)=\vz^\star$, then by Lemma~1 applied with $(\vu,\vv)=(\vz^\star,\vz^\star)$ and by the definition of $P$, we must have $F(\vz^\star)=\mathbf{0}$.

\end{proof}

\subsection{Convergence rate of  $O(1/T)$ for the average iterate for LookAhead}
\label{app:conv-rate-proof}
Let $f\colon D_x\times D_y\to \mathbb{R}$ be convex in $\vx$ and concave in $\vy$, and let 
\[
F(\vz) \;=\; \begin{bmatrix}\nabla_x f(\vx,\vy) \\ -\nabla_y f(\vx,\vy)\end{bmatrix}, 
\qquad \vz=(\vx,\vy),
\]
be the associated saddle operator, that is $C^1$ monotone and $L-$Lipschitz. 
Let the zero set $Z^\star=\{\vp:\,F(\vp)=0\}$ be nonempty. Assume the parameters $(\gamma,k,\alpha)$ are chosen such that Lemma 2 holds (i.e. $F^{\mathrm{LA}}$ is nonexpansive) and shares fixed points with $F$. Then for the iterates $(\vz_t)$ generated by
\[
\vz_{t+1} \;=\; F^{\mathrm{LA}}(\vz_t),
\]
the ergodic average $\bar \vz_T=\tfrac{1}{T}\sum_{t=0}^{T-1}\vz_t$ satisfies
\[
\mathcal{G}(\bar \vz_T)\;:=\;\sup_{\vy'} f(\bar \vx_T,\vy') - \inf_{\vx'} f(\vx',\bar \vy_T) \;\le\; \frac{\|\vz_0-\vp\|^2}{2\alpha\gamma T},
\qquad \forall \vp\in Z^\star.
\]
where $\mathcal{G}(\bar \vz_T)$ is the primal-dual gap of the average iterate. Hence, the primal–dual gap decays as $O(1/T)$.

\textbf{Note.} We use the \emph{restricted} primal-dual gap here, since $D_x$ and $D_y$ are bounded sets. In some cases where the feasible sets are unbounded, the primal-dual gap can be infinite (except at the saddle points), and thus cannot be used as a convergence measure.

\begin{proof}[Proof of convergence rate (Theorem 1) (continued)]
Fix any $\vp\in Z^\star$. For the first $k$ steps of the base optimizer,
\begin{align}
\|\vz_t^{(j+1)}-\vp\|^2
&= \|\vz_t^{(j)}-\vp\|^2 - 2\gamma\,\langle F(\vz_t^{(j)}),\,\vz_t^{(j)}-\vp\rangle + \gamma^2\|F(\vz_t^{(j)})\|^2,\label{eq:inner-one}\\
2\gamma\sum_{j=0}^{k-1}\!\langle F(\vz_t^{(j)}),\,\vz_t^{(j)}-\vp\rangle
&= \|\vz_t-\vp\|^2 - \|\vz_t^{(k)}-\vp\|^2 + \gamma^2\sum_{j=0}^{k-1}\!\|F(\vz_t^{(j)})\|^2.
\label{eq:inner-sum-equality}
\end{align}
For the averaging step,
\begin{align}
\|\vz_{t+1}-\vp\|^2
&= \big\|(1-\alpha)(\vz_t-\vp)+\alpha(\vz_t^{(k)}-\vp)\big\|^2 \nonumber\\
&= (1-\alpha)\|\vz_t-\vp\|^2+\alpha\|\vz_t^{(k)}-\vp\|^2-\alpha(1-\alpha)\|\vz_t^{(k)}-\vz_t\|^2.
\label{eq:outer-exact}
\end{align}
Subtracting \eqref{eq:outer-exact} from $\|\vz_t-\vp\|^2$ yields
\begin{equation}\label{eq:Vdrop}
\|\vz_t-\vp\|^2-\|\vz_{t+1}-\vp\|^2
= \alpha\big(\|\vz_t-\vp\|^2-\|\vz_t^{(k)}-\vp\|^2\big) + \alpha(1-\alpha)\|\vz_t^{(k)}-\vz_t\|^2.
\end{equation}

From \eqref{eq:inner-sum-equality} and \eqref{eq:Vdrop},
\begin{align}
\|\vz_t-\vp\|^2-\|\vz_{t+1}-\vp\|^2
&= \alpha\!\left(2\gamma\sum_{j=0}^{k-1}\!\langle F(\vz_t^{(j)}),\vz_t^{(j)}-\vp\rangle
- \gamma^2\sum_{j=0}^{k-1}\!\|F(\vz_t^{(j)})\|^2\right)
+ \alpha(1-\alpha)\|\vz_t^{(k)}-\vz_t\|^2.\nonumber
\end{align}
Equivalently,
\begin{equation}\label{eq:key-equality}
2\alpha\gamma\sum_{j=0}^{k-1}\!\langle F(\vz_t^{(j)}),\vz_t^{(j)}-\vp\rangle
= \big(\|\vz_t-\vp\|^2-\|\vz_{t+1}-\vp\|^2\big)
+ \alpha\gamma^2\sum_{j=0}^{k-1}\!\|F(\vz_t^{(j)})\|^2
- \alpha(1-\alpha)\|\vz_t^{(k)}-\vz_t\|^2.
\end{equation}
Define the inner-sum vector and its norm:
\[
\mathbf{S}_t := \sum_{j=0}^{k-1}F(\vz_t^{(j)}),\qquad
\boldsymbol{\Delta}_t := \vz_t^{(k)}-\vz_t = \gamma\,\mathbf{S}_t,\qquad
\|\boldsymbol{\Delta}_t\|^2=\gamma^2\|\mathbf{S}_t\|^2.
\]
Then the last two terms in \eqref{eq:key-equality} combine as
\begin{equation}\label{eq:combo}
\alpha\gamma^2\sum_{j=0}^{k-1}\!\|F(\vz_t^{(j)})\|^2
- \alpha(1-\alpha)\|\boldsymbol{\Delta}_t\|^2
= \alpha\gamma^2\left(\sum_{j=0}^{k-1}\!\|F(\vz_t^{(j)})\|^2 - (1-\alpha)\|\mathbf{S}_t\|^2\right).
\end{equation}

By Cauchy–Schwarz,
\begin{equation}\label{eq:CS}
\|\mathbf{S}_t\|^2=\Big\|\sum_{j=0}^{k-1}\!F(\vz_t^{(j)})\Big\|^2
\ \le\ k\sum_{j=0}^{k-1}\!\|F(\vz_t^{(j)})\|^2
\qquad\Longleftrightarrow\qquad
\sum_{j=0}^{k-1}\!\|F(\vz_t^{(j)})\|^2 \ \ge\ \frac{1}{k}\,\|\mathbf{S}_t\|^2.
\end{equation}
Insert \eqref{eq:CS} into \eqref{eq:combo}:
\begin{equation}\label{eq:combo-sign}
\alpha\gamma^2\sum_{j=0}^{k-1}\!\|F(\vz_t^{(j)})\|^2
- \alpha(1-\alpha)\|\boldsymbol{\Delta}_t\|^2
\ \ge\ \alpha\gamma^2\Big(\frac{1}{k}-(1-\alpha)\Big)\|\mathbf{S}_t\|^2.
\end{equation}
Therefore, if
\begin{equation}\label{eq:alpha-threshold}
\alpha\ \le\ 1-\frac{1}{k}
\end{equation}
the right-hand side of \eqref{eq:combo-sign} is nonpositive, and \emph{the entire combination} in \eqref{eq:combo} is $\le 0$. This holds since we assume that the parameters $(\gamma, k , \alpha)$ are chosen in accordance to Lemma 2 (for details, see \eqref{eq:app-conv-cond} in the proof of Lemma 2 (Appendix ~\S\ref{app:proof-lemma-2})). In particular, using \eqref{eq:key-equality} and \eqref{eq:combo}–\eqref{eq:combo-sign},
\begin{equation}\label{eq:per-outer-final}
2\alpha\gamma\sum_{j=0}^{k-1}\!\langle F(\vz_t^{(j)}),\vz_t^{(j)}-\vp\rangle
\ \le\ \|\vz_t-\vp\|^2-\|\vz_{t+1}-\vp\|^2.
\end{equation}

Summing \eqref{eq:per-outer-final} over $t=0,\dots,T-1$ gives
\begin{equation}\label{eq:telescope}
2\alpha\gamma\sum_{t=0}^{T-1}\sum_{j=0}^{k-1}\!\langle F(\vz_t^{(j)}),\vz_t^{(j)}-\vp\rangle
\ \le\ \|\vz_0-\vp\|^2-\|\vz_T-\vp\|^2\ \le\ \|\vz_0-\vp\|^2.
\end{equation}
By monotonicity of $F$, $0\le \langle F(\vz_t),\vz_t-\vp\rangle \le \sum_j\langle F(\vz_t^{(j)}),\vz_t^{(j)}-\vp\rangle$. Thus from \eqref{eq:telescope},
\[
\frac{1}{T}\sum_{t=0}^{T-1}\langle F(\vz_t),\vz_t-\vp\rangle\ \le\ \frac{\|\vz_0-\vp\|^2}{2\alpha\gamma\,T}.
\]
By convexity in $\vx$ and concavity in $\vy$,
\[
f(\bar\vx_T,\vy^\star)-f(\vx^\star,\bar\vy_T)\ \le\ \frac{1}{T}\sum_{t=0}^{T-1}\big(f(\vx_t,\vy^\star)-f(\vx^\star,\vy_t)\big)
= \frac{1}{T}\sum_{t=0}^{T-1}\langle F(\vz_t),\vz_t-\vp\rangle,
\]
and we conclude
\begin{equation}\label{eq:final-gap}
f(\bar\vx_T,\vy^\star)-f(\vx^\star,\bar\vy_T)\ \le\ \frac{\|\vz_0-\vp\|^2}{2\,\alpha\,\gamma\,T}\qquad(\forall\,\vp\in Z^\star)
\end{equation}
under the explicit algebraic condition $\alpha\le 1-\tfrac{1}{k}$. Taking the supremum of both sides of \eqref{eq:final-gap} over $\vx \in D_x$ and $\vy \in D_y$ yields the upper bound on the restricted primal-dual gap (Appendix ~\ref{app:gap_function}). 
\end{proof}

\emph{Discussion:}
Together, the two lemmas and the convergence theorem show that LookAhead with a GD base is best understood as a \emph{modal filter} acting on the Jacobian spectrum. Lemma~1 shows that, for the entire class of \(C^1\) monotone \(L\)-Lipschitz operators, the worst case is realized by \(2\times2\) purely rotational (skew) blocks—i.e., oscillatory modes—so global nonexpansiveness is equivalent to controlling a single complex multiplier on those blocks. Instead of reasoning about an arbitrary high-dimensional, possibly non-normal operator, one can design \((k,\alpha,\gamma)\) against a one-parameter family \((1-\alpha)+\alpha(1-i\gamma\omega)^k\) that captures the hard instances. Lemma~2 then promotes this view to a crisp step-size budget \(\Gamma_k^\star(\alpha)\): if \(\gamma L\) lies below this budget, every mode is contractive, and if not, a bilinear convex--concave game witnesses failure—so the condition is both sufficient and essentially tight.

The theorem turns the nonexpansiveness of the LookAhead map into global convergence and gives an \(O(1/T)\) ergodic rate for the primal--dual gap, highlighting a clean separation of roles: \(k\) and \(\alpha\) shape the frequency response (how aggressively oscillatory modes are damped), while \(\gamma\) trades stability margin against speed within the allowed budget. Conceptually, LookAhead does not alter the solution set but reweights time for each mode—attenuating rotations without over-damping the non-oscillatory directions—thereby enlarging the admissible range of \(\gamma\) relative to plain GD while preserving fixed points. Practically, this yields a design rule: pick \(\alpha \le 1-\tfrac{1}{k}\) to guarantee a nontrivial stability margin, choose \(k\) to target the dominant oscillatory band, and set \(\gamma\) up to the certified \(\Gamma_k^\star(\alpha)/L\). We use these guiding principles to design Modal LookAhead (MoLA), such that we can choose the best set of hyperparameters with minimal computational overhead.

We next present a couple of corollaries that can be derived from \ref{lem:conv-cond}, which highlight the entangled behavior of the hyperparameters of LA, and how one influences the other. 

\begin{corollary}[Behavior of $k$ vs $\alpha$]\label{cor:alpha-vs-k}
Fix an inner stepsize budget $\Gamma:=\gamma L>0$. Consider the class–uniform nonexpansiveness envelope
\[
\alpha \;\le\; \frac{2}{\,1+(1+\Gamma^2)^{k/2}\,}
\]
Define $\alpha_{\max}(\Gamma,k):=\dfrac{2}{1+(1+\Gamma^2)^{k/2}}$. Then for every fixed $\Gamma>0$ the map $k\mapsto \alpha_{\max}(\Gamma,k)$ is strictly decreasing on $\{2,3,\dots\}$ and 
\[
\lim_{k\to\infty}\alpha_{\max}(\Gamma,k)=0.
\]
In particular, holding $\Gamma=\gamma L$ fixed and increasing the depth $k$ necessarily tightens the admissible range of $\alpha$.
\end{corollary}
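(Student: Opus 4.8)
\noindent\emph{Proof plan.}
The plan is to first certify that lying in the envelope is a genuine \emph{necessary} condition for class-uniform nonexpansiveness, so that a shrinking envelope really does mean a shrinking admissible set of $\alpha$; and then to observe that, once the closed form of $\alpha_{\max}(\Gamma,k)$ is available, the strict monotonicity in $k$ and the limit are immediate. First I would invoke Lemma~\ref{lem:conv-cond}: over the $C^1$ monotone $L$-Lipschitz class, $F^{\mathrm{LA}}_{k,\alpha}$ is nonexpansive if and only if $\sup_{c\in[0,\gamma L]}\bigl|\vmu_k(c;\alpha)\bigr|\le 1$, where $\vmu_k(c;\alpha)=(1-\alpha)+\alpha(1-ic)^{k}$ and $\Gamma:=\gamma L$. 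Restricting to the single extreme frequency $c=\Gamma$ (which is realized, e.g., by the bilinear game whose Jacobian is the $2\times2$ skew matrix of operator norm $L$, exactly the instance used in the proof of Lemma~\ref{lem:conv-cond}(ii)) gives the necessary condition $\bigl|\vmu_k(\Gamma;\alpha)\bigr|\le 1$.

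Next I would read off the stated formula from this scalar inequality. Write $w:=(1-i\Gamma)^{k}$, so $|w|=(1+\Gamma^{2})^{k/2}=:\rho_\star$, and note $\rho_\star>1$ because $\Gamma>0$. Since $1-\alpha>0$ and $\alpha\rho_\star>0$, the reverse triangle inequality yields
\[
1\;\ge\;\bigl|\vmu_k(\Gamma;\alpha)\bigr|\;=\;\bigl|(1-\alpha)+\alpha w\bigr|\;\ge\;\bigl|(1-\alpha)-\alpha\rho_\star\bigr|\;=\;\bigl|\,1-\alpha(1+\rho_\star)\,\bigr|,
\]
and hence $\alpha(1+\rho_\star)\le 2$, i.e.\ $\alpha\le 2/\bigl(1+(1+\Gamma^{2})^{k/2}\bigr)=\alpha_{\max}(\Gamma,k)$. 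This reproduces the asserted envelope and shows it is a necessary condition for class-uniform nonexpansiveness.

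It remains to analyze the map $k\mapsto\alpha_{\max}(\Gamma,k)$. Set $\beta:=\sqrt{1+\Gamma^{2}}$; since $\Gamma>0$ we have $\beta>1$ and $\alpha_{\max}(\Gamma,k)=2/(1+\beta^{k})$. The sequence $k\mapsto\beta^{k}$ is geometric with ratio $\beta>1$ and positive terms, hence strictly increasing on $\{2,3,\dots\}$ with $\beta^{k}\to\infty$; therefore $k\mapsto 1+\beta^{k}$ is strictly increasing and diverges, so $k\mapsto\alpha_{\max}(\Gamma,k)$ is strictly decreasing with $\lim_{k\to\infty}\alpha_{\max}(\Gamma,k)=0$. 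Together with the first two steps this gives the closing statement: at fixed $\Gamma=\gamma L$, increasing the depth $k$ strictly contracts the necessary interval $(0,\alpha_{\max}(\Gamma,k)]$ of admissible $\alpha$.

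I do not expect a genuinely hard step; the only thing to get right is recognizing which elementary bound on $\bigl|\vmu_k(\Gamma;\alpha)\bigr|$ reproduces exactly the stated envelope --- namely the reverse triangle inequality at the worst frequency $c=\gamma L$ --- after which the corollary collapses to the elementary fact that $2/(1+\beta^{k})$ decreases to $0$ when $\beta>1$. If one wanted to upgrade the envelope from necessary to the \emph{exact} class-uniform nonexpansiveness threshold, one would additionally have to track the phase $k\arctan\Gamma$ of $(1-i\Gamma)^{k}$ and verify that $\sup_{c\in[0,\gamma L]}\bigl|\vmu_k(c;\alpha)\bigr|$ is attained at the endpoint $c=\Gamma$ rather than at an interior frequency; that refinement is unnecessary for the monotonicity claim and I would not pursue it here.
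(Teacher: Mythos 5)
Your proof is correct and, for the claims the corollary actually asserts (strict monotonicity in $k$ and the limit $0$), it follows essentially the same argument as the paper: write $\alpha_{\max}(\Gamma,k)=2/(1+\beta^{k})$ with $\beta=\sqrt{1+\Gamma^{2}}>1$ and use that $\beta^{k}$ is strictly increasing and divergent. Your extra step deriving the envelope as a necessary condition for class-uniform nonexpansiveness (via Lemma~\ref{lem:conv-cond} and the reverse triangle inequality at $c=\gamma L$) is sound but not part of the paper's proof, which simply takes the envelope as the stated hypothesis.
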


\begin{proof}
Set $g(k):=(1+\Gamma^2)^{k/2}=e^{(\frac{k}{2})\log(1+\Gamma^2)}$. Since $\log(1+\Gamma^2)>0$, $g(k)$ is strictly increasing in $k$. Hence $k\mapsto \frac{1}{1+g(k)}$ is strictly decreasing, and so is $\alpha_{\max}(\Gamma,k)=\frac{2}{1+g(k)}$. As $k\to\infty$, $g(k)\to\infty$, whence $\alpha_{\max}(\Gamma,k)\to 0$.
\end{proof}

\begin{corollary}[Behavior of $k$ vs $\gamma$]\label{cor:gamma-vs-k}
Fix $\alpha\in(0,1)$. The class–uniform envelope implies the necessary bound
\[
\alpha \;\le\; \frac{2}{\,1+(1+\Gamma^2)^{k/2}\,}
\qquad\Longleftrightarrow\qquad
\Gamma \;\le\; \Gamma_{\max}(\alpha,k)
:= \sqrt{\Big(\frac{2-\alpha}{\alpha}\Big)^{\!2/k}-1}
\]
Then for every fixed $\alpha\in(0,1)$ the map $k\mapsto \Gamma_{\max}(\alpha,k)$ is strictly decreasing on $\{2,3,\dots\}$ and satisfies the asymptotics
\[
\Gamma_{\max}(\alpha,k)\;=\; \sqrt{\frac{2}{k}\,\log\!\Big(\frac{2-\alpha}{\alpha}\Big)}\;\cdot\,(1+o(1))
\qquad (k\to\infty).
\]
Consequently, the maximal admissible stepsize obeys
\[
\gamma_{\max}(\alpha,k)=\frac{\Gamma_{\max}(\alpha,k)}{L}
\;=\; \Theta\!\Big(\frac{1}{\sqrt{k}}\Big)\qquad (k\to\infty).
\]
\end{corollary}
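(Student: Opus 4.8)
\textbf{Proof proposal for Corollary~\ref{cor:gamma-vs-k}.}

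The plan is to treat this statement as an elementary reformulation of the envelope already established in Corollary~\ref{cor:alpha-vs-k}, followed by two short analytic arguments (monotonicity in $k$ and an asymptotic expansion) and one immediate corollary. First I would abbreviate $R := \tfrac{2-\alpha}{\alpha}$ and observe that $\alpha\in(0,1)$ forces $R>1$, hence $\log R>0$ and $R^{2/k}-1>0$ for every $k$; this guarantees $\Gamma_{\max}(\alpha,k)=\sqrt{R^{2/k}-1}$ is well-defined and positive. To obtain the displayed equivalence I would simply invert the envelope inequality step by step: $\alpha\bigl(1+(1+\Gamma^2)^{k/2}\bigr)\le 2 \iff (1+\Gamma^2)^{k/2}\le R \iff 1+\Gamma^2\le R^{2/k}\iff \Gamma\le\sqrt{R^{2/k}-1}$, where each arrow is a genuine equivalence because $\alpha>0$, all quantities are positive, $\Gamma\ge 0$, and $x\mapsto x^{2/k}$ is strictly increasing on $(0,\infty)$.

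For strict monotonicity in $k$, I would write $\Gamma_{\max}(\alpha,k)^2 = R^{2/k}-1 = e^{(2/k)\log R}-1$. Since $\log R>0$, the exponent $(2/k)\log R$ is strictly decreasing in $k$ on $\{2,3,\dots\}$; composing with the strictly increasing map $u\mapsto e^{u}-1$ shows $k\mapsto\Gamma_{\max}(\alpha,k)^2$ is strictly decreasing, and applying the strictly increasing square root preserves this. The limit $\Gamma_{\max}(\alpha,k)\to 0$ follows from $R^{2/k}\to R^{0}=1$ as $k\to\infty$.

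For the asymptotics I would Taylor-expand the exponential: $e^{(2/k)\log R} = 1 + \tfrac{2\log R}{k} + O(1/k^2)$, so $\Gamma_{\max}(\alpha,k)^2 = \tfrac{2\log R}{k}\bigl(1+O(1/k)\bigr)$. Taking square roots and using $\sqrt{1+O(1/k)}=1+O(1/k)$ yields $\Gamma_{\max}(\alpha,k) = \sqrt{\tfrac{2}{k}\log\tfrac{2-\alpha}{\alpha}}\,(1+o(1))$, the claimed expansion. Finally, since $\log R=\log\tfrac{2-\alpha}{\alpha}$ is a fixed positive constant (for fixed $\alpha$) and $L>0$ is a fixed constant, dividing through by $L$ gives $\gamma_{\max}(\alpha,k)=\Gamma_{\max}(\alpha,k)/L=\Theta(1/\sqrt{k})$.

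I do not expect a substantive obstacle here; the whole argument is routine real analysis. The only points requiring minor care are (a) checking $R>1$ up front so that the inversions in the first paragraph are honest equivalences rather than one-sided implications, and (b) keeping the remainder bookkeeping clean when pushing the $O(1/k)$ factor through the square root, which $\sqrt{1+x}=1+O(x)$ near $x=0$ settles at once.
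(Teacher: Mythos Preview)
Your proposal is correct and follows essentially the same route as the paper: introduce the constant $c_\alpha=\tfrac{2-\alpha}{\alpha}>1$ (your $R$), deduce monotonicity of $k\mapsto c_\alpha^{2/k}-1$ from the monotonicity of the exponent, and obtain the asymptotics from the first-order expansion $e^x=1+x+o(x)$ before dividing by $L$. The only addition in your write-up is the explicit step-by-step inversion of the envelope inequality, which the paper leaves implicit in the statement; otherwise the arguments coincide.
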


\begin{proof}
Let $c_\alpha:=\frac{2-\alpha}{\alpha}>1$. Then
\[
\Gamma_{\max}(\alpha,k)
= \sqrt{c_\alpha^{\,2/k}-1}
= \sqrt{\exp\!\Big(\frac{2\log c_\alpha}{k}\Big)-1}\,.
\]
Since $x\mapsto e^{x}$ is strictly increasing, $k\mapsto c_\alpha^{\,2/k}$ is strictly decreasing with $\lim_{k\to\infty} c_\alpha^{\,2/k}=1$, hence $k\mapsto \Gamma_{\max}(\alpha,k)$ is strictly decreasing with limit $0$. The expansion $e^{x}=1+x+o(x)$ as $x\downarrow 0$ yields
\[
c_\alpha^{\,2/k}-1
= \exp\!\Big(\frac{2\log c_\alpha}{k}\Big)-1
= \frac{2\log c_\alpha}{k}+o\!\Big(\frac{1}{k}\Big)\,,
\]
so
\[
\Gamma_{\max}(\alpha,k)
= \sqrt{\frac{2\log c_\alpha}{k}}\,(1+o(1))
= \sqrt{\frac{2}{k}\,\log\!\Big(\frac{2-\alpha}{\alpha}\Big)}\,(1+o(1))\,.
\]
Dividing by $L$ gives the stepsize statement.
\end{proof}
\subsection{Convergence Rate of $O(\frac{1}{T})$ for the average iterate of Modal LookAhead}
\label{sec:mola-rate}
\label{thm:mola_rate}

We restate the full Theorem 2 as follows: 
\begin{theoremrep}[Full statement of Theorem \ref{theo:2}]\label{thm:mola_opt}
Let $f\colon D_x\times D_y\to \mathbb{R}$ be convex in $\vx$ and concave in $\vy$, and
\[
F(\vz)=\begin{bmatrix}\nabla_x f(\vx,\vy)\\ -\nabla_y f(\vx,\vy)\end{bmatrix},\qquad \vz=(\vx,\vy).
\]
 By convexity-concavity of $f$, $F\colon D \to \mathbb{R}^d$ is a monotone operator. Assume $F$ is $L$-smooth, and the saddle set $Z^\star:=\{\vp:\ F(\vp)=\mathbf{0}\}$ is nonempty.
For $k\ge 2$ and $\alpha\in(0,1)$, let
\[
\vmu_k(c;\alpha):=(1-\alpha)+\alpha(1-ic)^{k},\qquad
\Gamma_k^\star(\alpha):=\sup\Big\{\Gamma\ge 0:\ \sup_{c\in[0,\Gamma]}|\vmu_k(c;\alpha)|\le 1\Big\}.
\]
Let the MoLA hyperparameters be chosen by
\[
(k^\star,\alpha^\star)\in\arg\max_{k\ge 2,\ \alpha\in(0,1-1/k]}\ \alpha\,\Gamma_k^\star(\alpha),
\qquad
\gamma^\star\ :=\ \frac{\Gamma_{k^\star}^\star(\alpha^\star)}{L}.
\]
Then the MoLA iterates $\vz_{t+1}=F^{\mathrm{LA}}_{k^\star,\alpha^\star}(\vz_t)$ and the average $\bar\vz_T:=\frac{1}{T}\sum_{t=0}^{T-1}\vz_t=(\bar\vx_T,\bar\vy_T)$ satisfy, for every $\vp=(\vx^\star,\vy^\star)\in Z^\star$,
\begin{equation}\label{eq:mola-main-bound}
f(\bar\vx_T,\vy^\star)-f(\vx^\star,\bar\vy_T)\ \le\ \frac{1}{\,2\,\alpha^\star\,\gamma^\star\,T\,}\,\|\vz_0-\vp\|^2
\ =\ \frac{L}{\,2\,\alpha^\star\,\Gamma_{k^\star}^\star(\alpha^\star)\,}\cdot\frac{\|\vz_0-\vp\|^2}{T} \,\,.
\end{equation}
Moreover, for any baseline $(k_0,\alpha_0)$ with $\alpha_0\in(0,1-1/k_0]$ and stepsize $\gamma_0=\Gamma_{k_0}^\star(\alpha_0)/L$, MoLA attains a better constant:
\begin{equation}\label{eq:mola-improvement}
\frac{L}{2\,\alpha^\star\,\Gamma_{k^\star}^\star(\alpha^\star)}
\ \le\
\frac{L}{2\,\alpha_0\,\Gamma_{k_0}^\star(\alpha_0)}
\end{equation}
\end{theoremrep}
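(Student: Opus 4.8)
The plan is to obtain \eqref{eq:mola-main-bound} and \eqref{eq:mola-improvement} as short consequences of two facts already established: the $O(1/T)$ ergodic bound for fixed-$(k,\alpha)$ LookAhead proved in Appendix~\ref{app:conv-rate-proof}, and the defining optimality of $(k^\star,\alpha^\star)$. The first step is to check that the MoLA triple $(k^\star,\alpha^\star,\gamma^\star)$ is admissible for that bound. By construction $\alpha^\star\in(0,1-1/k^\star]$, and $\gamma^\star=\Gamma_{k^\star}^\star(\alpha^\star)/L$ gives $\gamma^\star L=\Gamma_{k^\star}^\star(\alpha^\star)$, i.e.\ $\gamma^\star L\le\Gamma_{k^\star}^\star(\alpha^\star)$ with equality at the boundary. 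Since $g_{k^\star}(\cdot;\alpha^\star)=|\vmu_{k^\star}(\cdot;\alpha^\star)|^2-1$ is continuous and $\Gamma_{k^\star}^\star(\alpha^\star)$ is the supremum in \eqref{eq:budget} over $\Gamma$ with $\sup_{c\in[0,\Gamma]}g_{k^\star}(c;\alpha^\star)\le0$, the inequality $g_{k^\star}(c;\alpha^\star)\le0$ holds on the closed interval $[0,\Gamma_{k^\star}^\star(\alpha^\star)]$; hence Lemma~\ref{lem:conv-cond}(i) applies, $F^{\mathrm{LA}}_{k^\star,\alpha^\star}$ is nonexpansive, and — since $F(\vp)=\mathbf 0\iff(\vI-\gamma^\star F)^{k^\star}\vp=\vp\iff F^{\mathrm{LA}}_{k^\star,\alpha^\star}(\vp)=\vp$ — it shares its fixed-point set with $F$.

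For \eqref{eq:mola-main-bound} I would rerun the computation of Appendix~\ref{app:conv-rate-proof} with $(k,\alpha,\gamma)=(k^\star,\alpha^\star,\gamma^\star)$: the per-cycle identity \eqref{eq:key-equality}, the Cauchy--Schwarz estimate \eqref{eq:CS}, and the threshold $\alpha^\star\le1-1/k^\star$ (exactly condition \eqref{eq:app-conv-cond}) make the residual term nonpositive and yield the telescoping inequality \eqref{eq:per-outer-final}; summing over $t=0,\dots,T-1$ and invoking monotonicity of $F$ together with convexity--concavity of $f$ gives $f(\bar\vx_T,\vy^\star)-f(\vx^\star,\bar\vy_T)\le\|\vz_0-\vp\|^2/(2\alpha^\star\gamma^\star T)$ for every $\vp=(\vx^\star,\vy^\star)\in Z^\star$. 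Substituting $1/\gamma^\star=L/\Gamma_{k^\star}^\star(\alpha^\star)$ rewrites the bound as $\frac{L}{2\alpha^\star\Gamma_{k^\star}^\star(\alpha^\star)}\cdot\frac{\|\vz_0-\vp\|^2}{T}$, which is \eqref{eq:mola-main-bound}; taking the supremum over the bounded feasible sets promotes it to a bound on the restricted primal--dual gap.

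The improvement \eqref{eq:mola-improvement} is purely definitional: any baseline $(k_0,\alpha_0)$ with $\alpha_0\in(0,1-1/k_0]$ is feasible in the maximization defining $(k^\star,\alpha^\star)$, so $\alpha^\star\,\Gamma_{k^\star}^\star(\alpha^\star)\ge\alpha_0\,\Gamma_{k_0}^\star(\alpha_0)$. When $\alpha_0<1-1/k_0$ the right-hand side is strictly positive by Lemma~\ref{lem:conv-cond}(iii), both constants are finite and positive, and dividing $L/2$ by these quantities reverses the inequality to give \eqref{eq:mola-improvement}; when $\alpha_0=1-1/k_0$ the baseline constant is $+\infty$ and the claim is vacuous. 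I would also record that the maximizer exists and is non-degenerate: $\alpha\mapsto\alpha\,\Gamma_k^\star(\alpha)$ is continuous and vanishes at $\alpha=1-1/k$ but is strictly positive somewhere in $(0,1-1/k)$ (Lemma~\ref{lem:conv-cond}(iii)), so the maximizing $\alpha^\star$ is interior and $\gamma^\star>0$, making \eqref{eq:mola-main-bound} non-vacuous; and $\Gamma_k^\star(\alpha)=\Theta(k^{-1/2})\to0$ as $k\to\infty$ by Corollary~\ref{cor:gamma-vs-k}, so the objective vanishes for large $k$ and only finitely many $k$ can be near-optimal.

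The only step that is not pure bookkeeping is the boundary case $\gamma^\star L=\Gamma_{k^\star}^\star(\alpha^\star)$: the rate machinery of Appendix~\ref{app:conv-rate-proof} was stated for $\gamma L\le\Gamma_k^\star(\alpha)$, so one must confirm that nonexpansiveness — and hence the telescoping bound — persists exactly at the edge of the admissible region rather than only strictly inside it. This is precisely what the continuity of $g_k$ and the supremum-over-closed-intervals definition of $\Gamma_k^\star$ supply, as noted in the first paragraph. Once that is in place, the theorem is a direct specialization of the fixed-$(k,\alpha)$ LookAhead rate combined with the definition of the MoLA hyperparameters.
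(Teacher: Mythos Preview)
Your proposal is correct and follows essentially the same route as the paper's proof: both specialize the fixed-$(k,\alpha)$ ergodic bound of Appendix~\ref{app:conv-rate-proof} to $(k^\star,\alpha^\star,\gamma^\star)$, rerun the per-cycle identity \eqref{eq:key-equality} with Cauchy--Schwarz and the threshold $\alpha^\star\le 1-1/k^\star$, telescope, and then read off \eqref{eq:mola-improvement} directly from the argmax definition. You are somewhat more careful than the paper on two points it leaves implicit: the boundary case $\gamma^\star L=\Gamma_{k^\star}^\star(\alpha^\star)$ (where you invoke continuity of $g_{k^\star}$ to keep nonexpansiveness), and the existence/non-degeneracy of the maximizer (where you note the objective vanishes at $\alpha=1-1/k$ and as $k\to\infty$ via Corollary~\ref{cor:gamma-vs-k}); these additions are sound and strengthen the argument without changing its structure.
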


\begin{proof}
Fix any $\vp\in Z^\star$. With $(k^\star,\alpha^\star,\gamma^\star)$ as stated, set $F^{\mathrm{GD}}:=\vI-\gamma^\star F$ and $F^{\mathrm{LA}}:=F^{\mathrm{LA}}_{k^\star,\alpha^\star}$. 
Exactly as in the proof of Theorem~\ref{thm:mola_rate}, define the inner chain
\[
\vz_t^{(0)}:=\vz_t,\qquad \vz_t^{(j+1)}:=F^{\mathrm{GD}}(\vz_t^{(j)})=\vz_t^{(j)}-\gamma^\star F(\vz_t^{(j)}),\quad j=0,\dots,k^\star\!-\!1,
\]
so that $\vz_{t+1}=(1-\alpha^\star)\vz_t+\alpha^\star \vz_t^{(k^\star)}$. The \emph{same} identities as in \eqref{eq:inner-one}–\eqref{eq:key-equality} hold verbatim with $(k,\alpha,\gamma)$ replaced by $(k^\star,\alpha^\star,\gamma^\star)$:
\begin{align*}
2\alpha^\star\gamma^\star\sum_{j=0}^{k^\star-1}\!\langle F(\vz_t^{(j)}),\,\vz_t^{(j)}-\vp\rangle
&=\ \big(\|\vz_t-\vp\|^2-\|\vz_{t+1}-\vp\|^2\big)\\
&\quad+\ \alpha^\star(\gamma^\star)^2\sum_{j=0}^{k^\star-1}\!\|F(\vz_t^{(j)})\|^2\ -\ \alpha^\star(1-\alpha^\star)\|\vz_t^{(k^\star)}-\vz_t\|^2.
\end{align*}
Let $\mathbf{S}_t:=\sum_{j=0}^{k^\star-1}F(\vz_t^{(j)})$ and $\boldsymbol{\Delta}_t:=\vz_t^{(k^\star)}-\vz_t=\gamma^\star\mathbf{S}_t$. 
By Cauchy–Schwarz, $\sum_{j=0}^{k^\star-1}\|F(\vz_t^{(j)})\|^2\ge \tfrac{1}{k^\star}\|\mathbf{S}_t\|^2$, hence for $\alpha^\star\le 1-\tfrac{1}{k^\star}$ the remainder is nonpositive and we obtain the one-step descent
\begin{equation}\label{eq:mola-descent}
2\alpha^\star\gamma^\star\sum_{j=0}^{k^\star-1}\!\langle F(\vz_t^{(j)}),\,\vz_t^{(j)}-\vp\rangle
\ \le\ \|\vz_t-\vp\|^2-\|\vz_{t+1}-\vp\|^2.
\end{equation}
Summing \eqref{eq:mola-descent} over $t=0,\dots,T-1$ and using monotonicity of $F$ as in Theorem~\ref{thm:mola_rate} yields
\[
\frac{1}{T}\sum_{t=0}^{T-1}\langle F(\vz_t),\vz_t-\vp\rangle\ \le\ \frac{\|\vz_0-\vp\|^2}{2\,\alpha^\star\,\gamma^\star\,T}.
\]
By convexity–concavity of $f$,
\[
f(\bar\vx_T,\vy^\star)-f(\vx^\star,\bar\vy_T)
\ \le\ \frac{1}{T}\sum_{t=0}^{T-1}\langle F(\vz_t),\vz_t-\vp\rangle
\ \le\ \frac{\|\vz_0-\vp\|^2}{2\,\alpha^\star\,\gamma^\star\,T}.
\]
Finally, with the MoLA stepsize choice $\gamma^\star=\Gamma_{k^\star}^\star(\alpha^\star)/L$, this becomes \eqref{eq:mola-main-bound}.Taking the supremum of both sides of \eqref{eq:final-gap} over $\vx \in D_x$ and $\vy \in D_y$ yields the upper bound on the restricted primal-dual gap (Appendix ~\ref{app:gap_function}). 

For the improvement claim \eqref{eq:mola-improvement}, observe that Theorem~\ref{thm:mola_rate} applied to any admissible $(k,\alpha,\gamma=\Gamma_k^\star(\alpha)/L)$ yields the constant $L/(2\,\alpha\,\Gamma_k^\star(\alpha))$. 
By the MoLA selection rule, $\alpha^\star\,\Gamma_{k^\star}^\star(\alpha^\star)\ge \alpha_0\,\Gamma_{k_0}^\star(\alpha_0)$ for any baseline $(k_0,\alpha_0)$, giving the stated domination of constants, with strict inequality unless $(k_0,\alpha_0)$ is itself a maximizer.
\end{proof}

\subsection{Modal Geometry and Exclusion of Non-convergent Rotations}
\label{sec:modal-geom}
Our convergence analysis uses the fact that the modulus of the dominant mode lies within the unit ball. Our choice of $k$ and $\alpha$ is aligned to maximize contraction along this modal direction. However, this might push certain other modes into regions where the averaging cannot pull the mode back inside the unit ball. We define the problem formally, and show that for our constraints on the hyperparameters, modes never get rotated such that they enter these ``non-converging regions'' and our LookAhead operator uniformly contracts all modes within the unit ball. We illustrate this in Fig.~ \ref{fig:modes_illustration} how MoLA chooses hyperparameters that avoid this phenomenon.

After $k$ inner steps of Gradient Descent Ascent, a mode $c$ maps to
$z_k(c):=(F^{\mathrm{GD}}(c))^k$, whose real part is
\[
\phi_k(c)\ :=\ \Re\!\bigl((F^{\mathrm{GD}}(c))^k\bigr).
\]

\begin{lemma}
\label{lem:phiforms}
For every integer $k\ge2$ and $c\ge0$,
\[
\phi_k(c)=\sum_{j=0}^{\lfloor k/2\rfloor}(-1)^j\binom{k}{2j}c^{2j}
=(1+c^2)^{k/2}\cos\!\bigl(k\arctan c\bigr).
\]
\end{lemma}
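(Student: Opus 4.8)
The plan is to recognize that the object $\phi_k(c)$ is just the real part of a power of a complex number, and then prove the two claimed closed forms by the two standard routes: the binomial theorem for the polynomial form, and the polar (De Moivre) representation for the trigonometric form. Recall that here $F^{\mathrm{GD}}(c)=1-ic$ is the one-step GD multiplier acting on the purely imaginary mode $ic$ (this is $\vI-\gamma F$ restricted to a rotational $2\times 2$ block, as set up in Lemmas~\ref{lem:bilinear-reduction} and \ref{lem:conv-cond}), so $z_k(c)=(1-ic)^k$ and $\phi_k(c)=\Re\big((1-ic)^k\big)$.

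First I would expand $(1-ic)^k=\sum_{m=0}^{k}\binom{k}{m}(-i)^m c^m$ by the binomial theorem. Splitting into even and odd indices $m$, the odd terms are purely imaginary and the even terms $m=2j$ contribute $(-i)^{2j}=\big((-i)^2\big)^j=(-1)^j$, so taking real parts gives $\phi_k(c)=\sum_{j=0}^{\lfloor k/2\rfloor}(-1)^j\binom{k}{2j}c^{2j}$, the first equality.

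Next, for the trigonometric form I would write $1-ic$ in polar coordinates. Since $c\ge 0$ and $\Re(1-ic)=1>0$, we have $1-ic=\sqrt{1+c^2}\,e^{-i\theta}$ with $\theta=\arctan c\in[0,\tfrac{\pi}{2})$, the principal argument. Raising to the $k$-th power via De Moivre gives $(1-ic)^k=(1+c^2)^{k/2}e^{-ik\theta}$, and taking the real part yields $\phi_k(c)=(1+c^2)^{k/2}\cos\!\big(k\arctan c\big)$. Equating the two expressions for $\phi_k(c)$ completes the proof (and, as a by-product, records the classical Chebyshev-type identity $\sum_{j}(-1)^j\binom{k}{2j}c^{2j}=(1+c^2)^{k/2}\cos(k\arctan c)$).

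I do not expect a genuine obstacle here: both computations are elementary. The only point requiring a line of care is the branch of the argument in the polar step, which is resolved immediately by observing $\Re(1-ic)=1>0$, so $\arctan c$ is indeed the correct argument for all $c\ge 0$ (and the sign convention $e^{-i\theta}$ matches the $-ic$ term). Everything else is bookkeeping of even-index binomial coefficients.
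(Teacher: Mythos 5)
Your proposal is correct and follows essentially the same route as the paper's proof: binomial expansion with the observation that only even powers of $-i$ are real, followed by the polar form $1-ic=\sqrt{1+c^2}\,e^{-i\arctan c}$ and De Moivre for the trigonometric expression. The extra care you take with the branch of the argument is a harmless refinement of the same argument.
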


\begin{proof}
By the binomial theorem,
\[
(1-ic)^k=\sum_{m=0}^k \binom{k}{m}(-i)^m c^m.
\]
Only even $m=2j$ contribute to the real part, since $(-i)^{2j}=(-1)^j$ is real. Thus
\[
\Re\bigl((F^{\mathrm{GD}}(c))^k\bigr)=\sum_{j=0}^{\lfloor k/2\rfloor}(-1)^j\binom{k}{2j}c^{2j}.
\]
For the trigonometric form, write $1-ic=r e^{i\theta}$ with $r=\sqrt{1+c^2}$ and $\theta=-\arctan c$. Then
\[
(1-ic)^k=(1+c^2)^{k/2} e^{ik\theta},\quad
\Re(\cdot)=(1+c^2)^{k/2}\cos(k\theta)=(1+c^2)^{k/2}\cos(k\arctan c).
\]
\end{proof}

\begin{lemma}
\label{lem:taylor}
For $k\ge2$,
\[
\phi_k(0)=1,\qquad \phi_k''(0)=-k(k-1)<0.
\]
\end{lemma}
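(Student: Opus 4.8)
The plan is to extract both equalities directly from the closed forms already provided in Lemma~\ref{lem:phiforms}, which exhibit $\phi_k$ near the origin as a low-degree even polynomial. Working from $\phi_k(c)=\sum_{j=0}^{\lfloor k/2\rfloor}(-1)^j\binom{k}{2j}c^{2j}$, I would first evaluate at $c=0$: every summand with $j\ge 1$ contains the factor $c^{2j}$ and therefore vanishes, leaving only the $j=0$ term $\binom{k}{0}=1$. This gives $\phi_k(0)=1$ immediately.

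For the second derivative, note that since $k\ge 2$ we have $\lfloor k/2\rfloor\ge 1$, so the $j=1$ term $-\binom{k}{2}c^2$ is genuinely present in the series. Hence the Taylor expansion of the even polynomial $\phi_k$ about $0$ reads $\phi_k(c)=1-\binom{k}{2}c^2+O(c^4)$. Comparing with $\phi_k(c)=\phi_k(0)+\phi_k'(0)c+\tfrac12\phi_k''(0)c^2+O(c^3)$ yields $\phi_k'(0)=0$ and $\tfrac12\phi_k''(0)=-\binom{k}{2}$, i.e. $\phi_k''(0)=-2\binom{k}{2}=-k(k-1)$; strict negativity is then clear because $k\ge 2$.

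As an independent cross-check one can differentiate the complex power in the real variable $c$: $\frac{d}{dc}(1-ic)^k=-ik(1-ic)^{k-1}$ and $\frac{d^2}{dc^2}(1-ic)^k=-k(k-1)(1-ic)^{k-2}$, which equals $-k(k-1)$ at $c=0$, while $(1-ic)^k\big|_{c=0}=1$; taking real parts recovers $\phi_k(0)=1$ and $\phi_k''(0)=-k(k-1)$. There is no substantive obstacle in either route; the only point deserving a word of care is the interchange of $\Re(\cdot)$ with $d/dc$ (valid since $\Re$ is $\mathbb{R}$-linear and continuous, hence commutes with differentiation in a real variable) in the second approach, and the observation that $k\ge2$ is exactly what guarantees the $c^2$ coefficient is nonzero in the first.
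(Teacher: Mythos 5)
Your proposal is correct and follows essentially the same route as the paper: both extract $\phi_k(0)$ and $\phi_k''(0)$ from the even-polynomial form in Lemma~\ref{lem:phiforms}, the paper by term-wise differentiation and you by reading off the $c^2$ coefficient, which are equivalent. Your observation that $k\ge 2$ is what guarantees the $j=1$ term is present is a nice touch the paper leaves implicit.
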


\begin{proof}
From Lemma~\ref{lem:phiforms}, $\phi_k(c)=\sum_{j=0}^{\lfloor k/2\rfloor}(-1)^j\binom{k}{2j}c^{2j}$. At $c=0$, only $j=0$ contributes, giving $\phi_k(0)=1$. Differentiating twice,
\[
\phi_k''(c)=\sum_{j=1}^{\lfloor k/2\rfloor}(-1)^j\binom{k}{2j}(2j)(2j-1)c^{2j-2}.
\]
At $c=0$, only the $j=1$ term survives:
\[
\phi_k''(0)=(-1)^1\binom{k}{2}(2)(1)=-2\binom{k}{2}=-k(k-1).
\]
\end{proof}

\begin{proposition}[Geometric exclusion of the impermeable half-plane]
\label{prop:exclusion}
Let the averaging-impermeable half-plane $\mathcal{H}:=\{z\in\mathbb{C}:\Re z\ge1\}$.
For each $k\ge2$, set
\[
C_k:=\sup\{r>0:\ \phi_k(c)<1\ \text{for all}\ c\in(0,r]\}\in(0,\infty].
\]
Then for every stepsize $\gamma$ with $\gamma L\le C_k$, one has $\Re((F^{\mathrm{GD}}(c))^k)<1$ for all modes $c\in[0,\gamma L]$, hence $z_k(c)\notin\mathcal{H}$ for any mode.
\end{proposition}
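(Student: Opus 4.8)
The plan is to obtain the statement almost directly from three facts already established: the reduction to purely imaginary modes (Lemma~\ref{lem:bilinear-reduction}), the explicit form of $\phi_k$ (Lemma~\ref{lem:phiforms}), and its local expansion at the origin (Lemma~\ref{lem:taylor}); the only genuine work is endpoint bookkeeping. First I would record that $C_k>0$: by Lemma~\ref{lem:phiforms} the polynomial $\phi_k$ is even in $c$, hence $\phi_k'(0)=0$, and by Lemma~\ref{lem:taylor} it is smooth with $\phi_k(0)=1$ and $\phi_k''(0)=-k(k-1)<0$. Taylor's theorem then gives $\phi_k(c)=1-\tfrac{k(k-1)}{2}c^2+o(c^2)$ as $c\downarrow 0$, so $\phi_k(c)<1$ on some $(0,r_0]$; thus the set defining $C_k$ is nonempty and $C_k\ge r_0>0$.

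Next I would exploit the interval structure of the admissible radii. Let $S_k:=\{r>0:\ \phi_k(c)<1\ \text{for all }c\in(0,r]\}$. If $r\in S_k$ and $0<r'\le r$, then $(0,r']\subseteq(0,r]$ gives $r'\in S_k$; hence $S_k$ is an interval, either $(0,C_k)$ or $(0,C_k]$, with $C_k=\sup S_k\in(0,\infty]$. In particular $\phi_k(c)<1$ for every $c\in(0,C_k)$, so $\phi_k(c)<1$ for every $c\in(0,\gamma L]$ whenever $\gamma L<C_k$ (and also at $\gamma L=C_k$ if the supremum is attained). To finish, by Lemma~\ref{lem:bilinear-reduction} it suffices to consider purely imaginary modes $i\omega$ with $|\omega|\le L$; one GD step of size $\gamma$ sends such a mode to $1-ic$ with $c=\gamma|\omega|\in[0,\gamma L]$, and $k$ inner steps give $z_k(c)=(1-ic)^k$ with $\Re z_k(c)=\phi_k(c)$. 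For $c\in(0,\gamma L]$ we get $\Re z_k(c)=\phi_k(c)<1$, hence $z_k(c)\notin\mathcal{H}=\{z:\Re z\ge1\}$; the degenerate value $c=0$ gives $z_k(0)=1$, the equilibrium direction, which carries no rotation.

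The main obstacle is not depth but the endpoints. The evenness of $\phi_k$ must be invoked so that $\phi_k''(0)<0$ — not $\phi_k'(0)$ — controls the local behavior in Step~1. More importantly, the supremum defining $C_k$ need not be attained: for $k=4$ one has $\phi_4(c)=1-6c^2+c^4$, so $C_4=\sqrt6$ and $\phi_4(\sqrt6)=1$, meaning the strict inequality fails exactly at $\gamma L=C_k$. I would therefore state the hypothesis as $\gamma L<C_k$ (equivalently, restrict the modes to $c\in(0,\gamma L)$), or add the assumption that the supremum is attained; with that understood, the geometric conclusion $z_k(c)\notin\mathcal{H}$ holds for every nonzero mode.
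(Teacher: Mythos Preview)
Your argument is essentially the paper's: use Lemma~\ref{lem:taylor} to see that $c=0$ is a strict local maximum of $\phi_k$ with value $1$, hence $C_k>0$, and then read the conclusion off the definition of $C_k$. The paper's proof is three sentences and simply asserts that $\phi_k(c)<1$ on $(0,\Gamma]$ whenever $\Gamma=\gamma L\le C_k$, without addressing the endpoint issues you raise; your observation that the supremum need not be attained (the $k=4$ example with $\phi_4(c)=1-6c^2+c^4$ and $C_4=\sqrt6$, $\phi_4(\sqrt6)=1$ is correct) and that $c=0$ gives $z_k(0)=1\in\mathcal{H}$ both point to genuine sloppiness in the proposition as stated rather than in your argument.
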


\begin{proof}
By Lemma~\ref{lem:taylor}, $c=0$ is a strict local maximum of $\phi_k$ with value $1$, so there exists $\varepsilon>0$ with $\phi_k(c)<1$ for $c\in(0,\varepsilon]$. The definition of $C_k$ ensures $\phi_k(c)<1$ on $(0,\Gamma]$ whenever $\Gamma=\gamma L\le C_k$. Thus, no mode is rotated into $\mathcal{H}$.
\end{proof}

\subsection{Strongly Convex - Strongly Concave Games}
In this section, we establish the convergence of LookAhead for strongly convex–strongly concave (SC–SC) games, a structured subclass of monotone variational problems. This theoretical analysis complements our empirical results, which demonstrate that MoLA consistently outperforms competing methods on both SC–SC and bilinear game settings.

Let us consider the following class of strongly-convex, strongly-concave saddle point problems with bilinear coupling -
\begin{equation}
\label{eq:games-class}
\tag{SC-SC}
\min_{\substack{\vx \in D_x }} \max_{\substack{\vy \in D_y}} \, \mathcal{L}(\vx,\vy)\, = \,P(\vx) + \langle A\vx, \vy\rangle \,- Q(\vy)   
\end{equation}
where we assume that $P\colon D_x \to \mathbb{R}$ is $\mu_P$-strongly convex and $L_P$ smooth and $Q\colon D_y \to \mathbb{R}$ is $\mu_Q$-strongly convex and $L_Q$ smooth, and $A \in D_x \times D_y \subseteq D$ is the fixed bilinear coupling matrix. 

The associated vector field $F\colon D \rightarrow \mathbb{R}^{d}$ and its Jacobian $JF\colon D \rightarrow \mathbb{R}^{d \times d}$ are as follows 
\begin{equation}
    F = \begin{bmatrix}
        \nabla_\vx \,\mathcal{L} \\
        - \nabla_\vy \,\mathcal{L}
    \end{bmatrix} \, = \, \begin{bmatrix}
        \nabla P(\vx) + \vA^{\top} \vy \\
        -\nabla G(\vy) + \vA\vx
    \end{bmatrix} \quad \quad \text{and} \quad JF = \begin{bmatrix}
        \nabla_\vx^2 \,\mathcal{L} && \nabla_\vx\nabla_\vy \mathcal{L} \\ 
        -\nabla_\vy\nabla_\vx\mathcal{L} && -\nabla_\vy^2 \,\mathcal{L}
    \end{bmatrix} \, = \, \begin{bmatrix}
        \nabla^2 P(\vx) && \vA^\top \\ 
        -\vA && \nabla^2 G(\vy)
    \end{bmatrix}
\end{equation}
\begin{proposition}
\label{prop:mono}
    The operator $F$ is $\mu$-strongly monotone and $L$-Lipschitz, where $\mu = \min\, \{ \mu_P ,\,\mu_Q\}$ and $L = \sqrt{\max \, \{L_{P}^2, L_Q^2 \} + \| 
    \vA \|^2}$
\end{proposition}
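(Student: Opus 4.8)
The plan is to establish the two claims separately, exploiting that the bilinear coupling enters $F$ through a skew-symmetric block. I adopt the sign convention that makes $F$ consistent with the stated Jacobian, i.e. $F(\vz)=\big(\nabla P(\vx)+\vA^\top\vy,\ \nabla Q(\vy)-\vA\vx\big)$ for $\vz=(\vx,\vy)$.

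For \emph{strong monotonicity} I would just expand $\langle F(\vz)-F(\vz'),\,\vz-\vz'\rangle$ for $\vz=(\vx,\vy)$, $\vz'=(\vx',\vy')$. Writing $\Delta_x=\vx-\vx'$, $\Delta_y=\vy-\vy'$, this equals $\langle\nabla P(\vx)-\nabla P(\vx'),\Delta_x\rangle+\langle\nabla Q(\vy)-\nabla Q(\vy'),\Delta_y\rangle+\Delta_x^\top\vA^\top\Delta_y-\Delta_x^\top\vA^\top\Delta_y$; the two coupling terms are equal as scalars and cancel, so only the potential terms survive. Strong convexity of $P$ and $Q$ lower-bounds them by $\mu_P\|\Delta_x\|^2+\mu_Q\|\Delta_y\|^2\ge\min\{\mu_P,\mu_Q\}\,\|\vz-\vz'\|^2$, yielding $\mu=\min\{\mu_P,\mu_Q\}$. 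This step is short and I expect no difficulty (and one checks it is tight, e.g. on block-diagonal quadratics).

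For \emph{Lipschitzness} I would pass to the mean-value form $F(\vz)-F(\vz')=\big(\int_0^1 JF(\vz'+\tau(\vz-\vz'))\,d\tau\big)(\vz-\vz')$, reducing the claim to $\sup_{\vz}\|JF(\vz)\|\le L$. Decompose $JF=\mD+\mathbf{S}$ with $\mD=\mathrm{diag}\!\big(\nabla^2 P(\vx),\nabla^2 Q(\vy)\big)$ symmetric and $\mathbf{0}\preceq\mD\preceq\max\{L_P,L_Q\}\,\vI$, and $\mathbf{S}=\begin{psmallmatrix}0&\vA^\top\\-\vA&0\end{psmallmatrix}$ skew-symmetric with $\mathbf{S}^\top\mathbf{S}=\mathrm{diag}(\vA^\top\vA,\vA\vA^\top)$, hence $\|\mathbf{S}\|=\|\vA\|$. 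Then $\|JF\|^2=\big\|(\mD-\mathbf{S})(\mD+\mathbf{S})\big\|=\big\|\,(\mD^2+\mathbf{S}^\top\mathbf{S})+(\mD\mathbf{S}-\mathbf{S}\mD)\,\big\|$, and $\mD^2+\mathbf{S}^\top\mathbf{S}\preceq\big(\max\{L_P^2,L_Q^2\}+\|\vA\|^2\big)\vI$. When $\nabla^2 P$ and $\nabla^2 Q$ are homothetic (scalar multiples of the identity) — the regime of the quadratic games in our experiments — the commutator $\mD\mathbf{S}-\mathbf{S}\mD$ vanishes and one gets exactly $L=\sqrt{\max\{L_P^2,L_Q^2\}+\|\vA\|^2}$.

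\textbf{Main obstacle.} The crux is the cross-term $\mD\mathbf{S}-\mathbf{S}\mD$: the potential and rotational parts of $JF$ are not orthogonal, so this (symmetric) commutator is generally nonzero and could enlarge $\|JF\|$ beyond $\sqrt{\max\{L_P^2,L_Q^2\}+\|\vA\|^2}$. I would handle it either (a) by restricting the sharp constant to the homothetic-Hessian case actually used in the experiments, noting that in full generality the triangle inequality still gives the (looser) Lipschitz constant $\max\{L_P,L_Q\}+\|\vA\|$; or (b) by sharpening the estimate under an additional spectral assumption tying $\nabla^2 P,\nabla^2 Q$ to $\vA$. Everything outside this commutator estimate — the monotonicity argument and the norm identities for $\mD$ and $\mathbf{S}$ — is routine.
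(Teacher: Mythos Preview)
Your strong-monotonicity argument is identical to the paper's: expand the inner product, let the bilinear cross terms cancel via skew-symmetry, and apply strong convexity of $P$ and $Q$.

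For Lipschitzness you diverge from the paper. The paper does not pass through the Jacobian; it writes
\[
\|F(\vz_1)-F(\vz_2)\|^2 \;\le\; \|\nabla P(\vx_1)-\nabla P(\vx_2)\|^2+\|\nabla Q(\vy_1)-\nabla Q(\vy_2)\|^2+\|A^\top\Delta\vy\|^2+\|A\Delta\vx\|^2
\]
and attributes this to ``the triangle inequality.'' That step is not valid as written: each block of $F(\vz_1)-F(\vz_2)$ is a \emph{sum} (e.g.\ $\nabla P(\vx_1)-\nabla P(\vx_2)+A^\top\Delta\vy$), and $\|a+b\|^2\le\|a\|^2+\|b\|^2$ fails in general. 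The cross terms the paper drops, $2\langle\nabla P(\vx_1)-\nabla P(\vx_2),A^\top\Delta\vy\rangle-2\langle\nabla Q(\vy_1)-\nabla Q(\vy_2),A\Delta\vx\rangle$, are exactly what your commutator $\mD\mathbf{S}-\mathbf{S}\mD$ encodes, so your obstacle is the real one. The stated constant is in fact false in general: with scalars $\nabla^2P=2$, $\nabla^2Q=0$, $A=1$ one gets $JF=\begin{psmallmatrix}2&1\\-1&0\end{psmallmatrix}$ and $\|JF\|^2=3+2\sqrt{2}>5=\max\{L_P^2,L_Q^2\}+\|A\|^2$.

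One small correction to your proposal: the commutator $\mD\mathbf{S}-\mathbf{S}\mD$ vanishes only when the two Hessian blocks \emph{coincide} (e.g.\ $\eta_xI=\eta_yI$), not merely when each is a scalar multiple of the identity; that equal-curvature regime is precisely what the paper's SC--SC experiments use ($\eta_x=\eta_y$), and there your route gives the exact constant. Your option~(a) --- sharp $L=\sqrt{\max\{L_P^2,L_Q^2\}+\|\vA\|^2}$ under equal Hessians, and the looser $\max\{L_P,L_Q\}+\|\vA\|$ otherwise --- is the honest resolution.
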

\begin{proof} Let $\vz_1 \, , \vz_2 \in D$ be any two arbitrary points in the domain, where $\vz_1=(\vx_1,\vy_1)$ and $\vz_2=(\vx_2,\vy_2)$, and let $\Delta \vz:=\vz_1-\vz_2=(\Delta \vx,\Delta \vy)$.

\paragraph{Strong monotonicity.}
By the definition of $F$,
\[
\langle F(\vz_1)-F(\vz_2),\,\Delta \vz\rangle
=\langle \nabla P(\vx_1)-\nabla P(\vx_2),\,\Delta \vx\rangle
-\langle \nabla Q(\vy_1)-\nabla Q(\vy_2),\,\Delta \vy\rangle
+\langle A^\top\Delta \vy,\,\Delta \vx\rangle+\langle A\Delta \vx,\,\Delta \vy\rangle.
\]
The bilinear cross terms cancel since
$\langle A^\top\Delta \vy,\,\Delta \vx\rangle+\langle A\Delta \vx,\,\Delta \vy\rangle
=\Delta \vy^\top A\Delta \vx+\Delta \vx^\top A^\top\Delta \vy=0$.
By $\mu_P$- and $\mu_Q$-strong convexity,
\[
\langle \nabla P(\vx_1)-\nabla P(\vx_2),\,\Delta \vx\rangle\ge \mu_P\|\Delta \vx\|^2,\qquad
\langle \nabla Q(\vy_1)-\nabla Q(\vy_2),\,\Delta \vy\rangle\ge \mu_Q\|\Delta \vy\|^2.
\]
Hence
\[
\langle F(\vz_1)-F(\vz_2),\,\Delta \vz\rangle
\;\ge\; \mu_P\|\Delta \vx\|^2+\mu_Q\|\Delta \vy\|^2
\;\ge\; \min\{\mu_P,\mu_Q\}\,\big(\|\Delta \vx\|^2+\|\Delta \vy\|^2\big)
\;=\; \mu\,\|\Delta \vz\|^2.
\]

\paragraph{Lipschitz continuity.}
Using the triangle inequality,
\begin{align*}
\|F(\vz_1)-F(\vz_2)\|^2
&\le \|\nabla P(\vx_1)-\nabla P(\vx_2)\|^2
   +\|\nabla Q(\vy_1)-\nabla Q(\vy_2)\|^2
   +\|A^\top\Delta \vy\|^2+\|A\Delta \vx\|^2 \\
&\le L_P^2\|\Delta \vx\|^2+L_Q^2\|\Delta \vy\|^2+\|A\|^2\|\Delta \vy\|^2+\|A\|^2\|\Delta \vx\|^2 \\
&\le \big(\max\{L_P^2,L_Q^2\}+\|A\|^2\big)\,\big(\|\Delta \vx\|^2+\|\Delta \vy\|^2\big)
= L^2\,\|\Delta \vz\|^2,
\end{align*}
which yields $\|F(\vz_1)-F(\vz_2)\|\le L\,\|\Delta \vz\|$. Thus $F$ is $\mu$-strongly monotone and $L$-Lipschitz.
\end{proof}

We first present the proof of convergence of the LookAhead discrete dynamics for the problem. We then present the analysis in the discrete complex plane using the $Z$-transform, and finally, the design of the algorithm that allows us to tune our hyperparameters near-optimally.

\subsection{Convergence of LookAhead for Strongly-Convex Strongly-Concave Games}
\label{sec:scsc-conv}
In this subsection, we prove that for a sufficiently small stepsize $\gamma$ of the base optimizer (in our case ,Gradient Descent), the LookAhead dynamics converges to the equilibrium of the \ref{eq:games-class}.

\begin{proposition}
    If $0<\gamma < \frac{2\mu}{L^2}$, then the LookAhead map $F^{\mathrm{LA}}_{k,\alpha}$ converges to a unique fixed point, which is the game equilibrium.
\end{proposition}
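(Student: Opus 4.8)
The plan is a direct Banach fixed-point argument: under the stepsize restriction $0<\gamma<2\mu/L^{2}$ the one-cycle LookAhead map $F^{\mathrm{LA}}_{k,\alpha}$ is a strict contraction on $\mathbb{R}^{d}$, so it has a unique fixed point to which the iterates converge geometrically, and a short calculation shows this fixed point is exactly the (necessarily unique) equilibrium of the \eqref{eq:games-class} game.

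\emph{Step 1: the base GD step is contractive.} By Proposition~\ref{prop:mono}, $F$ is $\mu$-strongly monotone and $L$-Lipschitz (in particular $\mu\le L$, since $\mu\|\vu-\vv\|^{2}\le\langle F(\vu)-F(\vv),\vu-\vv\rangle\le L\|\vu-\vv\|^{2}$). For $F^{\mathrm{GD}}=\vI-\gamma F$ and any $\vu,\vv$ I would expand the square:
\[
\|F^{\mathrm{GD}}(\vu)-F^{\mathrm{GD}}(\vv)\|^{2}
=\|\vu-\vv\|^{2}-2\gamma\langle F(\vu)-F(\vv),\,\vu-\vv\rangle+\gamma^{2}\|F(\vu)-F(\vv)\|^{2}
\le\bigl(1-2\gamma\mu+\gamma^{2}L^{2}\bigr)\|\vu-\vv\|^{2}.
\]
Writing $q:=\sqrt{1-2\gamma\mu+\gamma^{2}L^{2}}$, the radicand is nonnegative because $(1-\gamma\mu)^{2}=1-2\gamma\mu+\gamma^{2}\mu^{2}\le 1-2\gamma\mu+\gamma^{2}L^{2}$, and it is strictly below $1$ precisely when $\gamma^{2}L^{2}<2\gamma\mu$, i.e. when $\gamma<2\mu/L^{2}$. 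Hence $q\in[0,1)$ under the stated hypothesis, so $F^{\mathrm{GD}}$ is $q$-Lipschitz.

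\emph{Step 2: propagate through the $k$-fold composition and the averaging.} Composition gives that $(F^{\mathrm{GD}})^{k}$ is $q^{k}$-Lipschitz with $q^{k}<1$. Since $F^{\mathrm{LA}}_{k,\alpha}=(1-\alpha)\vI+\alpha\,(F^{\mathrm{GD}})^{k}$ is a convex combination with the identity, the triangle inequality yields, for all $\vu,\vv$,
\[
\|F^{\mathrm{LA}}_{k,\alpha}(\vu)-F^{\mathrm{LA}}_{k,\alpha}(\vv)\|\le(1-\alpha)\|\vu-\vv\|+\alpha q^{k}\|\vu-\vv\|=\bigl(1-\alpha(1-q^{k})\bigr)\|\vu-\vv\|,
\]
and $\kappa:=1-\alpha(1-q^{k})\in[0,1)$ because $\alpha\in(0,1]$ and $q^{k}<1$. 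Thus $F^{\mathrm{LA}}_{k,\alpha}$ is a $\kappa$-contraction on $\mathbb{R}^{d}$, so by the Banach fixed-point theorem it has a unique fixed point $\vz^{\star}$, and the iterates satisfy the linear rate $\|\vz_{t}-\vz^{\star}\|\le\kappa^{t}\|\vz_{0}-\vz^{\star}\|$.

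\emph{Step 3: identify the fixed point; Step 4: the obstacle.} If $F(\vz)=\mathbf{0}$ then $F^{\mathrm{GD}}(\vz)=\vz$, hence $(F^{\mathrm{GD}})^{k}(\vz)=\vz$ and $F^{\mathrm{LA}}_{k,\alpha}(\vz)=\vz$; conversely $F^{\mathrm{LA}}_{k,\alpha}(\vz^{\star})=\vz^{\star}$ forces $\alpha\bigl((F^{\mathrm{GD}})^{k}(\vz^{\star})-\vz^{\star}\bigr)=\mathbf{0}$, so (as $\alpha\neq0$) $\vz^{\star}$ is a fixed point of $(F^{\mathrm{GD}})^{k}$; since the contraction $F^{\mathrm{GD}}$ and its $k$-th power have the same unique fixed point, $F^{\mathrm{GD}}(\vz^{\star})=\vz^{\star}$, i.e. $F(\vz^{\star})=\mathbf{0}$. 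Strong monotonicity makes this zero unique, and by convexity--concavity of $\mathcal{L}$ it is the saddle point, hence the game equilibrium. There is no genuine difficulty here: the argument is a standard contraction estimate and the bound $2\mu/L^{2}$ simply falls out of requiring $q<1$. The only points needing a moment's care are verifying that the radicand $1-2\gamma\mu+\gamma^{2}L^{2}$ stays in $[0,1)$ (which uses $\mu\le L$) and the fixed-point identification, where one must rule out spurious fixed points introduced by averaging with the identity — this is exactly where $\alpha\neq0$ and the common fixed point of $F^{\mathrm{GD}}$ and $(F^{\mathrm{GD}})^{k}$ are used.
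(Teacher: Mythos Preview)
Your proof is correct and follows essentially the same Banach contraction argument as the paper: show $F^{\mathrm{GD}}$ is a strict contraction under $\gamma<2\mu/L^{2}$, propagate through the $k$-fold composition and the $\alpha$-averaging, then apply the fixed-point theorem and identify the limit with the equilibrium. In fact your write-up is slightly more careful than the paper's in two places: you correctly carry the square root (writing the Lipschitz constant of $F^{\mathrm{GD}}$ as $q=\sqrt{1-2\gamma\mu+\gamma^{2}L^{2}}$ rather than $1-2\gamma\mu+\gamma^{2}L^{2}$), and you give a clean argument for why the fixed point of $F^{\mathrm{LA}}_{k,\alpha}$ must be a zero of $F$, using $\alpha\neq 0$ and the uniqueness of the contraction fixed point, where the paper simply asserts the fixed-point sets coincide ``by definition.''
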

\begin{proof}
Let $\vz_1,  \vz_2 \in D$ be any two arbitrary points in the domain. Let $\Delta = \vz_1 - \vz_2$ and $g = F(\vz_1) - F(\vz_2)$. Then 
\begin{align}
    F^{\mathrm{GD}}(\vz_1) - F^{\mathrm{GD}}(\vz_2) = (\vz_1 - \gamma F(\vz_1)) - (\vz_2 - \gamma F(\vz_2)) = \Delta-\gamma g \,.
\end{align}

Squaring both sides gives
\begin{align*}
    \|F^{\mathrm{GD}}(\vz_1) - F^{\mathrm{GD}}(\vz_2)\|^2 &= \|\Delta-\gamma g\|^2 = \|\Delta\|^2 - 2\gamma \langle \Delta, g \rangle + \gamma^2 \|g\|^2 \\
    &\leq \|\Delta\|^2  -2\gamma\mu \|\Delta\|^2 + \gamma^2 L^2 \|\Delta\|^2  \quad \quad \quad  \quad \textit{(by Proposition ~\ref{prop:mono})}\\
    & = (1-2\gamma \mu + \gamma^2 L^2 )\, \|\Delta\|^2 \,.
\end{align*}

If $\gamma < \frac{2\mu}{L^2}$, the $(1-2\gamma \mu + \gamma^2 L^2 ) \in (0,1)$, hence $F^{\mathrm{GD}}$ is a contractive operator
\begin{align*}
    \| F^{\mathrm{GD}}(\vz_1) - F^{\mathrm{GD}}(\vz_2)\| \leq (1-2\gamma \mu + \gamma^2 L^2 ) \|\vz_1 - \vz_2\| \,.
\end{align*}

By induction, after $k$ steps of the base optimizer
\begin{align*}
    \|(F^{\mathrm{GD}})^k (\vz_1) -  (F^{\mathrm{GD}})^k (\vz_2)\| \leq (1-2\gamma \mu + \gamma^2 L^2 )^k \|\vz_1 - \vz_2\| \,.
\end{align*}

The $k$-th step of LookAhead performs the following averaging 
\begin{align}
     F^{\mathrm{LA}}(\vz) = (1-\alpha)\cdot \mathbf{1} \,+ \, \alpha \, ( F^{\mathrm{GD}})^k (\vz) \,,
\end{align}
where $\mathbf{1}$ is the vector of ones of dimension $d$. Hence
\begin{align*}
    \| F^{\mathrm{LA}}(\vz_1) -  F^{\mathrm{LA}}(\vz_2)\| 
    &\leq ( 1-\alpha) \|\vz_1 - \vz_2 \| 
        + \alpha \, \| ( F^{\mathrm{GD}})^k(\vz_1) - ( F^{\mathrm{GD}})^k(\vz_2)\| \\
    &\leq 
    \underbrace{\big((1-\alpha)\cdot \mathbf{1} 
    + \alpha (1 - 2\gamma \mu + \gamma^2 L^2)^k\big)}_{\displaystyle \rho}
    \,\|\vz_1 - \vz_2 \| \,.
\end{align*}

Since $\alpha \in (0,1)$ and  $(1-2\gamma \mu + \gamma^2 L^2 ) \in (0,1)$, hence $\rho < 1$. Therefore, $ F^{\mathrm{LA}}(\cdot)$ is a global contraction on $D$. By Banach's fixed point theorem, $ F^{\mathrm{LA}}(\cdot)$ has a unique fixed point $\vz^{*}$, and the iteration $\vz_{t+1} =  F^{\mathrm{LA}}(\vz_t)$ converges to $\vz^*$ with a linear rate $\rho$. Since $ F^{\mathrm{LA}}$ and $ F^{\mathrm{GD}}$ have the same fixed point by definition, $ F^{\mathrm{LA}}$ converges to the global equilibrium of the game $\vz^*$, where $F(\vz^*) = 0$.
\end{proof}

\subsection{Discrete Frequency Domain Analysis of LookAhead}
In this section, we motivate the use of $Z$-transform (~\ref{eq:Ztransform}) as a competent tool to analyze the discrete-time dynamics of LookAhead. The $Z$-transform provides a natural bridge between time-domain recursion and frequency-domain stability, enabling us to characterize the poles of the update operator and hence its discrete-time stability.

We consider the update step of Gradient Descent Ascent as: $\vz_{t+1} \leftarrow F^{\mathrm{GD}} \, \vz_t$, where $T$ is the vector field defined in ~Section\ref{sec:scsc-conv}, with initialization at $\vz_0$. This is a linear time-invariant (LTI) system. To analyze the stability of this system, we examine the dynamics along the eigen directions of $F$.

Let $(\lambda, v)$ be an eigen pair of $F$. We project the state onto the basis of the eigenvectors to analyze the dynamics of each component separately. Let $u_t := \langle v, z_t \rangle$ be a scalar sequence, which represents the projection of $z_t$ onto the eigen direction $v$. Using the GDA update rule, we get
\begin{align}
\label{eq:app-$Z$-transform}
    u_{t+1} = \langle v, z_{t+1} \rangle = F^{\mathrm{GD}} \,\langle v, z_t \rangle  = F^{\mathrm{GD}} \,u_t \,.
\end{align}

Let $U(z)$ be the unilateral $Z$-transform of the sequence $u_t$. Then the $Z$-transform of \ref{eq:app-$Z$-transform} is given by 
\begin{align}
    &z \, U(z) -u_0 = F^{\mathrm{GD}}\, U(z)\\
    &\implies U(z) = \frac{u_0}{z-F^{\mathrm{GD}}}  \,.
\end{align}
Hence, the pole for the inner GDA optimizer lies at $z= F^{\mathrm{GD}}$.
Projecting the LookAhead step onto the eigen directions yields 
\begin{align}
   U(z) = \frac{u_0}{z - \big( (1-\alpha) + \alpha \, (F^{\mathrm{GD}})^k\big)}   \,.
\end{align}
Hence the poles of the LookAhead update step lie at $z = (1-\alpha) + \alpha \, (F^{\mathrm{GD}})^k$

Discrete stability of the LookAhead iterate is \emph{equivalent} to all the poles lying strictly inside the unit disk:
\[
|\mu| \;<\; 1 \,.
\]

Let $w = (F^{\mathrm{GD}})^k$ where $w \in \mathbb{C}^d$. Let $\Delta:=w-1$. Then $\mu =1+\alpha\Delta$, and
\begin{align*}
|\mu|^2
&= (1+\alpha\Delta)(1+\alpha\overline{\Delta})
= 1 + 2\alpha\,\Re\Delta + \alpha^2|\Delta|^2 \,.
\end{align*}
The stability inequality $|\mu|<1$ is therefore
\[
1 + 2\alpha\,\Re\Delta + \alpha^2|\Delta|^2 \;<\; 1
\;\;\Longleftrightarrow\;\;
\alpha\Big(2\,\Re\Delta + \alpha\,|\Delta|^2\Big) \;<\; 0\,.
\]
For $\alpha>0$ this holds iff
\[
2\,\Re\Delta + \alpha\,|\Delta|^2 \;<\; 0
\quad\Longleftrightarrow\quad
\alpha \;<\; -\,\frac{2\,\Re\Delta}{|\Delta|^2}.
\]
Because $\Re\Delta=\Re(w-1)=\Re w - 1$, a positive bound exists iff $\Re w<1$, and the \emph{exact} critical averaging is
\begin{equation}
\label{eq:alpha-max}
\;
\alpha_{\max}
= \frac{2\big(1-\Re w\big)}{|\,1-w\,|^2}
= \frac{2\big(1-\Re((F^{\mathrm{GD}})^{\,k})\big)}{\,|1-(F^{\mathrm{GD}})^{\,k}|^2\,} \,.
\;
\end{equation}
Thus, the modes are Schur-stable precisely when
\begin{equation}
    0 \;<\; \alpha \;<\; \min\{\,1,\ \alpha_{\max}\,\} \,.
\end{equation}

\paragraph{Geometric Interpretation.}
Let $\lambda \in \sigma(JF)$. We rewrite the GDA iterate in polar coordinates
\[
T \;=\; R\,e^{-i\theta}, \qquad
R:=|T|=\sqrt{\big(1-\gamma\,\Re\lambda\big)^2+\big(\gamma\,\Im\lambda\big)^2},\quad
\theta:=\arg(T)=\operatorname{atan2}(\gamma\,\Im\lambda,\ 1-\gamma\,\Re\lambda).
\]
Then $w=(F^{\mathrm{GD}})^{\,k}=R^{k}e^{-ik\theta}$, so
\[
\Re w = R^{k}\cos(k\theta),
\qquad
|1-w|^2 = 1+R^{2k}-2R^{k}\cos(k\theta).
\]
Substituting into the formula above 
\[
\alpha_{\max}
= \frac{2\big(1-R^{k}\cos(k\theta)\big)}{\,1+R^{2k}-2R^{k}\cos(k\theta)\,}.
\]

Since stability must hold for every eigenmode, the LookAhead step is Schur-stable iff
\[
0<\alpha \;<\; \alpha_{\max}^{\mathrm{all}}(k)
\;:=\;\min_{\lambda\in\sigma(J)}\ \alpha_{\max}
\]

\subsection{Algorithm Design for optimal choice of hyperparameters}
\label{app:calc-mola}

In this subsection, we discuss the mechanism of Modal LookAhead in detail. In the previous subsection, we showed that the LookAhead step with averaging parameter $\alpha\in(0,1]$ has poles at
\[
\mu(k) \;=\; (1-\alpha) + \alpha w \;=\; (1-\alpha)+ \alpha R^k e^{-i k \theta},
\]
The \emph{modal contraction factor} is
\[
\rho(k) \;=\; |\mu(k)| 
= \sqrt{(1-\alpha)^2 + 2\alpha(1-\alpha) R^k \cos(k\theta) + \alpha^2 R^{2k}}.
\]

We minimize the modal contraction factor to dampen the oscillations usually observed in games.

We rewrite the dynamics in terms of the \emph{amplitude} $s:=R^k$ and the \emph{phase} $\phi:=k\theta$. In these coordinates,
\[
\rho^2(k) \equiv \Phi(s,\phi) \;=\; (1-\alpha)^2 + 2\alpha(1-\alpha)\,s\cos\phi + \alpha^2 s^2,
\]
subject to the coupling constraint $\phi = (\theta/\ln R)\,\ln s$, since both $s$ and $\phi$ derive from the same integer~$k$.

We observe the following: 
First, for any fixed $s>0$, the contraction $\Phi(s,\phi)$ is minimized when $\cos\phi=-1$, i.e.\ when $\phi=\pi \ (\mathrm{mod}\,2\pi)$, yielding
\[
\Phi_{\min}(s) \;=\; (1-\alpha-\alpha s)^2.
\]
Second, at $\phi=\pi$, this quadratic is minimized by choosing
\[
s^\star = \tfrac{1-\alpha}{\alpha} \,,
\]
which would cancel the mode exactly ($\mu=0$) if $s$ and $\phi$ could be chosen independently.

Since the amplitude and the phase are co-dependent on $k$, it is hard to choose both optimally simultaneously. Therefore, we choose to select $k$ so that $k\theta$ lands as close as possible to the ``phase targets'' $\pi+2\pi m$, while at the same time making $R^k$ close to the amplitude target $s^\star$. Explicitly, the candidate indices are
\[
k_{\mathrm{phase}}(m) \;=\; \frac{\pi(2m+1)}{\theta}, \qquad 
k_{\mathrm{amp}} \;=\; \frac{\ln((1-\alpha)/\alpha)}{\ln R} \,.
\]
We select
\[
m^\star := \arg\min_{m\ge 0}\ \big|\,k_{\mathrm{phase}}(m) - k_{\mathrm{amp}}\,\big|, 
\qquad k^\circ := k_{\mathrm{phase}}(m^\star) \,,
\]
and then take
\[
k^\star \;\in\; \{\lfloor k^\circ \rfloor,\, \lceil k^\circ \rceil\} \,,
\]
choosing the integer that minimizes the exact $\rho(k)$. We illustrate the significance of this choice of $k$ by plotting the trajectory between MoLA and LookAhead with randomly chosen $k$ and $\alpha$ in Figure \ref{fig:traj_illustration}.

This rule balances phase alignment with amplitude matching. Phase alignment ensures that oscillatory terms contribute negatively, minimizing the cross term, while amplitude matching approximately satisfies $\alpha R^k \approx 1-\alpha$, which drives the contraction factor toward zero. In the quadratic case, $\lambda$ has a closed form and $R,\theta$ are explicit. For general strongly convex–strongly concave games, one may estimate $(R,\theta)$ from local curvature and coupling bounds, and the same principle applies.

\clearpage
\section{Missing Proofs of Theorems in Continuous Time}\label{app:proofs_cont}
In this section, we introduce high-resolution differential equations (HRDEs) as a continuous-time lens on the algorithm. We then derive the complete HRDE for LookAhead. Finally, we set up a Laplace-transform framework for stability analysis that connects the continuous model to discrete parameter choices.

\textbf{Notation} For this section, we assume that the coefficient matrix $\vA$ of \ref{eq:bilinear_game} is positive semi-definite. This is a standard assumption, and the analysis can be easily extended to other matrices.
\subsection{Showing Divergence of \ref{eq:gda-hrde} for \ref{eq:bilinear_game} Through \eqref{eq:Laplace}}\label{app:gda_convergence_analysis}

 As shown in~\citep{chavdarova2023hrdes}, Gradient decent's $\mathcal{O}(\gamma)$-HRDE~\citep{chavdarova2023hrdes} is:
\begin{equation}
\tag{GD-HRDE}
    \ddot{\vz}(t)     = - \frac{2}{\gamma}\cdot \dot{\vz}(t) - \frac{2}{\gamma} \cdot F(\vz(t)) \,,
\label{eq:gda-hrde}
\end{equation}
where $\vz$ represents the vector of players $\vz(t) \triangleq (\vx(t) , \vy(t))^\intercal\,.$
$\vz(t) = \begin{bmatrix}
    \vx(t) \\
    \vy(t)
\end{bmatrix}$ 
and $F(\cdot)$ is the operator defined in \eqref{eq:VI}.

Using the continuous time representation, the following are the joint variable, the gradient field, and the Jacobian of the parameterized Bilinear Game \ref{eq:bilinear_game}:
\begin{equation}
    \vz(t) = \begin{bmatrix}
        \vx(t) \\
        \vy(t)
    \end{bmatrix} \,, 
\end{equation}
\begin{equation}
    F(\vz(t)) = \begin{bmatrix}
     \vA\vy(t) \\
     - \vA\vx(t)\end{bmatrix} \,, \quad \text{and }  \quad  JF(\vz(t))  = \begin{bmatrix}
        0 && \vA  \\
        -\vA && 0
    \end{bmatrix} \,.
\end{equation}
The  \ref{eq:gda-hrde} written separately for the two players is
\begin{equation}
\label{eq:hrde-gda-x}
    \ddot{\vx}(t) =  - \frac{2}{\gamma} \cdot \vA\dot{\vx}(t) - \frac{2}{\gamma} \cdot \vy (t)   \,, 
\end{equation}
\begin{equation}
\label{eq:hrde-gda-y}
    \ddot{\vy}(t) = - \frac{2}{\gamma} \cdot \vA \dot{\vy}(t) + \frac{2}{\gamma} \cdot \vx(t)   \,.
\end{equation}

Taking the Laplace transform of the \eqref{eq:hrde-gda-x} and \eqref{eq:hrde-gda-y} yields:
\begin{equation}
       \mathbf{X}(s) =  -\dfrac{2 }{ \gamma (s^2 \vI+ \frac{2\vA s}{\gamma})} \mathbf{Y}(s) + \dfrac{\dot{\vx}(0)+ \frac{\vA}{\gamma} \vx(0)}{s^2 I + \frac{2\vA s}{\gamma} } + \dfrac{(s + \frac{\vA}{\gamma})\vx(0)}{ s^2 + \frac{2\vA s}{\gamma} } \,,
        \label{eq:freq_gda_X}
\end{equation}
\begin{equation}
         \mathbf{Y}(s) = -\dfrac{2}{ \gamma (s^2 \vI+ \frac{2\vA s}{\gamma})} \mathbf{X}(s) + \dfrac{\dot{\vy}(0)+ \frac{\vA}{\gamma} \vy(0)}{s^2 + \frac{2\vA s}{\gamma} } + \dfrac{(s + \frac{\vA}{\gamma})\vy(0)}{ s^2 + \frac{2\vA s}{\gamma} } \,.
        \label{eq:freq__gda_Y} 
\end{equation}
Here, $s \in \mathbb{C}$ represents the complex frequency, $\mathbf{X}(s) =  \int_{0}^{\infty} x(t) e^{-st} \, dt $, $\mathbf{Y}(s) =  \int_{0}^{\infty} \vy(t) e^{-st} \, dt$ are the transfer functions of the trajectories $\vx(t)$ and $\vy(t)$. We assume that $\vx(t)$ and $\vy(t)$ are defined for $t>0$ \textit{i.e.}, a unilateral Laplace transform.

 Since $\vX(s)$ and $\vY(s)$ are linear equations, we compute the combined transform function for $\vX(s)$. The poles of the combined transform function are $s \in \mathbf{C}$ such that
 \begin{equation}
 \label{eq:gd_condition_roots}
    det(s^4 \vI + \frac{2s^3}{\gamma} (\vI+\vA) + \frac{4s^2}{\gamma^2} \vA - \frac{4}{\gamma^2}\vI) = 0 \,.
 \end{equation}

 We use Routh-Hurwitz criterion to analyze the stability of \ref{eq:gd_condition_roots}. Let $\lambda_i \in \Lambda$ be an eigenvalue of \(\mathbf{A}\).
 
 Since the determinant is zero when any eigenvalue of the matrix polynomial is zero, consider an eigenvalue \( \lambda_i \) of \(\mathbf{A}\). The scalar polynomial corresponding to \(\lambda_i\) is:
\begin{equation}
p_i(s) = s^4 + \frac{2 s^3}{\gamma} (1 + \lambda_i) + \frac{4 s^2}{\gamma^2} \lambda_i - \frac{4}{\gamma^2} = 0 \,.
\end{equation}

We rewrite \(p_i(s)\) as
\begin{equation}
\label{eq:general-polynomial}
p_i(s) = s^4 + a_3 s^3 + a_2 s^2 + a_1 s + a_0 \,,
\end{equation}
where
\begin{equation}
a_3 = \frac{2}{\gamma} (1 + \lambda_i), \quad
a_2 = \frac{4}{\gamma^2} \lambda_i, \quad
a_1 = 0, \quad
a_0 = -\frac{4}{\gamma^2} \,.
\end{equation}

The Routh array for \ref{eq:general-polynomial} is:

\[
\begin{array}{c|cc}
s^4 & 1 & a_2 \\
s^3 & a_3 & a_1 \\
s^2 & b_1 & a_0 \\
s^1 & c_1 & 0 \\
s^0 & a_0 & 0
\end{array} \,,
\]
where
\begin{equation}
b_1 = \frac{a_3 a_2 - 1 \cdot a_1}{a_3} = \frac{a_3 a_2}{a_3} = a_2 \,,
\end{equation}
(since \(a_1=0\)),
\begin{equation}
c_1 = \frac{b_1 a_1 - a_3 a_0}{b_1} = \frac{0 - a_3 a_0}{a_2} = -\frac{a_3 a_0}{a_2} \,.
\end{equation}

The Routh-Hurwitz stability conditions are as follows
\begin{equation}
1 > 0, \quad
a_3 = \frac{2}{\gamma} (1 + \lambda_i) > 0, \quad
b_1 = a_2 = \frac{4}{\gamma^2} \lambda_i > 0 \,,
\end{equation}
\begin{equation}
c_1 = -\frac{a_3 a_0}{a_2} = -\frac{\frac{2}{\gamma} (1 + \lambda_i) \cdot \left(-\frac{4}{\gamma^2}\right)}{\frac{4}{\gamma^2} \lambda_i} = \frac{2 (1 + \lambda_i)}{\gamma} \cdot \frac{1}{\lambda_i} > 0 \,.
\end{equation}

Finally,
\begin{equation}
a_0 = -\frac{4}{\gamma^2} > 0 \,,
\end{equation}
and,
\begin{equation}
a_0 = -\frac{4}{\gamma^2} > 0 \implies -\frac{4}{\gamma^2} > 0 \implies \frac{1}{\gamma^2} < 0 \,,
\end{equation}
which is false. Hence, the system is unstable, and GD diverges for \ref{eq:bilinear_game}, which affirms the behavior of GD for this game.

\subsection{\texorpdfstring{General $\mathcal{O}(\gamma)$--HRDE for LookAhead}{General O(gamma)--HRDE for LookAhead}}

\label{app:la_hrde_derivation}
In this section, we prove Theorem~\ref {theo:3}, i.e., we derive the $O(\gamma)$ HRDE for LookAhead as presented in ~\ref{eq:la-hrde}. This differential equation is a continuous time representation of the discrete LookAhead dynamics (\ref{eq:LookAhead}). By transitioning to the continuous domain, we can leverage the well-established theory of differential equations to analyze the behavior of the algorithm and gain deeper insights into its underlying structure.

We follow the procedure of~\citet{chavdarova2023hrdes}, where we first rewrite the methods in the following general form:
\begin{equation} \tag{GF} \label{eq:general_form}
  \frac{\vz_{n+1}-\vz_n}{\gamma} \!=\! \mathcal{U}(\vz_{n+k}, \dots, \vz_0) \,.
\end{equation}
We introduce the Ansatz $\vz_n \approx \vz(n \cdot \delta)$,  for some smooth
curve $\vz(t)$ defined for $t \ge 0$. A Taylor expansion gives:
\begin{align*}
    \vz_{n+1} &\!\approx\! \vz((n+1) \delta ) = \vz(n\delta) + \dot{\vz} (n\delta)\delta + \frac{1}{2} \ddot{\vz}(n \delta) \delta^2 + \dots \,.
\end{align*}
Thus,  when deriving $\mathcal{O}(\gamma)$--HRDEs, for the nominator of the left-hand side of \ref{eq:general_form} for all the methods considered in this work, we have:
\begin{equation} \tag{LHS-GF}\label{eq:time-taylor} 
    \vz_{n+1} - \vz_{n} \!\approx\!  \dot{\vz} (n\delta)\delta + \frac{1}{2} \ddot{\vz} (n \delta) \delta^2 + \mathcal{O}(\delta^3) \,.
\end{equation}

\paragraph{HRDEs for LA-GD.}
The predicted iterates for $k=1, \dots, 4$, given \ref{eq:gd} as a base optimizer are as follows:
\begin{align*}
    \tilde\vz_{n+1} &= \vz_n - \gamma F(\vz_n)
    \tag{$\tilde\vz_{n+1}^{\text{GDA}}$}\label{eq:predicted_gda_1}\\
    \tilde\vz_{n+2} &= \tilde\vz_{n+1} - \gamma F(\tilde\vz_{n+1}) = \vz_n - \gamma F(\vz_n) - \gamma F(\vz_n - \gamma F (\vz_n) )
    \tag{$\tilde\vz_{n+2}^{\text{GDA}}$}\label{eq:predicted_gda_2}\\
    \tilde\vz_{n+3} &= \tilde\vz_{n+2} - \gamma F(\tilde\vz_{n+2}) \notag\\
    & = \vz_n - \gamma F(\vz_n) - \gamma F(\vz_n - \gamma F (\vz_n) ) 
    - \gamma F\big(
    \vz_n - \gamma F(\vz_n) - \gamma F(\vz_n - \gamma F (\vz_n) ) 
    \big)
    \tag{$\tilde\vz_{n+3}^{\text{GDA}}$}\label{eq:predicted_gda_3}\\
    \tilde\vz_{n+4} &= \tilde \vz_{n+3} - \gamma F(\tilde \vz_{n+3}) \notag\\
    &= \underbrace{
    \vz_n - \gamma F(\vz_n) - \gamma F(\vz_n - \gamma F (\vz_n) ) 
    - \gamma F\big(
    \vz_n - \gamma F(\vz_n) - \gamma F(\vz_n - \gamma F (\vz_n) ) 
    \big)
    }_{\tilde \vz_{n+3}} \notag\\
    & - \gamma F \Big(
    \underbrace{
    \vz_n - \gamma F(\vz_n) - \gamma F(\vz_n - \gamma F (\vz_n) ) 
    - \gamma F\big(
    \vz_n - \gamma F(\vz_n) - \gamma F(\vz_n - \gamma F (\vz_n) ) 
    \big)
    }_{\tilde \vz_{n+3}}
    \Big)
    \tag{$\tilde\vz_{n+4}^{\text{GDA}}$}\label{eq:predicted_gda_4}
\end{align*}

\paragraph{~\ref{eq:LookAhead}2-GDA.}
The iterates of~\ref{eq:LookAhead}2-GDA are obtained as follows:
\begin{align*}
    \vz_{n+1} = \vz_n + \alpha (\tilde\vz_{n+2} - \tilde\vz_n) = \vz_n + \alpha \Big(
    -\gamma F(\vz_n) - \gamma F\big(
    \vz_n - \gamma F(\vz_n)
    \big)
    \Big)\,.
\end{align*}
Using~\eqref{eq:time-taylor}, we get (where $\delta$ and $\gamma$ are the step sizes in time and parameter space, resp.):
\begin{align*}
    \frac{\dot \vz (n\delta) + \frac{1}{2} \delta^2 \ddot{\vz} (n) +\mathcal{O} (\delta^3) }{\gamma}
    = \alpha \Big( - 2F(\vz (n\delta)) + \gamma J(\vz(n\delta))F(\vz(n\delta))
    \Big) \,.
\end{align*}
Setting $\delta \!=\! \gamma$ and keeping the $\mathcal{O}(\gamma)$ terms yields:
\begin{align*}
\dot\vz(t) + \frac{\gamma}{2}\ddot\vz(t) = - 2\alpha F(\vz(t)) + \alpha \gamma \,JF (\vz(t)) \,F(\vz(t)) \,.
\end{align*}

Writing it in phase-space representation, yeilds:
\begin{equation}\tag{LA2-GDA-HRDE}\label{la2-gda-hrde}
\begin{aligned}
    \dot\vz(t) &= \vomega (t)\\
    \dot\vomega(t) &= -\frac{2}{\gamma} \vomega(t) - \frac{4\alpha}{\gamma} F(\vz(t)) + 2\alpha \,JF(\vz(t)) \,F(\vz(t)) \,.
\end{aligned}
\end{equation}

\paragraph{\ref{eq:LookAhead}3-GDA.}
For~\ref{eq:LookAhead}3-GDA using \eqref{eq:predicted_gda_3} we have:
\begin{equation} \label{eq:la3_gda_interm1}
\begin{aligned}
    \vz_{n+1} &= \vz_n + \alpha(\tilde \vz_{n+3} - \vz_n) \\
    &= \vz_n + \alpha\gamma\big[
    - F(\vz_n) - F\big(\vz_n - \gamma F(\vz_n)\big)
    - \underbrace{F\big( \vz_n - \gamma F(\vz_n) - \gamma F(\vz_n - \gamma F(\vz_n)) \big)}_{(\star)}
    \big]
\end{aligned}
\end{equation}
Similarly, by doing TE in coordinate space for ($\star$),  we get:
\begin{align*}
    F\big( \vz_n - \gamma F(\vz_n) - \gamma F(\vz_n - \gamma F(\vz_n)) \big) 
    &= F(\vz_n) - \gamma \,JF (\vz_n)\,F(\vz_n) - \gamma \,JF(\vz_n)\, F\big(\vz_n - \gamma F(\vz_n) \big) + \mathcal{O}(\gamma ^2) \\
    &= F(\vz_n) - 2\gamma \,JF (\vz_n)\,F(\vz_n)  + \mathcal{O}(\gamma ^2) \,,
\end{align*}
wherein the second row we do additional TE of the last term in the preceding row.

Thus using \ref{eq:time-taylor} as well as replacing the above in~\eqref{eq:la3_gda_interm1} we have: 
\begin{align*}
\frac{\dot{\vz}(n\delta)\delta \!+\! \frac{1}{2} \ddot{\vz}(n\delta)\delta^2  + \mathcal{O}(\delta^3)}{\gamma} 
&=
\alpha \Big\{
- 3 F(\vz(n\delta)) + 3\gamma \,JF(\vz(n\delta)) \,F(\vz(n\delta)) + \mathcal{O} (\gamma^2)
  \Big\} \,.
\end{align*}

Setting $\delta \!=\! \gamma$ and keeping the $\mathcal{O}(\gamma)$ terms yields: 
\begin{align*}
\dot\vz(t) + \frac{\gamma}{2}\ddot\vz(t) = - 3\alpha F(\vz(t)) + 3 \alpha \gamma J (\vz(t)) F(\vz(t)) \,,
\end{align*}

Rewriting the above in phase-space gives:
\begin{equation}\tag{LA3-GDA-HRDE}\label{eq:la3-gda_hrde3}
\begin{split}
  \dot{\vz}(t) & = \vomega(t) \\
  \dvom(t)     & = - \frac{2}{\gamma} \vomega(t) - \frac{6\alpha}{\gamma} F(\vz(t)) + 6 \alpha \cdot J(\vz(t)) \cdot F(\vz(t))  \,.
\end{split} 
\end{equation}

\paragraph{~\ref{eq:LookAhead}4-GDA.}
For~\ref{eq:LookAhead}4-GDA, replacing with \eqref{eq:predicted_gda_4} we get:
\begin{equation} 
\begin{aligned}\label{eq:la4_gda_interm1}
    \vz_{n+1} &= \vz_n + \alpha(\tilde \vz_{n+4} - \vz_n) \\
    &= \vz_n + \alpha\gamma\big[\underbrace{
    - F(\vz_n) - F\big(\vz_n - \gamma F(\vz_n)\big)
    - F\big( \vz_n - \gamma F(\vz_n) - \gamma F(\vz_n - \gamma F(\vz_n)) \big)}_{\text{same as for LA3-GDA}}\\
    &- \underbrace{F \Bigg(\vz_n
    - F(\vz_n) - F\big(\vz_n - \gamma F(\vz_n)\big)
    - F\big( \vz_n - \gamma F(\vz_n) - \gamma F(\vz_n - \gamma F(\vz_n)) \big)
    \Bigg)}_{(\star)}
    \big]
\end{aligned}
\end{equation}

Similarly as above for LA3-GDA, by performing consecutive TE in  parameter space for $(\star)$ we have:
\begin{align*}
(\star)= F(\vz_n) - 3\gamma \,JF(\vz_n)\, F(\vz_n) + \mathcal{O} (\gamma^2)\,.
\end{align*}

Thus, using \ref{eq:time-taylor} as well as replacing the above in~\eqref{eq:la4_gda_interm1} we have: 
\begin{align*}
\frac{\dot{\vz}(n\delta)\delta \!+\! \frac{1}{2} \ddot{\vz}(n\delta)\delta^2  + \mathcal{O}(\delta^3)}{\gamma} 
&=
\alpha \Big\{
- 4 F(\vz(n\delta)) + 6\gamma \,JF\, (\vz(n\delta)) F(\vz(n\delta)) + \mathcal{O} (\gamma^2)
\Big\} \,.
\end{align*}

Setting $\delta \!=\! \gamma$ and keeping the $\mathcal{O}(\gamma)$ terms yields: 
\begin{align*}
\dot\vz(t) + \frac{\gamma}{2}\ddot\vz(t) = - 4\alpha F(\vz(t)) + 6 \alpha \gamma\, JF (\vz(t))\, F(\vz(t)) \,.
\end{align*}

Rewriting the above in phase-space gives:
\begin{equation}\tag{LA4-GDA-HRDE}\label{eq:la4-gda_hrde3}
\begin{split}
  \dot{\vz}(t) & = \vomega(t) \\
  \dvom(t)     & = - \frac{2}{\gamma} \vomega(t) - \frac{8\alpha}{\gamma} F(\vz(t)) + 12 \alpha \cdot \,JF(\vz(t)\, \cdot F(\vz(t))  \,.
\end{split} 
\end{equation}

\subsection{Summary: HRDEs of \ref{eq:LookAhead}k-GDA}

Table~\ref{tab:la_gda_hrdes_summary} summarizes the obtained HRDEs for LookAhead-Minmax using GDA as a base optimizer, and generalizes it to any $k$.

\begin{table}[htb]
    \centering
    \begin{tabular}{r|l}
         GDA & 
         $\dot\vz(t) + \frac{\gamma}{2}\ddot\vz(t) = -  F(\vz(t))$  \\
         LA2-GDA & $\dot\vz(t) + \frac{\gamma}{2}\ddot\vz(t) = - 2\alpha \cdot F(\vz(t)) + \alpha \gamma \cdot \,JF\, (\vz(t))  \cdot F(\vz(t))$ \\
         LA3-GDA & $\dot\vz(t) + \frac{\gamma}{2}\ddot\vz(t) = - 3\alpha  \cdot F(\vz(t)) + 3 \alpha \gamma  \cdot \,JF\, (\vz(t))  \cdot F(\vz(t))$\\
         LA4-GDA & 
         $\dot\vz(t) + \frac{\gamma}{2}\ddot\vz(t) = - 4\alpha  \cdot F(\vz(t)) + 6 \alpha \gamma  \cdot \,JF\, (\vz(t)) \cdot  F(\vz(t))$\\
         LA5-GDA & 
         $\dot\vz(t) + \frac{\gamma}{2}\ddot\vz(t) = - 5\alpha  \cdot F(\vz(t)) + 10 \alpha \gamma  \cdot \,JF\, (\vz(t)) \cdot F(\vz(t))$ \\
         \multicolumn{1}{c}{$\vdots$} & \multicolumn{1}{c}{$\vdots$} \\
         LA$k$-GDA & 
         $\dot\vz(t) + \frac{\gamma}{2}\ddot\vz(t) = - k \alpha  \cdot F(\vz(t)) + (\sum_{i=1}^{k-1} i)\cdot \alpha \gamma  \cdot \,JF(\vz(t)) \cdot  F(\vz(t))$\\[.5em]
    \end{tabular}
    \caption{Summary: HRDEs of LAk-GD.}
    \label{tab:la_gda_hrdes_summary}
\end{table}

\subsection{Analysis of LookAhead Dynamics}
We first prove general convergence of the LookAhead dynamics for the \ref{eq:bilinear_game} in the discrete domain. We write the phase-space representation of the differential equation, then enforce convergence on the eigenvalues of the coefficient matrix using the Routh-Hurwitz criterion, as introduced in ~Section \ref{sec:prelim}.
\label{sec:theorem1-proof}

\subsection{Convergence Proof of LookAhead for Bilinear Game (Time Domain)} 

In this section, we prove convergence of the ~\ref{eq:la-hrde} for the \ref{eq:bilinear_game} problem.

\begin{theorem}
    The $O(\gamma)$ HRDE of LookAhead($k, \alpha$) (\ref{eq:la-hrde}) converges to the Nash equilibrium of the bilinear game \ref{eq:bilinear_game} for any step-size $\gamma$.
\end{theorem}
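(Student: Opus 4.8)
The plan is to specialize \eqref{eq:la-hrde} to \ref{eq:bilinear_game}, decouple it through the spectral decomposition of $\mA$, reduce each mode to a scalar fourth-order linear ODE, and certify stability of its characteristic polynomial with the Routh--Hurwitz criterion. First I substitute $F(\vz)=(\mA\vy,\,-\mA\vx)^{\!\top}$ and $JF(\vz)=\begin{pmatrix}0&\mA\\ -\mA&0\end{pmatrix}$ into \eqref{eq:la-hrde}. Since $\mA$ is symmetric (PSD), $JF(\vz)\cdot F(\vz)=(-\mA^2\vx,\,-\mA^2\vy)^{\!\top}$, so the HRDE becomes the linear coupled system
\begin{align*}
\ddot{\vx} &= -\tfrac{2}{\gamma}\dot{\vx} - \tfrac{2k\alpha}{\gamma}\mA\vy - k(k-1)\alpha\,\mA^2\vx,\\
\ddot{\vy} &= -\tfrac{2}{\gamma}\dot{\vy} + \tfrac{2k\alpha}{\gamma}\mA\vx - k(k-1)\alpha\,\mA^2\vy.
\end{align*}
Writing $\mA=\mU\mLambda\mU^{\!\top}$ with $\mLambda=\mathrm{diag}(\lambda_i)$, $\lambda_i\ge 0$, and setting $\hat\vx=\mU^{\!\top}\vx$, $\hat\vy=\mU^{\!\top}\vy$, the system separates into independent $2$-dimensional blocks indexed by $i$ (the non-symmetric case is identical with $\mLambda$ replaced by the singular-value matrix of $\mA$).

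Next, fix a mode with $\lambda:=\lambda_i>0$ and abbreviate $a:=2/\gamma$, $b:=2k\alpha\lambda/\gamma$, $c:=k(k-1)\alpha\lambda^2\ge 0$, so the block reads $P(D)\hat x_i=-b\,\hat y_i$ and $P(D)\hat y_i=b\,\hat x_i$ with $P(D):=D^2+aD+c$, $D=d/dt$. Applying $P(D)$ to the first equation and substituting the second eliminates $\hat y_i$, giving $P(D)^2\hat x_i+b^2\hat x_i=0$, hence the real quartic characteristic polynomial
\[
q(s)=(s^2+as+c)^2+b^2 = s^4 + 2a\,s^3 + (a^2+2c)\,s^2 + 2ac\,s + (c^2+b^2).
\]
I then run the Routh--Hurwitz test on $q$. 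The first conditions hold automatically: $2a>0$; $c^2+b^2>0$ since $\lambda>0$; and $q_3q_2-q_1 = 2a(a^2+c)>0$ since $c\ge 0$. The final condition, after dividing the relevant minor by $4a^2$, reduces to $a^2c>b^2$, i.e.\ $k(k-1)\alpha\lambda^2/\gamma^2 \cdot 4 > 4k^2\alpha^2\lambda^2/\gamma^2$, which simplifies to $\alpha<\tfrac{k-1}{k}$ — exactly \ref{eq:cond_hamiltonian}, and, crucially, independent of $\gamma$ (the $\gamma^2$'s and $\lambda^2$'s cancel). So all four roots of $q$ lie in the open left half-plane and $\hat x_i(t),\hat y_i(t)\to 0$ for any initial data and any $\gamma>0$.

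Finally I handle the degenerate modes $\lambda_i=0$ (present only if $\mA$ is singular): there $b=c=0$ and the block collapses to $\ddot{\hat x}_i+\tfrac2\gamma\dot{\hat x}_i=0$ (and likewise for $\hat y_i$), whose solution converges to a constant lying in $\ker\mA$, i.e.\ to a point already in the equilibrium set of \ref{eq:bilinear_game}. Combining the contracting positive modes with the stationary null modes shows $(\vx(t),\vy(t))$ converges to a Nash equilibrium of \ref{eq:bilinear_game} for every $\gamma>0$ (and to the origin when $\mA\succ0$). The only genuine labor here is the Routh--Hurwitz bookkeeping — expanding $q(s)$ and tracking the four inequalities — which is routine once the factorization $q(s)=(s^2+as+c)^2+b^2$ is in hand; I expect the main (conceptual rather than computational) obstacle to be getting that clean factorization by eliminating one player via $P(D)$, and then recognizing that the single decisive inequality is $\gamma$-free, which is precisely what delivers the ``for any step-size $\gamma$'' conclusion.
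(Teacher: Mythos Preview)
Your argument is correct, and it yields the sharp condition $\alpha<\tfrac{k-1}{k}$ (which is \eqref{eq:cond_hamiltonian}) together with the $\gamma$-independence that the theorem advertises. Your route, however, is not the one the paper uses for \emph{this} theorem. The paper's time-domain proof writes the HRDE in $4n$-dimensional phase-space form, computes $\det(\mathcal{C}_{\mathrm{LA}-k\mathrm{GD}}-\lambda I)$ via the block-Schur reduction to a \emph{quadratic} $\lambda(\tfrac{2}{\gamma}+\lambda)-\mu=0$ with a \emph{complex} coefficient $\mu$ given by the eigenvalues of a $2\times 2$ block matrix $\mathbf{D}$, applies the generalized (complex-coefficient) Hurwitz test to obtain the condition $\Re(\mu)<-\tfrac{1}{\beta^2}(\Im\mu)^2$, and then verifies that inequality for all eigenvectors of $\mathbf{D}$ by a Rayleigh-quotient/Cauchy--Schwarz argument. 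In other words: the paper keeps the matrix structure and trades a quartic with real coefficients for a quadratic with complex ones.

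What you do instead---diagonalize $\mA$ first, eliminate one player via $P(D)$, and run real Routh--Hurwitz on the clean factorization $q(s)=(s^2+as+c)^2+b^2$---is essentially the paper's \emph{separate} frequency-domain analysis (the subsection immediately after the time-domain proof), phrased with differential operators rather than the Laplace transform. Your approach is more elementary (no complex Hurwitz table, no eigenvector inequalities), isolates the single decisive inequality $a^2c>b^2$ transparently, and delivers the exact threshold $\alpha<\tfrac{k-1}{k}$; the paper's time-domain proof, by contrast, works at the matrix level without explicit diagonalization and argues via $\tfrac{\alpha k}{k-1}<2$ toward a conclusion that nominally covers all $\alpha\in(0,1)$. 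You also handle the degenerate modes $\lambda_i=0$ explicitly, which the paper does not. One small remark: your parenthetical about the non-symmetric case should invoke the SVD change of variables $\hat{\vx}=\mU^{\!\top}\vx$, $\hat{\vy}=\mathbf{V}^{\!\top}\vy$ with $\mA=\mU\Sigma\mathbf{V}^{\!\top}$ rather than ``replacing $\mLambda$ by the singular-value matrix''---the decoupling then goes through verbatim with $\sigma_i$ in place of $\lambda_i$.
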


\begin{proof}
For the LA-kGD optimizer, we have the following differential equation
\begin{align*}
    \dot{x}(t) = \omega(t)
\end{align*}
\begin{align*}    
    \dot{\omega}(t) = - \frac{2}{\gamma} \omega(t) - \frac{2k}{\gamma} F(\vz(t)) + \frac{k(k -1)}{\gamma } \alpha \gamma \, JF(\vz(t)) \cdot F(\vz(t)) \,.
\end{align*}

By denoting $\dot{x}(t) = \omega_x(t)$ and $\dot{y}(t) = \omega_y(t)$ we get the following

\begin{align*}
    \begin{bmatrix}
        \dot{x}(t) \\ 
        \dot{y}(t) \\ 
        \dot{\omega}_x(t) \\ 
        \dot{\omega}_y(t) 
    \end{bmatrix} = \underbrace{\begin{bmatrix}
        0 && 0 && \mathbf{I} && 0 \\
        0 && 0 &&  0 && \mathbf{I} \\
        - k (k -1) \alpha \vA^2 && -\dfrac{2k \alpha \vA}{ \gamma} && - \frac{2}{\gamma} \mathbf{I} && 0 \\
        \dfrac{2k \alpha \vA}{\gamma} && - k(k -1) \alpha \vA^2 && 0 && -\frac{2}{\gamma} \mathbf{I} 
    \end{bmatrix}}_{\triangleq \hspace{3pt} \mathcal{C}_{LA-kGD}} \cdot \begin{bmatrix}
        x(t) \\
        y(t) \\
        \omega_x(t) \\
        \omega_y(t)
    \end{bmatrix}
\end{align*}
To obtain the eigenvalues $\lambda \in \mathbf{C}$ of $\mathbf{C}_{LA-kGD}$, we have

$\det(\mathcal{C}_{LA-kGD} - \lambda\mathbf{I})$
\begin{align*}
      &= \det\Bigg( 
    \begin{bmatrix}
         -\lambda\mathbf{I} & 0 & \mathbf{I} & 0 \\
        0 & -\lambda \mathbf{I} &  0 & \mathbf{I} \\
        - k (k -1) \alpha \vA^2 & -\dfrac{2k \alpha \vA}{ \gamma} & - (\frac{2}{\gamma} +\lambda) \mathbf{I} & 0 \\
        \dfrac{2k \alpha \vA}{ \gamma} & - k(k -1) \alpha \vA^2 & 0 & -(\frac{2}{\gamma } + \lambda)  \mathbf{I}
    \end{bmatrix}  
    \Bigg)  \\
    &=  \det \Bigg( 
    \begin{bmatrix}
        \lambda(\frac{2}{\gamma} + \lambda) & 0\\
        0 & \lambda(\frac{2}{\gamma} + \lambda)
    \end{bmatrix} 
    - \underbrace{\begin{bmatrix}
        - k (k -1) \alpha \vA^2& -\dfrac{2k \alpha \vA}{\gamma} \\ 
         \dfrac{2k \alpha \vA}{ \gamma} & - k(k -1) \alpha \vA^2
    \end{bmatrix}}_{\triangleq \mathbf{D}} 
    \Bigg) .
\end{align*}

Let $\mu = \mu_1 + i \mu_2  \in \mathbb{C}$ denote the eigenvalues of $\mathbf{D}$. The characteristic equation becomes
\[
\lambda\left( \frac{2}{\gamma} + \lambda \right) - \mu = 0.
\]
Let $\beta = \frac{2}{\gamma}$. Using the generalized Hurwitz theorem for polynomials with complex coefficients, we construct the following generalized Hurwitz array:

\[
\begin{array}{c|ccc}
\lambda^2 & 1 & 0 & \mu_1 \\
\lambda^1 & \beta & \mu_2 & 0 \\
 & -\mu_2 & \beta \mu_2 & 0 \\
   \lambda^0      & -\mu_2^2 - \beta^2 \mu_1 & 0 & 0 \\
\end{array}
\]

The sign of the last row determines the stability of the polynomial. Since $\beta > 0$, the system is stable if and only if:
\begin{equation}
    \label{eq:stability-equation}
    \mu_1 < -\frac{1}{\beta^2} \mu_2^2 \,.
\end{equation}

Thus, it suffices to show:
\begin{equation}
\label{eq:stability-condition}
\Re(\mu(\vz)) < -\frac{1}{\beta^2} \left( \Im(\mu(\vz)) \right)^2 \,.
\end{equation}

We have that: 
\begin{align*}
    \mu(\vz) = \bar{\vz}^T \, \mathbf{D}\,  \vz &= [\bar{\vx}^T \quad \bar{\vy}^T]\begin{bmatrix}
        - k (k -1) \alpha \vA^2 && -\dfrac{2k \alpha \vA}{ \gamma} \\ 
         \dfrac{2k \alpha \vA}{ \gamma} && - k(k -1) \alpha \vA^2
    \end{bmatrix}\begin{bmatrix}
        \vx \\
        \vy
    \end{bmatrix} \\
    &= -\alpha \, k(k -1) ( \|\vA \vx\|_2^2+ \|\vA \vy\|_2^2) + \alpha \, k  (\frac{\bar{\vy}^T \vA \vx}{ \gamma} - \frac{\bar{\vx}^T\vA \vy}{\gamma}) \,
    \\
    &=\underbrace{-\alpha \, k(k -1) ( \|\vA \vx\|_2^2 + \|\vA \vy\|_2^2)}_{\Re(\mu(\vz))} + \underbrace{\alpha \, k ( \beta \, \Im(\bar{\vx}^\top \vA \vy))}_{\Im(\mu(\vz)))} \cdot i \,.
\end{align*}

where the last equations follows from the fact that $\bar{\vx}^T\vA \vy$ is the complex conjugate of $\bar{\vy}^T \vA \vx$, hence $\bar{\vy}^T \vA \vx - \bar{\vx}^T\vA \vy = 2\Im(\bar{\vx}^T \vA \vy) \cdot i $. Hence, from \ref{eq:stability-condition}, we need to show that 
\begin{align*}
    &-k(k-1) \alpha (\|\vA \vx\|_2^2 \, + \, \|\vA \vy\|_2^2) < -\frac{1}{\beta^2} k^2 \alpha^2 \big(\beta \, \Im(\| \bar{\vx}^\top \vA \vy\| )\big)^2\\
    \implies & \quad  \|\vA \vx\|_2^2 \, + \, \|\vA \vy\|_2^2 > \frac{\alpha k}{k-1}  \,  \| \bar{\vx}^\top \vA \vy\|^2
    \end{align*}

It is straightforward to see that $\frac{k}{k-1 } = 1 + \frac{1}{k-1} < 2 \implies \frac{\alpha k}{k-1} < 2$.

Hence we show that
\begin{align*}
     \|\vA \vx\|_2^2 \, + \, \|\vA \vy\|_2^2 > 2  \,  \| \bar{\vx}^\top \vA \vy\|^2
\end{align*}

Consider the case $\|\vx\|_2 \leq \|\vy\|_2 \leq 1$. We have two sub-cases.

\medskip
\textbf{Case 1:} $\|\vA^\top \vx\|_2^2 \leq \|\vA\vy\|_2^2$.  
We can set $\|\vA\vy\|_2 = \|\vy\|_2$, and we have:
\begin{align}
\|\vA^\top \vx\|_2^2 + \|\vA^\top \vy\|_2^2 
&= \|\vA^\top \vx\|_2^2 + \|\vy\|_2^2 \nonumber \\
&\geq \tfrac{1}{2}\big(\|\vA^\top \vx\|_2^2 + \|\vy\|_2^2\big)^2 \nonumber \\
&\geq 2\|\vA^\top \vx\|_2^2 \, \|\vy\|_2^2 \nonumber \\
&\geq 2|\bar{\vx}^\top \vA\vy|^2, \label{eq:case1}
\end{align}
where the last inequality follows from the Cauchy--Schwarz inequality.

\medskip
\textbf{Case 2:} $\|\vA\vy\|_2^2 \leq \|\vA^\top \vx\|_2^2$.  
We can set $\|\vA^\top \vx\|_2 = \|\vx\|_2$, and we have:
\begin{align}
\|\vA^\top \vx\|_2^2 + \|\vA^\top \vy\|_2^2 
&= \|\vx\|_2^2 + \|\vA\vy\|_2^2 \nonumber \\
&\geq \tfrac{1}{2}\big(\|\vx\|_2^2 + \|\vA\vy\|_2^2\big)^2 \nonumber \\
&\geq 2\|\vx\|_2^2 \, \|\vA\vy\|_2^2 \nonumber \\
&\geq 2|\bar{\vx}^\top \vA\vy|^2, \label{eq:case2}
\end{align}
where the last inequality follows from the Cauchy--Schwarz inequality.

\medskip
The case $\|\vy\|_2 \leq \|\vx\|_2 \leq 1$ can be shown analogously.

Hence \ref{eq:stability-condition} holds and the system is stable. 
\end{proof}

However, we do not get any insights on choosing the values of the hyperparameters to ensure convergence. We discuss this further in the next subsection.

\subsection{Convergence Proof of LookAhead for Bilinear Game (Frequency Domain)}

We first present the proof of Theorem \ref{theo:2} by taking the inverse Laplace Transform (\ref{inverse-laplace}) of the frequency dual of \ref{eq:la-hrde}. We then perform a similar analysis on the poles of the frequency dual as described in \ref{app:gda_convergence_analysis} to find the convergence criteria for LookAhead.
\begin{theoremrep}[Restatement of Theorem 4]
    Consider the~\ref{eq:bilinear_game} (with matrix $\mA$). Let $U$ be the orthogonal matrix from eigen decomposition of $\mA$ i.e., $\mA = \mU \mLambda \mU^\intercal$, where $\mLambda = \mathrm{diag}(\lambda_i)$. The trajectory of the individual players $\vx(t)$ and $\vy(t)$ of the~\eqref{eq:la-hrde} continuous time dynamics with parameters $(k, \alpha)$ and Gradient Descent with step size $\gamma$ as the base optimizer, are as follows:
\begin{align}
    \vx(t) &= -\frac{2k\alpha}{\gamma} (\mG \ast \vy)(t) + \mU \mD_x(t) \mU^\intercal \tag{$x$-Sol} \\
    \vy(t) &= \frac{2k\alpha}{\gamma} (\mG * \vx)(t) + \mU \mD_y(t) \mU^\intercal  \tag{$y$-Sol}
\end{align}
where for a player $\mathbf{q} \in \{\vx, \vy\}$, $\mD_q(t)$ is a diagonal matrix whose $i$-th diagonal element is:
\vspace{-.5em}
\begin{align*}
    \big(\mD_q(t)\big)_{ii} = e^{-\frac{t}{\gamma}} \bigg[ \cosh(\omega_i t) &\mathbf{q}_i(0) \\
    &\hspace{-2.5em} + \frac{\sinh(\omega_i t)}{\omega_i} \Big( \dot{\mathbf{q}}_i(0) + \frac{\mathbf{q}_i(0)}{\gamma} \Big) \bigg] \,.
    \vspace{-.5em}
\end{align*}
Here, $*$ is the convolution operator, and we define
\vspace{-.5em}
\begin{align*}
    \omega_i &= \sqrt{\frac{1}{\gamma^2} - \alpha k(k-1) \lambda_i}, \\
    \mG(t) &= \mU \, \mathrm{diag}\left( \frac{e^{-t/\gamma} \sinh(\omega_i t)}{\omega_i} \right) \mU^\intercal \,,
    \vspace{-.5em}
\end{align*}
where $(\vx(0), \vy(0))$ and $(\dot{\vx}(0), \dot{\vy}(0))$ are the initial positions and momenta, respectively.
\end{theoremrep}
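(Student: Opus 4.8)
The plan is to pass to the Laplace domain, diagonalise through the eigenbasis of $\vA$, solve the resulting scalar mode equations, and invert termwise. First I would specialise \eqref{eq:la-hrde} to \ref{eq:bilinear_game}: there $F(\vx,\vy)=(\vA\vy,-\vA\vx)$ and $JF=\begin{psmallmatrix}0&\vA\\-\vA&0\end{psmallmatrix}$, so $JF\cdot F=(-\vA^{2}\vx,\,-\vA^{2}\vy)$, and \eqref{eq:la-hrde} becomes the coupled linear system
\begin{align*}
\ddot{\vx}&=-\tfrac{2}{\gamma}\dot{\vx}-\tfrac{2k\alpha}{\gamma}\vA\vy-\alpha k(k-1)\vA^{2}\vx,\\
\ddot{\vy}&=-\tfrac{2}{\gamma}\dot{\vy}+\tfrac{2k\alpha}{\gamma}\vA\vx-\alpha k(k-1)\vA^{2}\vy.
\end{align*}
Since $\vA=\mU\mLambda\mU^\intercal$ is symmetric, the matrices $\vI,\vA,\vA^{2}$ are simultaneously orthogonally diagonalised by $\mU$; writing $\hat\vx=\mU^\intercal\vx$, $\hat\vy=\mU^\intercal\vy$ splits the system into decoupled scalar pairs indexed by the modes $\lambda_i$.

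Next I would take the one-sided Laplace transform \eqref{eq:Laplace}, using $\mathcal{L}\{\ddot q\}=s^{2}\hat Q-s\hat q(0)-\dot{\hat q}(0)$ and $\mathcal{L}\{\dot q\}=s\hat Q-\hat q(0)$. For each mode this gives $P_i(s)\hat X_i(s)=(s+\tfrac{2}{\gamma})\hat x_i(0)+\dot{\hat x}_i(0)-\tfrac{2k\alpha\lambda_i}{\gamma}\hat Y_i(s)$ together with the sign-flipped companion for $\hat Y_i$, where the modal characteristic polynomial factors as $P_i(s)=(s+\tfrac1\gamma)^{2}-\omega_i^{2}$ with $\omega_i$ the quantity in the statement. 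The key structural point is that one should \emph{not} eliminate $\hat Y_i$ here (doing so would produce a quartic resolvent); instead divide through by $P_i(s)$ only, leaving the opposite player's transform intact: $\hat X_i(s)=P_i(s)^{-1}\big[(s+\tfrac2\gamma)\hat x_i(0)+\dot{\hat x}_i(0)\big]-\tfrac{2k\alpha\lambda_i}{\gamma}\,P_i(s)^{-1}\hat Y_i(s)$. This is precisely what closes the answer into the stated pair of coupled integral equations rather than a decoupled second-order one.

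Then I would invert termwise. The two elementary pairs $\mathcal{L}^{-1}\{P_i^{-1}\}(t)=e^{-t/\gamma}\sinh(\omega_i t)/\omega_i$ and $\mathcal{L}^{-1}\{(s+\tfrac1\gamma)P_i^{-1}\}(t)=e^{-t/\gamma}\cosh(\omega_i t)$, combined via $s+\tfrac2\gamma=(s+\tfrac1\gamma)+\tfrac1\gamma$, reproduce exactly the bracket defining $(\mD_x(t))_{ii}$; by the convolution property of the Laplace transform the product $P_i(s)^{-1}\hat Y_i(s)$ inverts to the time convolution $(g_i\ast\hat y_i)(t)$ with $g_i(t)=e^{-t/\gamma}\sinh(\omega_i t)/\omega_i$. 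Left-multiplying by $\mU$ to return to the original coordinates and gathering the per-mode diagonal factors (together with the $\vA$ carried by the coupling term) into $\mG(t)$ and the $\mD_q(t)$'s yields \eqref{eq:trajectory_1}; \eqref{eq:trajectory_2} is identical up to the opposite coupling sign.

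The main obstacle is bookkeeping rather than analysis: one must be disciplined about the $2\times2$ coupled block so that the cross term survives as a single convolution against the other player (inverting only the diagonal part $P_i(s)^{-1}$), and so the trajectories close as a coupled, not decoupled, pair. The remaining care points are routine: (i) the Laplace and inverse-Laplace manipulations are legitimate because every transform is a proper rational function, so partial fractions and termwise inversion apply on a common region of convergence with $\Re s$ large; (ii) the degenerate modes with $\omega_i=0$ are handled by continuity, reading $\sinh(\omega_i t)/\omega_i$ as $t$ and $\cosh(\omega_i t)$ as $1$; and (iii) the decoupling claim is immediate from the simultaneous diagonalisation of $\vI,\vA,\vA^{2}$ by the orthogonal $\mU$.
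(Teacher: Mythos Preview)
Your proposal is correct and follows essentially the same route as the paper's proof: specialise \eqref{eq:la-hrde} to the bilinear operator, pass to the Laplace domain, diagonalise through the eigenbasis of $\vA$ so that the characteristic polynomial factors as $P_i(s)=(s+\tfrac1\gamma)^2-\omega_i^2$, invert termwise using the standard $\sinh/\cosh$ pairs and the convolution property, and reassemble via $\mU$. The only cosmetic difference is ordering---you diagonalise before taking the transform, the paper takes the transform first and then diagonalises---and your extra remarks on the degenerate $\omega_i=0$ case and on why one should not eliminate $\hat Y_i$ (to keep the answer as a coupled pair of convolution equations rather than a quartic resolvent) are helpful clarifications that the paper leaves implicit.
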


\begin{proof}[Proof of Theorem \ref{theo:2}.]
We compute similarly, the gradient field and the Jacobian of the bilinear game \eqref{eq:bilinear_game}. 
\begin{equation}
    \vz(t) = \begin{bmatrix}
        \vx(t) \\
        \vy(t)
    \end{bmatrix}
\end{equation}

\begin{equation}
    F(\vz(t)) = \begin{bmatrix}
    \vA \vy(t) \\
    -\vA \vx(t)\end{bmatrix}
\end{equation}
\begin{equation}
\label{eq:operator}
    JF(\vz(t))  = \begin{bmatrix}
        0 && \vA  \\
        -\vA && 0
    \end{bmatrix}
\end{equation}
\begin{equation}
\label{eq:product}
    JF(\vz(t)) \cdot F(\vz(t)) = \begin{bmatrix}
       -\vA^2 \vy(t)\\
         -\vA^2 \vx(t)
    \end{bmatrix} \,.
\end{equation}

We rewrite the  \ref{eq:la-hrde} for the 2 players using the equations \ref{eq:operator} and \ref{eq:product}
\begin{equation}
\begin{split}
\label{eq:substituted-hrde-x}
        \ddot{\vx}(t) &= -\frac{2}{\gamma} \dot{\vx}(t) - \frac{2k\alpha}{\gamma} \vA \vy(t) - \alpha k(k-1) \vA^2 \vx(t)  
\end{split}
\end{equation}
\begin{equation}
\begin{split}
\label{eq:substituted-hrde-y}
    \ddot{\vy}(t) &= -\frac{2}{\gamma} \dot{\vy}(t) + \frac{2k\alpha}{\gamma} \vA \vx(t) - 2\alpha k(k-1) \vA^2 \vy(t)
\end{split} \,.
\end{equation}

Taking the Laplace transform of \ref{eq:substituted-hrde-x} and \ref{eq:substituted-hrde-y} yields
\begin{equation}
       \left(s^2 \vI + \frac{2}{\gamma} s \vI + \alpha k(k-1)  \vA^2\right) \vX(s) + \frac{2k\alpha}{\gamma} \vA \vY(s) = \frac{2}{\gamma} \vx(0) + s\vx(0) + \dot{\vx}(0)
        \label{eq:freq_X}
\end{equation}
\begin{equation}
         -\frac{2k\alpha}{\gamma} \vA \vX(s) + \left(s^2 \vI + \frac{2}{\gamma} s \vI + \alpha k(k-1)  \vA^2\right) \vY(s) = \frac{2}{\gamma} \vy(0) + s\vy(0) + \dot{\vy}(0) \,.
        \label{eq:freq_Y} 
\end{equation}

Taking the Inverse Laplace transform (\ref{inverse-laplace}) of the above yields the solution equations in \ref{eq:trajectory_1}.

We solve the system \eqref{eq:freq_X}--\eqref{eq:freq_Y} by applying the eigen-decomposition $\vA = \mU \mLambda \mU^\intercal$ with $\mLambda = \mathrm{diag}(\lambda_i)$ and orthogonal $\mU$. Left-multiplying by $\mU^\intercal$ and defining $\vx(s) = \mU \, \vx_{\text{eig}}(s)$ and $\vy(s) = \mU \, \vy_{\text{eig}}(s)$, the system decouples along each eigendirection $i$:
\begin{align}
\left(s^2 + \frac{2}{\gamma} s + \alpha k(k{-}1) \lambda_i^2 \right) x_i(s) + \frac{2k\alpha}{\gamma} \lambda_i y_i(s) &= s x_i(0) + \dot{x}_i(0) + \frac{2}{\gamma} x_i(0), \\
-\frac{2k\alpha}{\gamma} \lambda_i x_i(s) + \left(s^2 + \frac{2}{\gamma} s + \alpha k(k{-}1) \lambda_i^2 \right) y_i(s) &= s y_i(0) + \dot{y}_i(0) + \frac{2}{\gamma} y_i(0)\,.
\end{align}

Solving this $2 \times 2$ system gives expressions of the form:
\[
x_i(s),\ y_i(s) = \frac{\text{linear in } s}{(s + \frac{1}{\gamma})^2 - \omega_i^2}, \quad \text{where } \omega_i^2 = \alpha k(k{-}1)\lambda_i^2 - \frac{1}{\gamma^2}\,.
\]

Applying the standard Laplace inverses:
\begin{align*}
\mathcal{L}^{-1} \left\{ \frac{1}{(s + \frac{1}{\gamma})^2 - \omega_i^2} \right\} &= e^{-t/\gamma} \frac{\sinh(\omega_i t)}{\omega_i}, \\
\mathcal{L}^{-1} \left\{ \frac{s + \frac{1}{\gamma}}{(s + \frac{1}{\gamma})^2 - \omega_i^2} \right\} &= e^{-t/\gamma} \cosh(\omega_i t) \,.
\end{align*}

We get the following time-domain solutions:
\[
\vx_i(t) = e^{-t/\gamma} \left[ \cosh(\omega_i t)\, \vx_i(0) + \frac{1}{\omega_i} \sinh(\omega_i t) \left( \dot{\vx}_i(0) + \frac{1}{\gamma} \vx_i(0) \right) \right] - \frac{2k\alpha}{\gamma} (g_i * \vy_i)(t) \,,
\]
\[
\vy_i(t) = e^{-t/\gamma} \left[ \cosh(\omega_i t)\, \vy_i(0) + \frac{1}{\omega_i} \sinh(\omega_i t) \left( \dot{\vy}_i(0) + \frac{1}{\gamma} \vy_i(0) \right) \right] + \frac{2k\alpha}{\gamma} (g_i * \vx_i)(t) \,,
\]
where $g_i(t) = e^{-t/\gamma} \frac{\sinh(\omega_i t)}{\omega_i}$, $\mG(t) = \mU\, \mathrm{diag}(g_i(t))\, \mU^\intercal$, and $*$ denotes convolution. 

Finally, transforming back to the original coordinates gives the full vector solution:
\begin{align}
\vx(t) &= -\frac{2k\alpha}{\gamma} (\mG * \vy)(t) + \mU\, \mathrm{diag}\left( e^{-t/\gamma} \left[ \cosh(\omega_i t)\, \vx_i(0) + \frac{1}{\omega_i} \sinh(\omega_i t) \left( \dot{\vx}_i(0) + \frac{1}{\gamma} \vx_i(0) \right) \right] \right) \mU^\intercal \,, \\
\vy(t) &= \frac{2k\alpha}{\gamma} (\mG * \vx)(t) + \mU\, \mathrm{diag}\left( e^{-t/\gamma} \left[ \cosh(\omega_i t)\, \vy_i(0) + \frac{1}{\omega_i} \sinh(\omega_i t) \left( \dot{\vy}_i(0) + \frac{1}{\gamma} \vy_i(0) \right) \right] \right) \mU^\intercal \,.
\end{align}
Setting $\big(\mD_q(t)\big)_{ii} = e^{-\frac{t}{\gamma}} \bigg[ \cosh(\omega_i t) \mathbf{q}_i(0) + \frac{\sinh(\omega_i t)}{\omega_i} \Big( \dot{\mathbf{q}}_i(0) + \frac{\mathbf{q}_i(0)}{\gamma} \Big) \bigg] $, where $\mathbf{D}_q (t)$ is a diagonal matrix and $\mathbf{q} \in \{\vx, \vy\}$ is the corresponding player,  yields the statement of the theorem.
\end{proof}

We remark that, for a purely potential game, the terms of $\vY(s)$ cancel out and simplify the general \ref{eq:trajectory_1} to:
\begin{equation*}
     \vx(t) = \hspace{2pt}  \mU \, \mathrm{diag}\left( 
  e^{-\frac{t}{\gamma}} \left[ 
  \cosh(\omega_i t) \, \vx_i(0) + \frac{1}{\omega_i} \sinh(\omega_i t) \left( \dot{\vx}_i(0) + \frac{1}{\gamma} \vx_i(0)
  \right)
  \right]\right) \mU^\intercal  \,.
\end{equation*}
The above affirms that in pure minimization settings, the solution $\vx(t)$ does not depend on $\vy(t)$.

\paragraph{Convergence analysis.}
We substitute \ref{eq:freq_Y} in \ref{eq:freq_X} to get the joint Laplace transform for $\vx(t)$ as follows 
\begin{equation*}
\label{eq:freq_hrde}
\tag{$X$--TF}
\begin{aligned}
\mathbf{X}(s) =\; & \dfrac{-2k\gamma\alpha \vy(0)\, (s+ \frac{2}{\gamma})\vA - 2k\alpha\gamma\, \dot{\vy}(0)\vA}{\gamma^2 \left(s^2 + \frac{2s}{\gamma} + \alpha k(k{-}1) \vA^2\right)^2 + 4k^2\alpha^2} \\
& + \dfrac{\dot{\vx}(0) + \left(s+\frac{2}{\gamma}\right)\vx(0)}{\gamma^2 \left(s^2 + \frac{2s}{\gamma} + \alpha k(k{-}1) \vA^2\right)^2 + 4k^2\alpha^2} \cdot \gamma^2\left(s^2 + \frac{2s}{\gamma} + \alpha k(k{-}1)\right)
\end{aligned}
\end{equation*}
\begin{equation*} \label{eq:freq_hrde-y} \tag{$Y$--TF}
\begin{aligned}
    \mathbf{Y}(s) =\; & \dfrac{2k\gamma\alpha \vx(0) (s+ \frac{2}{\gamma}) \vA +2k\alpha\gamma\dot{\vx}(0) \vA}{\gamma^2 (s^2 +\frac{2s}{\gamma}+\alpha k(k-1) \vA^2)^2 + 4k^2\alpha^2} +\\
& \dfrac{\dot{\vy}(0) + (s+\frac{2}{\gamma})\vy(0)}{\gamma^2 (s^2 +\frac{2s}{\gamma}+ \alpha k(k-1) \vA^2 )^2 + 4k^2\alpha^2} \gamma^2(s^2 +\frac{2s}{\gamma}+\alpha k(k-1)) \,.
\end{aligned} 
\end{equation*}

The characteristic equation is then: 
\begin{equation}
    \left( s^2 + \frac{2}{\gamma}s + \alpha k(k-1) \vA^2 \right)^2 + \left( \frac{2k\alpha}{\gamma} \vA \right)^2 = 0 \,.
\end{equation}
We use the Routh-Hurwitz criterion to analyze the coefficients of the characteristic equations. We arrive at the following convergence condition (using similar steps as in \ref{app:gda_convergence_analysis})
\begin{equation}
    \label{eq:convergence-condition}
    \tag{BG-Cond}
    \alpha < \frac{k-1}{k} \,.
\end{equation}

\paragraph{Discussion.}
The convergence condition for the continuous-time formulation of LookAhead (\ref{eq:la-hrde}) matches that derived in the discrete case (Lemma~\ref{lem:conv-cond}, Appendix~\ref{app:proof-lemma-2}). The recurring threshold $$\alpha<\tfrac{k-1}{k}$$ 
across both analyses is revealing: it captures the same transition near marginal stability.

In the discrete setting, a Taylor expansion of the response term
$$ \bigl|(1-\alpha)+\alpha(1-ic)^k\bigr|$$ for small $c=\gamma\omega$ shows that stability hinges on controlling the first nontrivial curvature term \eqref{eq:app-conv-cond}. 

The same condition reappears through a $Z$-transform stability analysis \eqref{eq:alpha-max}) for the bilinear game \ref{eq:bilinear_game}.  In continuous time, applying the Routh–Hurwitz criterion to \eqref{eq:convergence-condition} imposes an equivalent sign constraint on the leading coefficients. The two viewpoints are linked by the sampling correspondence $z=e^{s\Delta}\approx 1+s\Delta$: a root crossing the imaginary axis in the $s$–plane corresponds to one crossing the unit circle in the $z$–plane.

Although both perspectives identify the same stability boundary, they highlight complementary aspects. The continuous-time analysis is conceptually simpler: rescaling time absorbs the step size $\gamma$, and stability reduces to positioning all roots in the left half-plane, governed solely by $(k,\alpha)$. 

This view clarifies qualitative behavior—for instance, why larger $k$ allows $\alpha$ up to $1-\tfrac1k$—and makes the damping–oscillation trade-off geometrically visible in the root geometry. However, this approach is scale-free: it does not specify how the step size $\gamma$ should relate to curvature or sampling frequency.

The discrete-time analysis provides exactly that missing link. Here, the same threshold emerges as the leading condition for maintaining a non-expansive discrete frequency response, coupling the algorithmic parameters with the problem’s scale through
$$
\gamma\,L \;\le\; \Gamma_k^\star(\alpha)\,
$$ 
for an explicit margin $\Gamma_k^\star(\alpha)$. This yields a practical tuning rule: choose $(k,\alpha)$ within the admissible region (often with $\alpha$ just below $1-\tfrac1k$ to retain damping) and then choose 
$$
\gamma \;\le\; \Gamma_k^\star(\alpha)/L \,,
$$ 
To summarize, the continuous analysis identifies \emph{which} damping profiles are structurally stabilizing; while the discrete analysis prescribes \emph{how} to choose parameters that respect the problem’s scale.

 Finally, note that in continuous time, rescaling by $\gamma$ merely reparametrizes time, and preconditioning preserves eigenstructure; hence, the threshold is invariant. In discrete time, high-frequency content ($c=\gamma\omega$ large) is the bottleneck: choices of $(k,\alpha)$ that are harmless in the HRDE can excite grid-scale oscillations if $\gamma$ is too aggressive relative to $L$. Thus, increasing $k$—although it permits larger $\alpha$—does not automatically widen the safe range for $\gamma$; the passband narrows and the worst-case $c$ shifts, and the margin $\Gamma_k^\star(\alpha)$ remains the decisive stability indicator.

\clearpage

\section{Detailed Pseudocode}
In this section, we provide an annotated, more detailed description of the hyperparameter–selection routine in Alg.~\ref{alg:choose-dominant-clear}. 

\begin{algorithm}[!htb]
\DontPrintSemicolon
\caption{ChooseModalParams (called from Alg.~\ref{alg:mola-general})}
\label{alg:choose-dominant-extended}
\SetKwInOut{KwIn}{Input}\SetKwInOut{KwOut}{Output}
\KwIn{Eigenvalues $\Lambda$ of $\nabla F$ (or local linearization); scalars $k_{\min},k_{\max},\gamma$; $\alpha\_\text{grid}\subset(0,1)$}
\KwOut{Chosen $(k^\star,\alpha^\star)$}
\BlankLine
\textbf{Step 1: Per-mode base multipliers.}\;
$T_{\mathrm{all}} \leftarrow 1 - \gamma \cdot \Lambda$ \tcp*{Eigen-values after one base step of GD}\label{line:evc}
\BlankLine
\textbf{Step 2: Dominant mode.}\;
$i_{\mathrm{dom}} \leftarrow \arg\max_{i} \big|T_{\mathrm{all},i}\big|$\;
$T_{\mathrm{dom}} \leftarrow \{\,T_{\mathrm{all},i_{\mathrm{dom}}}\,\}$
\BlankLine
\textbf{Step 3: Initialize ``best'' pair.}\;
$(k^\star,\alpha^\star,\rho^\star) \leftarrow (k_{\min},\ 0.5,\ +\infty)$
\BlankLine
\textbf{Step 4: Search over $\alpha$ and $k$.}\;
\For{each $\alpha \in \alpha\_\text{grid}$}{
  \If{$\neg(0<\alpha<1)$}{\textbf{continue}}
  \tcp{Candidate $k$ values compatible with the dominant mode for current $\alpha$}
  $\mathcal{K} \leftarrow \texttt{KCandidatesForAlpha}(T_{\mathrm{dom}},\,\alpha,\,k_{\min},\,k_{\max})$\;
  \If{$\mathcal{K}=\varnothing$}{\textbf{continue}}
  \For{each $k \in \mathcal{K}$}{
    \tcp{max stable $\alpha$ for this $k$ (here: w.r.t.\ the dominant mode)}
    $\alpha_{\max} \leftarrow \texttt{AlphaCap}(T_{\mathrm{dom}},\,k)$\;
    \If{$\alpha \le \alpha_{\max}$}{
      \tcp{Worst-mode spectral radius under LookAhead $(k,\alpha)$}
      $\rho \leftarrow \max\limits_{\tau \in T_{\mathrm{dom}}} \left| (1-\alpha) + \alpha\, \tau^{\,k} \right|$\;
      \If{$\rho < \rho^\star$}{
        $(k^\star,\alpha^\star,\rho^\star) \leftarrow (k,\alpha,\rho)$\;
      }
    }
  }
}
\BlankLine
\textbf{Step 5: Fallback (no feasible pair).}\;
\If{$\rho^\star = +\infty$}{
  $k^\star \leftarrow k_{\min}$\;
  $\alpha^\star \leftarrow \min\!\big\{0.5,\ \texttt{AlphaCap}(T_{\mathrm{dom}},\,k_{\min})\big\}$\;
}
\KwRet $(k^\star,\alpha^\star)$\;
\BlankLine\BlankLine
\SetKwProg{Fn}{Function}{:}{end}
\Fn{\texttt{AlphaCap}$(T_{\mathrm{stab}},\,k)$}{
  \tcp{Return $\max\{\alpha\in(0,1):\ \max_{\tau\in T_{\mathrm{stab}}}|(1-\alpha)+\alpha\,\tau^k|\le 1\}$}
  \KwRet $\alpha_{\max}(k)$
}
\BlankLine
\Fn{\texttt{KCandidatesForAlpha}$(T_{\mathrm{dom}},\,\alpha,\,k_{\min},\,k_{\max})$}{
  \tcp{Return integer horizons $k\in[k_{\min},k_{\max}]$ that are admissible for the dominant multiplier in $T_{\mathrm{dom}}$ (by the LookAhead-cycle geometry/derivation).}
  \KwRet $\mathcal{K}$
}
\end{algorithm}

Given an operator \(F\) with Jacobian \(\nabla F\), we pass the eigenvalues \(\Lambda\) (exact or local) to the method. We first compute the one–step gradient-descent multipliers \(T_{\mathrm{all}}=1-\gamma\,\Lambda\) (line~\ref{line:evc}). Among these, we identify the dominant multiplier (defined in \ref{eq:dom_mode}) \(T_{\mathrm{dom}}=\{T_{\mathrm{all},i_{\mathrm{dom}}}\}\), as this mode controls convergence (cf. Eq.~\ref{eq:stability}). 

We then initialize the incumbent choice \((k^\star,\alpha^\star,\rho^\star)=(k_{\min},\,0.5,\,\infty)\) and sweep over a user-specified grid \(\alpha_{\text{grid}}\cap(0,1)\). For each \(\alpha\) in this grid, we form the set of admissible horizons \(\mathcal{K}=\texttt{KCandidatesForAlpha}(T_{\mathrm{dom}},\alpha,k_{\min},k_{\max})\) implied by the LookAhead-cycle geometry; if \(\mathcal{K}=\varnothing\), we continue. 
For every \(k\in\mathcal{K}\), we compute the dominant-mode stability cap \(\alpha_{\max}=\texttt{AlphaCap}(T_{\mathrm{dom}},k)\), namely the largest \(\alpha\in(0,1)\) such that \(\max_{\tau\in T_{\mathrm{dom}}}\bigl|(1-\alpha)+\alpha\,\tau^{k}\bigr|\le 1\). If the current \(\alpha\le\alpha_{\max}\), we evaluate the worst-case spectral radius \(\rho=\max_{\tau\in T_{\mathrm{dom}}}\bigl|(1-\alpha)+\alpha\,\tau^{k}\bigr|\) and update \((k^\star,\alpha^\star,\rho^\star)\leftarrow(k,\alpha,\rho)\) whenever \(\rho\) decreases. If no feasible pair is found (i.e., \(\rho^\star=\infty\)), we fall back to \(k^\star=k_{\min}\) and \(\alpha^\star=\min\{0.5,\texttt{AlphaCap}(T_{\mathrm{dom}},k_{\min})\}\). The routine then returns \((k^\star,\alpha^\star)\). 

Here, \texttt{AlphaCap}\((T_{\mathrm{stab}},k)\) returns the dominant-mode stability threshold \(\alpha_{\max}(k)\), and \texttt{KCandidatesForAlpha} enumerates admissible integer horizons \(k\in[k_{\min},k_{\max}]\).

\clearpage
\section{Experimental Details and Supplementary Results}\label{app:experiments}

This section first presents our experimental setup in detail, including the methods' setup, the considered games for the experiments, and the used hyperparameters. It then presents results of additional experiments as well as ablations that complement the main text.

\subsection{Experimental Setup Details}\label{app:exp_setup}

This section provides the details on the experimental setup used to evaluate our proposed method (\textbf{MoLA}) against several baselines. We consider two settings: the \emph{Bilinear Game}~\eqref{eq:bilinear_game} and a \emph{strongly convex–strongly concave} (SC–SC) game.

\paragraph{Methods' setup. } We compare \textbf{MoLA} with standard and widely used VI algorithms, previously defined in Sections~\ref{sec:prelim} and~\ref{app:vi_mds}. The methods and their implementation details are as follows.
\begin{itemize}
    \item \textbf{Gradient Descent (GD):} baseline method; uses simultaneous updates.

    \item \textbf{Extragradient (EG)} and \textbf{Optimistic Gradient (OGD)}: their implementation follows \eqref{eq:extragradient} and \eqref{eq:ogda}, respectively; defined in Appendix~\ref{app:vi_mds}.

    \item \textbf{LookAhead (LA)} with user-specified \(k\) and averaging factor \(\alpha\) (we report results mainly for \(k=40\), \(\alpha=0.5\)). Unless otherwise specified, we use GD as the base optimizer.

    \item \textbf{Modal LookAhead (MoLA):} implemented following Algorithm~\ref{alg:mola-general}. The search range for the hyperparameters is \(k\in[5,2000]\) and $ \alpha \in [0.02, 0.98]$. As in Algorithm~\ref{alg:mola-general}, the selected pair $(k, \alpha)$ is the one that minimizes the modal contraction factor the most. 

    \item \textbf{Adam}: we use two independent PyTorch Adam optimizers, one minimizing in \(\vx\) and one maximizing in \(\vy\). \textbf{LA--Adam} wraps Adam with the same LookAhead (LA) method \((k,\alpha)\) as above.
\end{itemize}
All experiments are run with a fixed random seed, and unless stated otherwise, all methods use the same step size $\gamma$ for fair comparison.

\noindent \textbf{Bilinear Game. } For each run, we sample a random bilinear game matrix
\[ 
    A=\tfrac{\beta}{\sqrt{d}}\,G \,, \quad \text{where} \quad G_{ij}\sim\mathcal{N}(0,1) \,,
\] 
so that  \(\beta\) controls the rotation scale. 
We initialize \(\vx_0,\vy_0\sim\mathcal{N}(0,I_d) * c\) where $c$ is positive integer (shared
across all methods). 
Each method is run for \(T\) \emph{base} iterations, and we log at every base step the Euclidean distance to the saddle point, as well as the wall-clock time in terms of processing time.

\noindent \textbf{Convex-concave Game. } We consider the strongly-convex--strongly-concave (SC--SC) game
\[
f(x,y)=\tfrac{1}{2}\,\vx^\top(\eta_x\, \vI\,)\vx \;-\; \tfrac{1}{2}\,\vy^\top(\eta_y \, \vI\,)\vy \;+\; \vx^\top \vA \vy,
\]
with curvature parameters \(\eta_x>0\) and \(\eta_y>0\). Each run, we generate \(A\) with a controlled spectrum via a prescribed SVD: draw \(U,V\) as orthogonal matrices and set
\(
A=U\,\mathrm{diag}(\sigma_1,\ldots,\sigma_d)\,V^\top
\)
with singular values linearly spaced in \([\sigma_{\min},\sigma_{\max}]\). This construction ensures consistent conditioning while preserving random singular directions. All other experimental details follow the same setup as in the bilinear game.

\subsection{Additional Results}

\paragraph{SCSC balanced instance. } Figure~\ref{fig:scsc_pot} presents an SC–SC instance with a balanced rotation–potential mix. In contrast to Figure~\ref{fig:scsc_rot} (in main part), the reduced rotational component leads to LA lagging behind other first-order baselines. MoLA outperforms all methods, confirming its effectiveness when potentials play a larger role.

\begin{minipage}{ .6\linewidth}
\centering
\makebox[\linewidth]{
  \includegraphics[width=\linewidth]{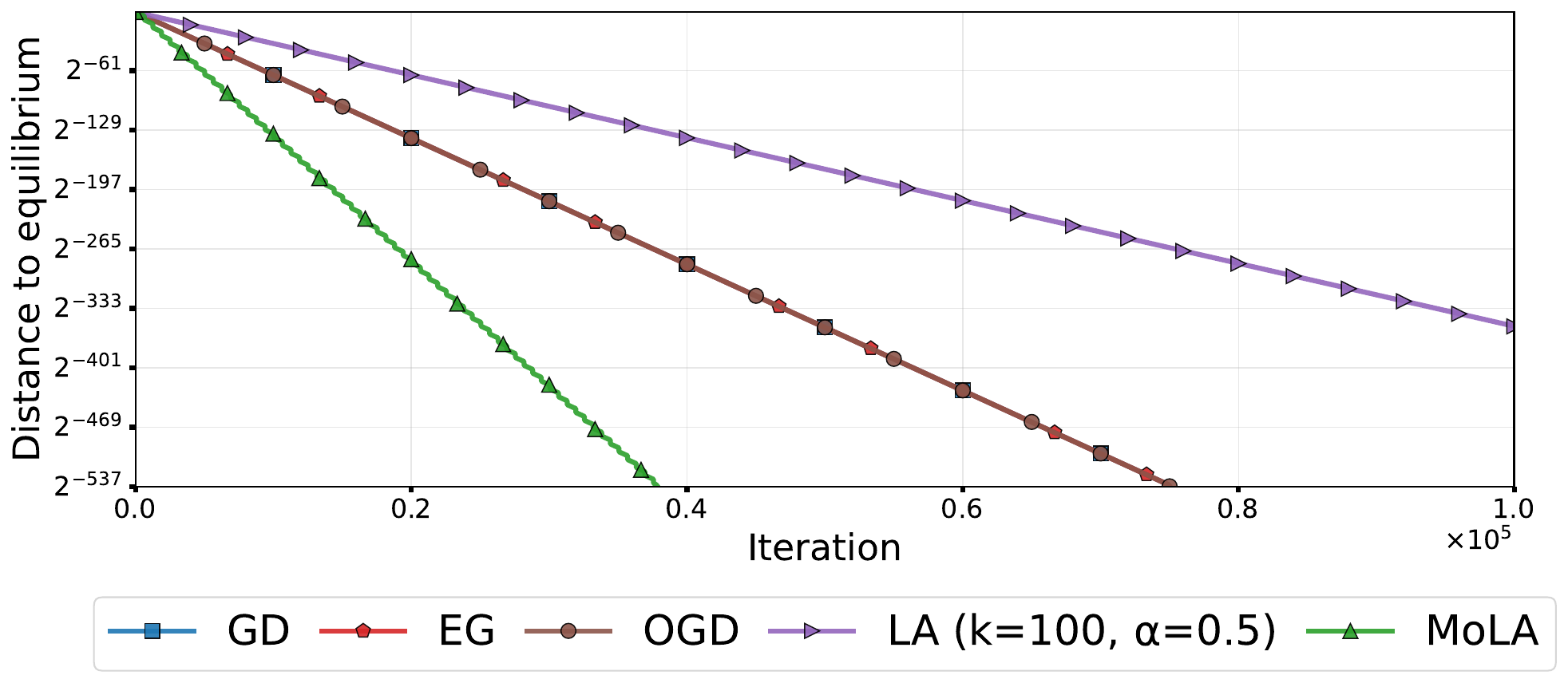}}
\end{minipage}
\begin{minipage}{ .37\linewidth}
\captionof{figure}{Distance to equilibrium vs. iterations for \emph{GD, EG, OGD, LA, and MoLA} in a balanced rotational potential setting of SC-SC game with $d=100, \gamma=0.01$. The GD, EG, and OGD methods overlap. The $x$-axis depicts iteration count, while the $y$-axis depicts the Euclidean distance to the Nash equilibrium. }\label{fig:scsc_pot}
\end{minipage}

\paragraph{GD divergence. } The main results compare MoLA against multiple baselines; gradient descent is omitted because it diverges. For completeness, Figure~\ref{fig:bg_withgd} illustrates this.

\begin{figure}[h]
    \centering
   \begin{subfigure}[t]{.47\linewidth}
    \includegraphics[width=\linewidth]{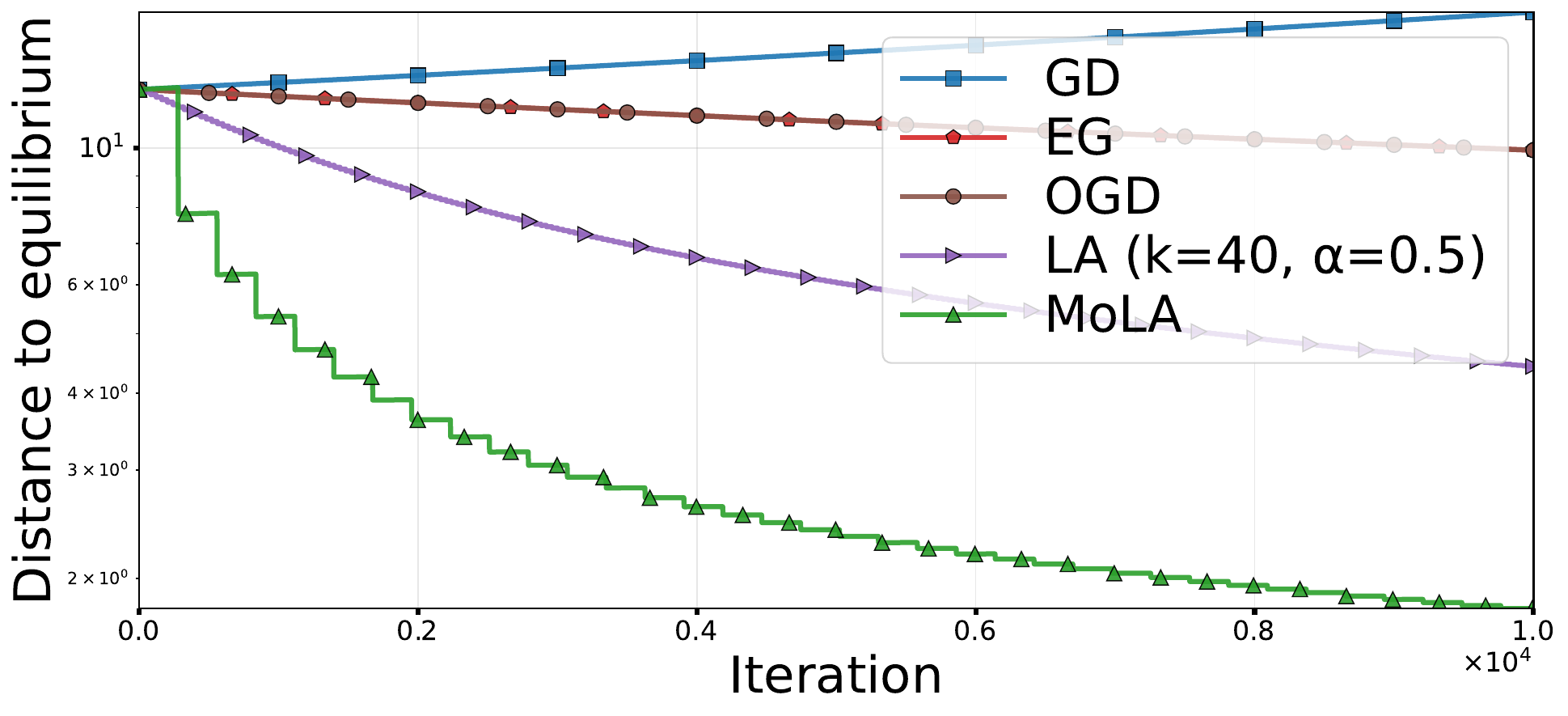}
    \label{fig:sgdplot_iter}
   \end{subfigure}
   \begin{subfigure}[t]{.47\linewidth}
        \includegraphics[width=\linewidth]{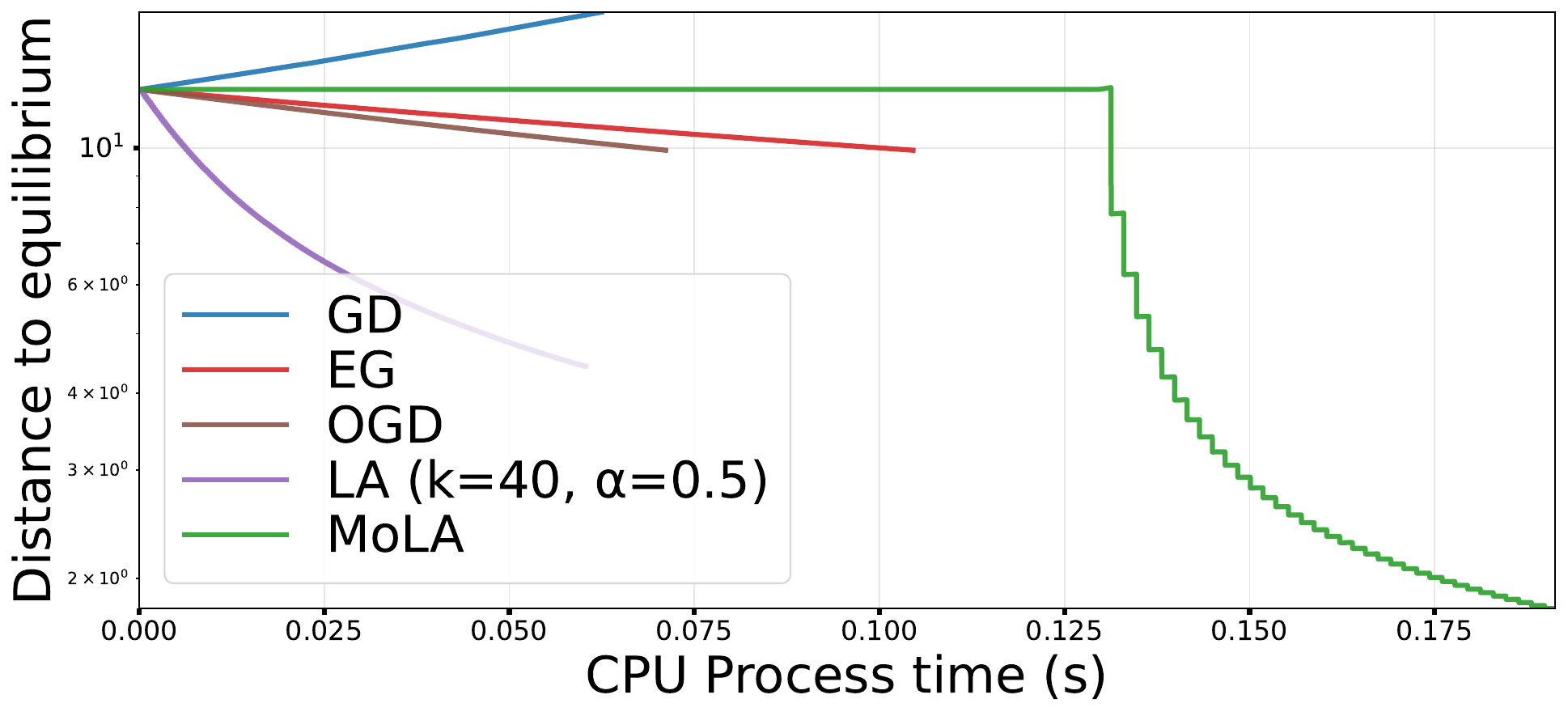}
      \label{fig:sgdplot_wc}
  \end{subfigure}
  \vspace{-.3em}
  \caption{  \textbf{Comparison between \emph{GD, EG, OGD, LA} and \emph{MoLA}, on Bilinear Game.} Figures \textbf{(left)} and \textbf{(right)} show (log-scale) Euclidean distance to equilibrium against iterations and CPU time, respectively.}
  \label{fig:bg_withgd}
\end{figure}

\paragraph{Rotation ablation. } Following the main-text ablation and the Eq.~\eqref{eq:guad_game} setup, Figure~\ref{fig:alpha_values_rot} shows how the LookAhead parameter $\alpha$ chosen by MoLA varies with the amount of rotation. With mild to no rotations, MoLA sets $\alpha=1$, which accelerates convergence by extrapolating to the final iterate; with stronger rotations, it reduces $\alpha$, anchoring updates toward the snapshot to average/contract and damp rotational dynamics. LA
is largely stable for any choice of $\alpha$ for $\beta \in [0, 0.8]$, while decreasing thereafter as the rotation factor increases. See Figure \ref{fig:k_values_rot} of the complementary plot for the trend between $k$ and $\beta$.

\begin{figure}[ht!]
    \centering \includegraphics[width=0.4\linewidth]{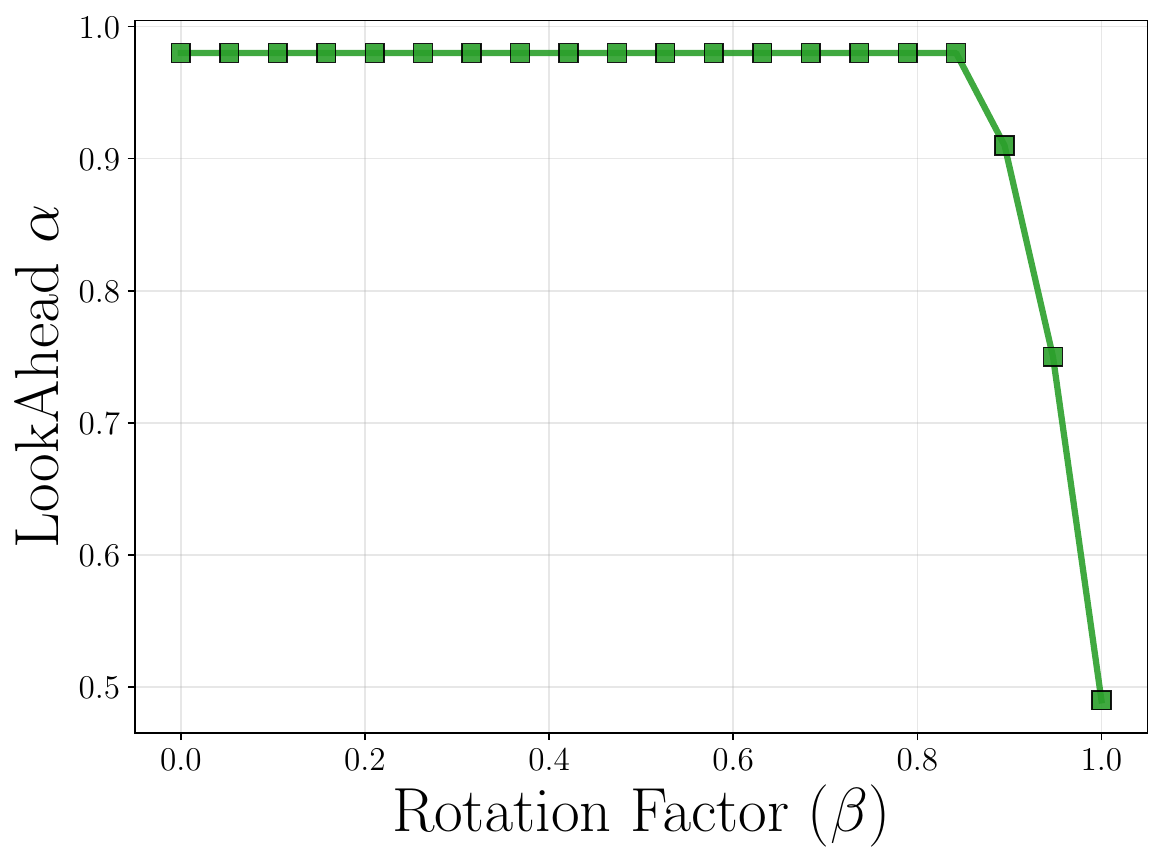}
    \vspace{-.5em}
    \caption{
    \textbf{Optimal LA averaging factor ($\alpha$ vs. rotation factor $\beta$) for the \eqref{eq:guad_game} game.}}
    \label{fig:alpha_values_rot}
    \vspace*{-0.5cm}
\end{figure}

\paragraph{Comparison with Adam.} We compare MoLA against Adam and LA-Adam (LA with Adam as the base optimizer) for the bilinear game. As shown in Figure~\ref{fig:adamplot}, Adam diverges—much like GD—and its added noise further degrades LA-Adam relative to GD-based LookAhead. MoLA outperforms both methods.

\begin{figure}[ht!]
    \centering
   \begin{subfigure}[t]{.47\linewidth}
    \includegraphics[width=\linewidth]{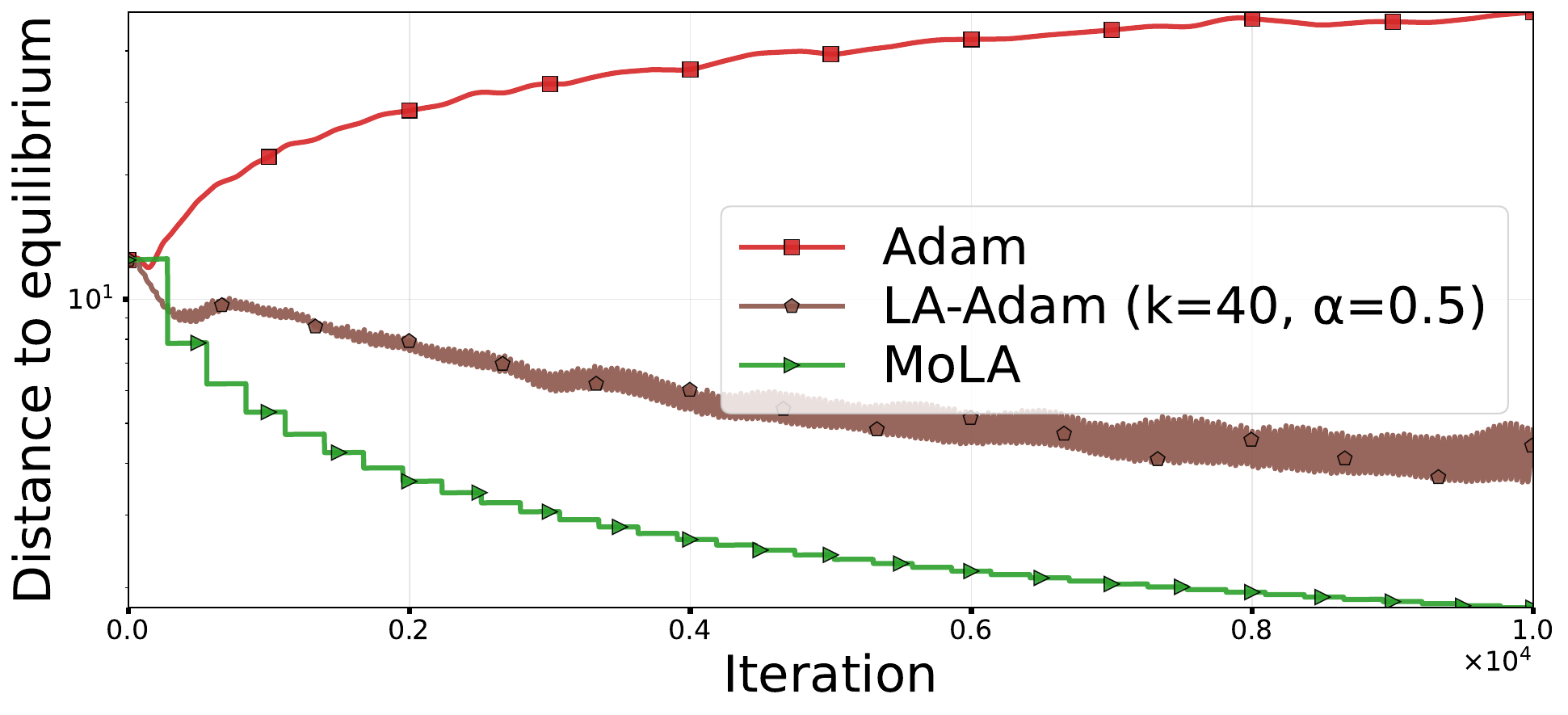}
    \label{fig:adamplot_iter}
   \end{subfigure}
   \begin{subfigure}[t]{.47\linewidth}
        \includegraphics[width=\linewidth]{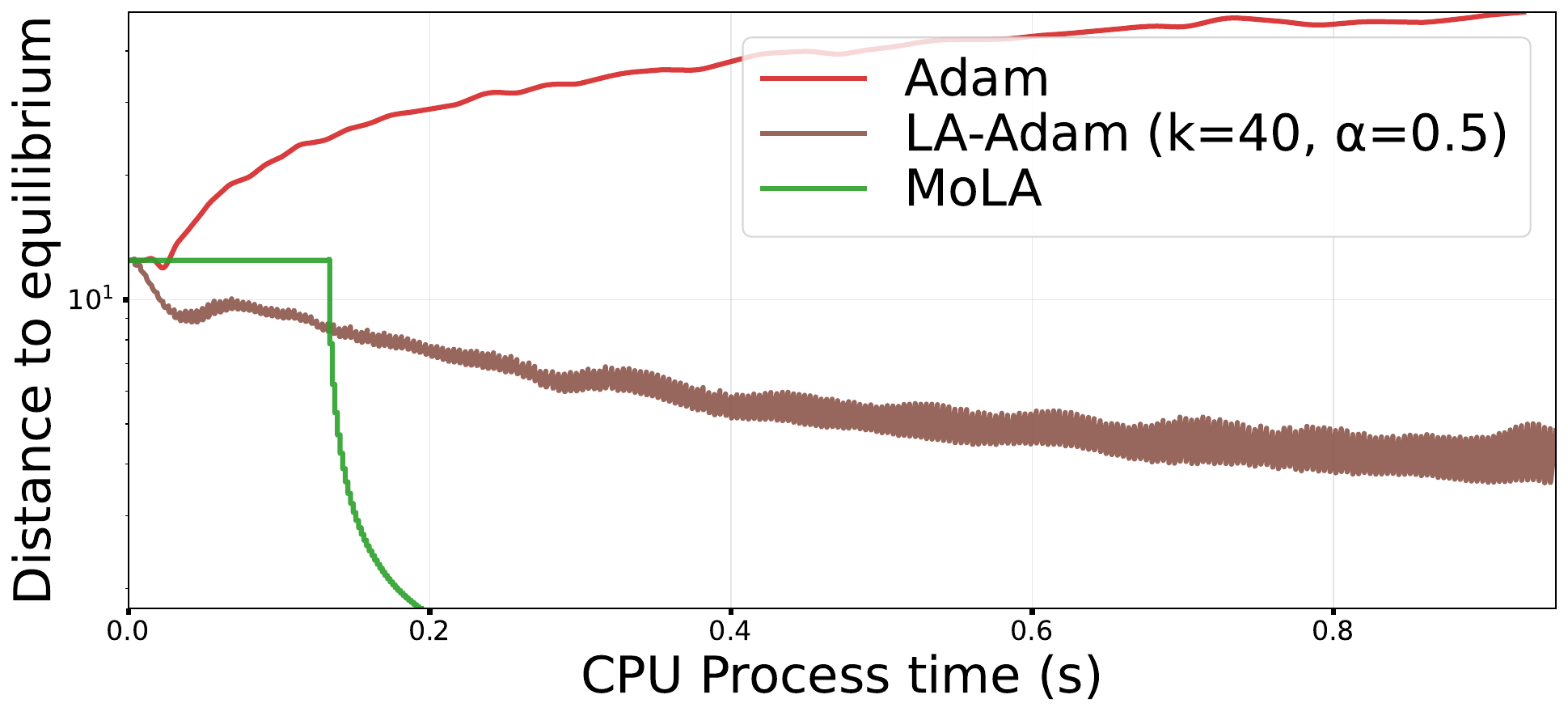}
        \label{fig:adamplot_wc}
  \end{subfigure}
  \vspace{-.3em}
  \caption{  \textbf{Comparison between \emph{Adam, LA-Adam} and \emph{MoLA}, on Bilinear Game.}  Figures \textbf{(left)} and \textbf{(right}) show (log-scale) Euclidean distance to equilibrium against iterations and CPU time, respectively.
}
\label{fig:adamplot}
\end{figure}

\paragraph{Exclusion of non-convergent modes.} As shown in Appendix~\ref{sec:modal-geom}, \textbf{MoLA} automatically selects hyperparameters that keep all dominant modes within the convergent region of the spectrum, thereby avoiding trajectories that drift into the non-convergent half-plane ($Re(z) > 1$). Figure~\ref{fig:modes_illustration} visualizes this effect by contrasting the modal stability of \textbf{LookAhead (LA)} with randomly chosen $k$ values against that of \textbf{MoLA}. Unlike arbitrary configurations, MoLA consistently excludes unstable modes, ensuring that all leading eigenmodes remain within the stability boundary.

\paragraph{Faster convergence of MoLA.} Figure~\ref{fig:traj_illustration} illustrates the trajectories of \textbf{LA} and \textbf{MoLA} on the bilinear game~\eqref{eq:bilinear_game}. MoLA adaptively selects the optimal averaging parameters $(k, \alpha)$ to align with the problem dominant mode, enabling rapid attenuation of oscillations and convergence to the equilibrium. In contrast, LA with arbitrary $(k,\alpha)$ values often exhibits slower convergence due to oscillatory behavior, highlighting MoLA's advantage in dynamically guiding the dynamics toward the saddle point.

\begin{figure}[h]
    \centering
    
   \begin{subfigure}[t]{.49\linewidth}
    \includegraphics[width=\linewidth]{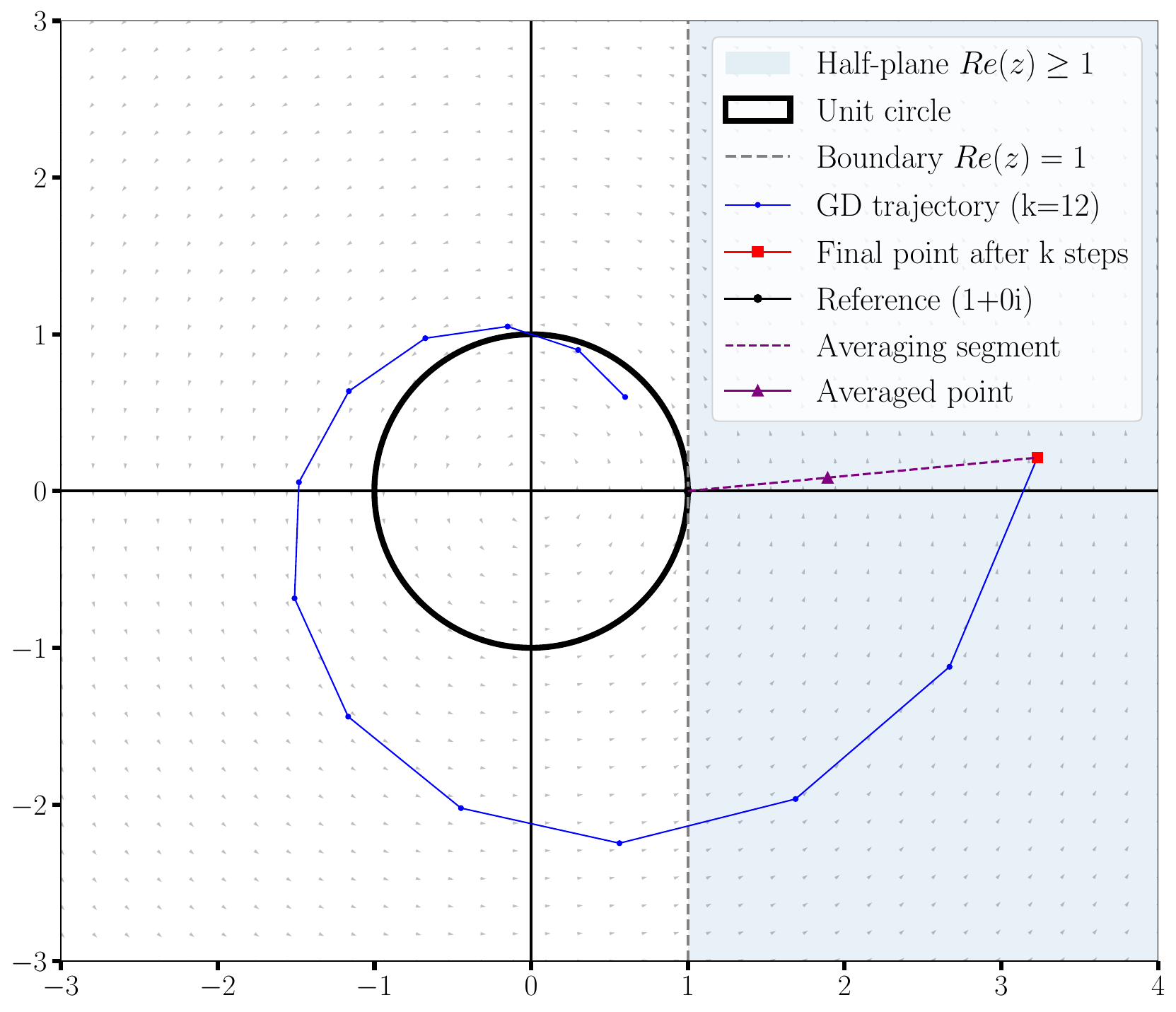}
    \caption{Lookahead}
    \label{fig:illust_bad}
   \end{subfigure}
   \begin{subfigure}[t]{.49\linewidth}
        \includegraphics[width=\linewidth]{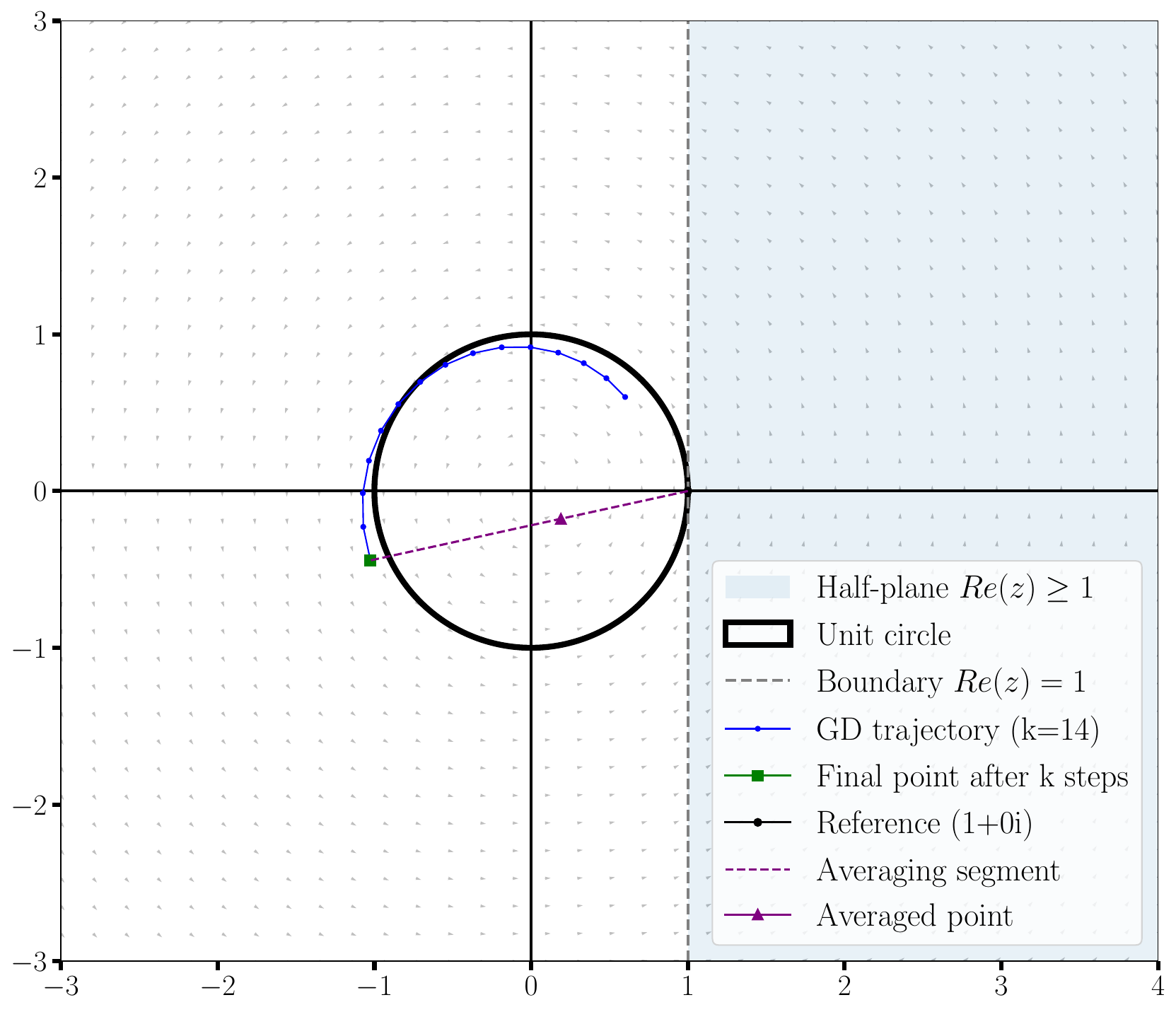}
        \caption{MoLA.}
        \label{fig:modes_safevsbad}
  \end{subfigure}
  \vspace{-.3em}
  \caption{ \textbf{Left.} LA (with randomly chosen hyperparameters) can over-rotate the complex vector (i.e., choosing $k$ gradient steps in a certain range) in a way that leads the final iterate into the complex half-plane $\Re (z) \geq 1$. This is to be avoided, since no choice of $\alpha$ can bring the average iterate within the unit ball. Hence, LA is not stable in general. \textbf{Right.} Since MoLA chooses parameters in line with Lemma \ref{lem:conv-cond}, it avoids this phenomenon. See also Appendix~\ref {sec:modal-geom}, where we show this formally.
}
\label{fig:modes_illustration}
\end{figure}

\begin{figure}
    \centering
    \includegraphics[width=0.5\linewidth]{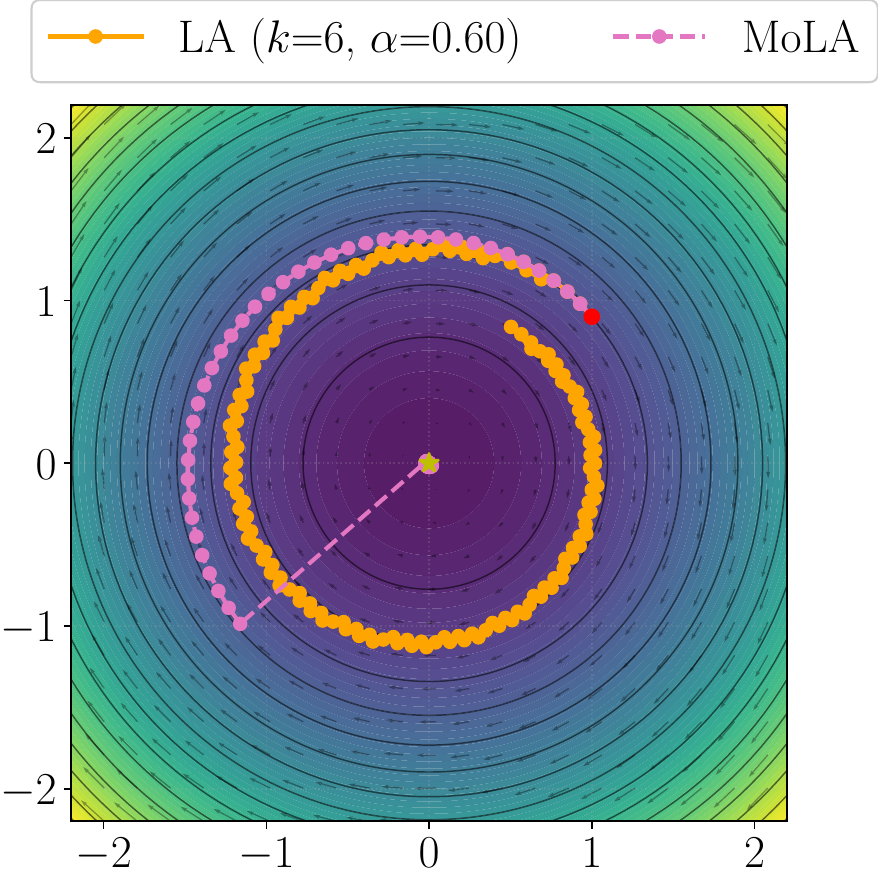}
    \caption{Trajectories of LA and MoLA on~\eqref{eq:bilinear_game}. While LA averages after $k$ steps (where $k$ is randomly selected), MoLA chooses the optimal $k$ steps to maximize contraction. Hence, the final averaged iterate of MoLA converges considerably faster than LA with randomly selected $k$.}
    \label{fig:traj_illustration}
\end{figure}

\end{document}

% --- supplement: supplement.tex ---

% If your paper is accepted and the title of your paper is very long,
% the style will print as headings an error message. Use the following
% command to supply a shorter title of your paper so that it can be
% used as headings.
%
\runningtitle{I use this title instead because the last one was very long}

% If your paper is accepted and the number of authors is large, the
% style will print as headings an error message. Use the following
% command to supply a shorter version of the authors names so that
% they can be used as headings (for example, use only the surnames)
%
%\runningauthor{Surname 1, Surname 2, Surname 3, ...., Surname n}

% Supplementary material: To improve readability, you must use a single-column format for the supplementary material.
\onecolumn
\aistatstitle{Instructions for Paper Submissions to AISTATS 2026: \\
Supplementary Materials}

\section{FORMATTING INSTRUCTIONS}

To prepare a supplementary pdf file, we ask the authors to use \texttt{aistats2026.sty} as a style file and to follow the same formatting instructions as in the main paper.
The only difference is that the supplementary material must be in a \emph{single-column} format.
You can use \texttt{supplement.tex} in our starter pack as a starting point, or append the supplementary content to the main paper and split the final PDF into two separate files.

Note that reviewers are under no obligation to examine your supplementary material.

\section{MISSING PROOFS}

The supplementary materials may contain detailed proofs of the results that are missing in the main paper.

\subsection{Proof of Lemma 3}

\textit{In this section, we present the detailed proof of Lemma 3 and then [ ... ]}

\section{ADDITIONAL EXPERIMENTS}

If you have additional experimental results, you may include them in the supplementary materials.

\subsection{Effect of the Regularization Parameter}

\textit{Our algorithm depends on the regularization parameter $\lambda$. Figure 1 below illustrates the effect of this parameter on the performance of our algorithm. As we can see, [ ... ]}

\vfill